\newtheorem{remark}{Remark}
\newtheorem{definition}{Definition}
\newtheorem{lemma}{Lemma}
\newtheorem{theorem}{Theorem}
\DeclarePairedDelimiter{\norm}{\lVert}{\rVert}
\newcommand{\ubar}[1]{\underaccent{\bar}{#1}}
\newcommand*{\tran}{^{\mkern-1.5mu\mathsf{T}}}
\DeclareMathOperator{\E}{\mathbb{E}}
\DeclareMathOperator{\vol}{\text{\textbf{Vol}}}
\begin{document}

\title{Learning from Human Directional Corrections}

\author{
	Wanxin Jin,
	Todd D. Murphey,
	Zehui Lu,
	Shaoshuai Mou
\thanks{
	Wanxin Jin is with the General Robotics, Automation, Sensing and Perception (GRASP) Laboratory, University of Pennsylvania, PA 19104, USA.   Wanxin Jin is the corresponding author. Email: wanxinjin@gmail.com.
	
	Todd D. Murphey is with  the Department of Mechanical Engineering, Northwestern University, Evanston, IL 60208,	USA. This material is partially based upon work supported by the National Science Foundation under award 1837515.  Email: t-murphey@northwestern.edu.
	
	Zehui Lu and  Shaoshuai Mou are with the School of Aeronautics and Astronautics, Purdue University, West Lafayette, IN 47906, USA. Dr. Mou’s research is supported in part by grants from the Research in Applications for Learning Machines (REALM) Consortium of Northrop Grumman Corporation, Rolls-Royce Corporation, and the NASA University Leadership Initiative (ULI). Email: zehuilu789@gmail.com, mous@purdue.edu.

This work involved human subjects or animals in its research. Approval of all ethical and experimental procedures and protocols was granted by Purdue’s Institutional Review Board under Application No. IRB-2021-1539, and performed in the line with Title 45 of the Code of Federal Regulations, Part 46 (45CFR 46).
}
}

\markboth{\normalsize
	\textit{ T\MakeLowercase{his is a preprint}. T\MakeLowercase{he published version can be accessed at} IEEE T\MakeLowercase{ransactions on} R\MakeLowercase{obotics}.
	}
}%
{Shell \MakeLowercase{\textit{et al.}}: A Sample Article Using IEEEtran.cls for IEEE Journals}


\maketitle

\begin{abstract}
 This paper proposes a novel approach that enables a robot to learn an objective function incrementally from human directional corrections.   Existing methods learn from human magnitude corrections; since a human needs to carefully choose the magnitude of each correction, those methods can easily lead to over-corrections and learning inefficiency. The proposed method only requires human directional corrections ---  corrections that only indicate the direction of an input change without indicating its magnitude. We only assume that each correction,  regardless of its magnitude,  points in a direction that improves the robot's current motion relative to an unknown objective function.   The allowable corrections satisfying this assumption account for half of the input space, as opposed to the magnitude corrections which have to lie in a shrinking level set.  For each directional correction, the proposed method updates the estimate of the objective function based on a cutting plane method, which has a geometric interpretation.   We have established theoretical results to show the convergence of the learning process.    The proposed method has been tested in numerical examples,  a user study on two human-robot games, and a real-world quadrotor experiment.  The results confirm the convergence of the proposed method and further show that the method is significantly more effective (higher success rate), efficient/effortless (less human corrections needed), and potentially more accessible (fewer early wasted trials) than the state-of-the-art robot learning frameworks.
\end{abstract}

\begin{IEEEkeywords}
Learning from corrections, human-robot physical interaction,  motion planning, cutting-plane method,  inverse reinforcement learning, learning from demonstrations.
\end{IEEEkeywords}

\section{Introduction}

\IEEEPARstart{F}{or} tasks where robots work in proximity to human users, a robot is required to not only guarantee the accomplishment of the task but also complete it in a way that a human user prefers. Different users may have different preferences about how the robot should perform the task.  Such customized requirements usually lead to a considerable workload of robot programming,   which requires human users to have high expertise and skills.

To circumvent the above limitations of robot programming, learning from demonstrations (LfD) empowers non-expert users to program robots by providing demonstrations.   In  LfD  \cite{ravichandar2020recent},   a  user first provides  a robot with   demonstrations in a \emph{one-time} manner,  then the robot  learns  a control policy or   objective function  \emph{offline} from  the    demonstrations. While achieving the notable success in various applications \cite{kuderer2015learning,englert2017inverse,jin2020learning},  the \emph{one-time}  and \emph{offline} nature of LfD could  lead to some  challenges.   For example, when the demonstration data is insufficient to infer an objective function due to the low informativeness  \cite{jin2018inverse}  or deviation
from optima \cite{jain2015learning},    the demonstrations have to be re-collected and the robot has to be re-trained. This is particularly inconvenient for robots with high degrees of freedom.

A  recent line of work  \cite{jain2015learning,bajcsy2017learning,zhang2019learning, losey2018including} addresses the above challenges of LfD by developing a   scheme that enables a robot to
learn an objective function from \emph{user's feedback or corrections}.  Fig. \ref{figure_intro} is an example of those learning schemes. At each iteration, a human user does not need to provide optimal demonstrations to the robot, but merely a correction which is an \emph{incremental improvement} to the robot's current motion.  The robot then leverages the correction to update its objective function. Compared to LfD, this incremental learning scheme reduces the workload of a  user, especially for those (such as novices) who cannot provide the optimal demonstrations in a one-time manner \cite{jain2013learning}. Despite its promise,   the state-of-the-art methods  \cite{jain2015learning,bajcsy2017learning,zhang2019learning, losey2018including}  still face some challenges as below.

\begin{figure}[t]
	\centering
	\begin{subfigure}[b]{1\linewidth}
		\includegraphics[width=\linewidth]{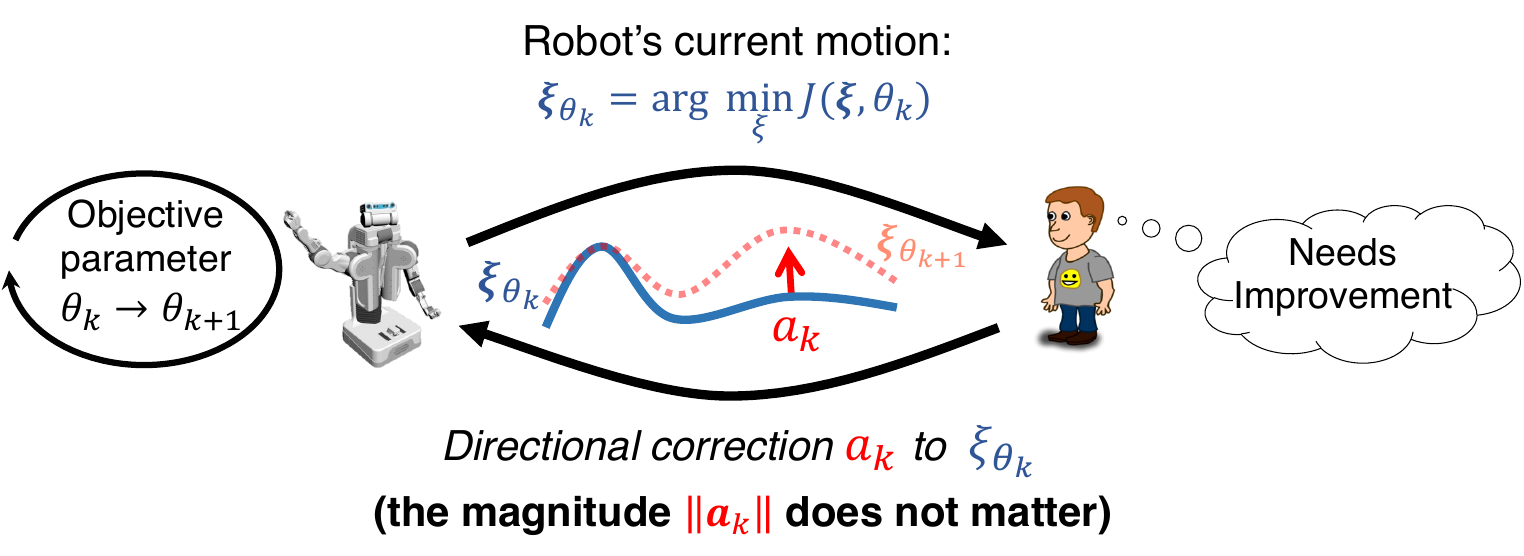}
	\end{subfigure}
	\caption{An illustration of learning from directional corrections proposed in this paper. The robot's current motion $\boldsymbol{\xi}_{\boldsymbol{\theta}_k}$ optimizes an   objective function  parameterized by 	$\boldsymbol{\theta}_k$. A human user applies a directional correction ${\boldsymbol{a}_k}$ to $\boldsymbol{\xi}_{\boldsymbol{\theta}_k}$. Note  that the magnitude $\norm{\boldsymbol{a}_k}$ of this correction does not matter. After receiving $\boldsymbol{a}_k$, the robot updates its objective function parameter to $\boldsymbol{\theta}_{k+1}$. This motion-correction-update process repeats until the convergence of the objective function parameter.
	}
	\label{figure_intro}
	\vspace{-5pt}
\end{figure}

First, providing a valid correction---the one that improves the robot motion---can be nontrivial. To obtain a valid correction,  the human user must carefully choose the \emph{magnitude} of a correction. As a robot gets closer to the `desired motion' (i.e., the motion in the human's perspective),  the allowable range of the magnitude of valid corrections gets smaller (see Section \ref{section_problem} for the detailed explanation).  Thus, the human can easily over-correct a robot near the desired motion, i.e., giving too large corrections that adversely drive the robot away from the desired motion.
Such difficulty makes robot learning inefficient (we will experimentally show this in Sections \ref{experiment_compare} and \ref{section_games}).

Second, most of  existing methods \cite{bajcsy2017learning,zhang2019learning, losey2018including} lack   \emph{theoretical guarantee} for the learning performance. While \cite{jain2015learning} shows that learning from human magnitude corrections can attain a bounded regret \cite{shivaswamy2012online}, this regret bound still cannot explicitly tell whether or not the learned objective function converges to the true one induced by all corrections.

To address the above challenges, this paper proposes a new learning method that enables a robot to learn an objective function from human \emph{directional} corrections with \emph{theoretical convergence guarantee}. We highlight the following new features of the proposed method,  as shown by Fig. \ref{figure_intro}.

(I) The proposed learning scheme only requires the user's \emph{directional corrections}. A directional correction is a correction that only concerns directional information and does not necessarily need to be magnitude-specific.  For instance, in teaching a mobile robot, a directional correction can be simply as `go left' or `go right' without dictating how far the robot should go. Furthermore, unlike existing work \cite{jain2015learning,bajcsy2017learning,zhang2019learning, losey2018including}, the proposed approach can \emph{directly} handle the sparse directional corrections without a pre-processing step of creating a human `intended' trajectory, which could introduce artifacts and unfavorably affect the learning performance.

(II) We emphasize the theoretical foundations of the proposed method. First, the learning process is based on  cutting-plane methods and thus has an intuitive geometric interpretation; and second,  we  establish the theoretical guarantees to show the convergence of our method towards finding the true objective function induced by human directional corrections. 

We have conducted extensive experiments to test the proposed method, including various numerical examples, a user study on two human-robot games, and a real-world quadrotor experiment. The results show that the proposed method is significantly more effective (higher success rate),  efficient/effortless (less human corrections needed), and potentially more accessible (fewer early wasted trials) than the state-of-the-art robot learning frameworks. 

\subsection{Related Work}

\subsubsection{One-time Learning from Demonstrations}
To learn an objective function from demonstrations, routine methods are  inverse optimal control \cite{moylan1973nonlinear,puydupin2012convex,jin2019inverse} or inverse reinforcement learning  \cite{ng2000algorithms,ziebart2008maximum,ratliff2006maximum}, where an objective function is learned from optimal demonstrations and  subsequently used  for control or planning. Successful LfD applications include autonomous driving \cite{kuderer2015learning}, robot manipulation \cite{englert2017inverse},  and motion planning \cite{jin2020learning}.
Despite the notable success \cite{jin2020pontryagin,jin2020inverse,moro2018learning},   LfD could be inconvenient in practice. First,  demonstrations in LfD are usually collected in a \emph{one-time} manner and the robot learning is    \emph{offline}.   In the case where demonstration data is insufficient to recover an objective function, such as low data informativeness as discussed in \cite{jin2018inverse}, or significantly deviating from the optima, the data has to be re-collected and the training has to be re-run.  Second, existing LfD  \cite{puydupin2012convex,jin2019inverse,ng2000algorithms,ziebart2008maximum,ratliff2006maximum} normally requires the optimality of demonstrations, which are challenging to collect for robots with high degree-of-freedoms.  For example,   when providing demonstrations for a humanoid robot, a  user has to account for the robot's motion of all degrees of freedom in a  spatially and temporally consistent manner
\cite{jain2015learning}.

\subsubsection{Incrementally Learning from  Corrections/feedback\label{key}}
Compared to one-time LfD,  learning from corrections or feedback enables a user to \emph{incrementally} improve robot motion,  making it more suitable for non-expert users who cannot provide optimal demonstrations in a one-time fashion \cite{jain2013learning}. 
The key assumption is that robot motion after correction achieves a higher reward (or a lower cost) than before correction. Under this assumption, \cite{jain2015learning} develops a co-active learning method to update the robot objective function using user feedback.  The user feedback includes a human either selecting the top-ranked robot trajectory among all candidates or physically demonstrating a preferred trajectory to the robot. By defining a  regret, 
which quantifies the average misalignment between the value of robot motion and that of the desired motion under the true objective function, \cite{jain2015learning} shows the convergence of the regret. But since regret only considers values of an objective function, it cannot directly tell and guarantee that the learned objective function itself is converging towards the true one. As we will show in Section \ref{experiment_compare}, the objective function can converge to a local solution instead of a true one.

Recently, \cite{bajcsy2017learning,zhang2019learning, losey2018including} handle learning from corrections through the perspective of the partially observable Markov decision process (POMDP). Here,  human corrections are viewed as observations about the unknown objective function parameter. By approximating the observation model and applying the maximum a posteriori estimation, they obtain a learning formula similar to the co-active learning  \cite{jain2015learning}. 
Along with this perspective, some variants have been recently developed to particularly account for the uncertainties of human corrections. For example, \cite{bobu2018learning} simultaneously estimates a rationality coefficient to characterize the rationality confidence of human corrections. \cite{losey2018including} fits the inverse reinforcement learning into a Kalman filter framework to quantitatively capture the uncertainty of the estimation.

Both the above co-active learning and POMDP-based learning require a human to carefully choose the \emph{magnitude} correction that improves robot motion. As we will detail in Section \ref{section_problem}, choosing a valid magnitude correction is difficult, especially when a     robot gets closer to the desired motion,  because the allowable magnitude of a correction has to lie in a shrinking level set. Near the desired motion, a human could easily over-correct the robot, i.e., applying too large corrections that adversely drive the robot away from the desired motion,  making the algorithm diverge or even fail. We will experimentally show this difficulty in Section \ref{experiment_compare}.
Moreover, due to the sparsity of human corrections,  all the above methods require a dedicated step to obtain a  \emph{human `intended' trajectory} for each correction. In the co-active learning, a robot needs to switch to a screening mode to obtain a human preferred trajectory, and in the POMDP-based method, a human intended trajectory is created by trajectory deformation \cite{dragan2015movement}.  These pre-processing steps may not only introduce more hyperparameters but also add some artifacts that could undermine the learning performance  \cite{zhang2019learning}

Very recently, \cite{losey2019learning} directly learns a desired trajectory from human physical interactions, then a robot tracks the learned trajectory using  linear–quadratic regulator (LQR) controllers. They minimize a trajectory distance loss, and the learning update is based on the assumption that the direction of a human correction is exactly aligned with the deepest gradient descent of the distance loss. While that work is related to our work in the sense that they both use the direction of a human correction, our work, however, focuses on learning an objective function instead of learning the desired trajectory directly. Importantly, we do not restrict the direction of a correction to be exactly aligned with the deepest gradient descent, but any correction as long as it has a direction that decreases the cost of robot motion. The above two distinctions lead to a new method of our work, which has a strong theoretical guarantee for learning convergence.

Finally, we want to mention that most of the existing methods \cite{bajcsy2017learning,zhang2019learning, losey2018including,losey2019learning}  lack theoretical guarantee about the learning performance. The only work that attempts to do so is \cite{jain2015learning}, where a  regret bound is shown. As discussed previously, such regret is an indicator of the misalignment of the values of the objective function, and cannot directly tell or guarantee the convergence of the objective function itself. In this paper, we will directly characterize the convergence of the learned objective function towards the true function that is induced by human corrections.

\subsection{Contributions}

To give readers a high-level picture of the proposed method, we show an overview in Fig. \ref{figure_algorithm}. We claim the following contributions of the proposed method.

\begin{figure}[h]
	\centering
	\begin{subfigure}[b]{1\linewidth}
		\includegraphics[width=\linewidth]{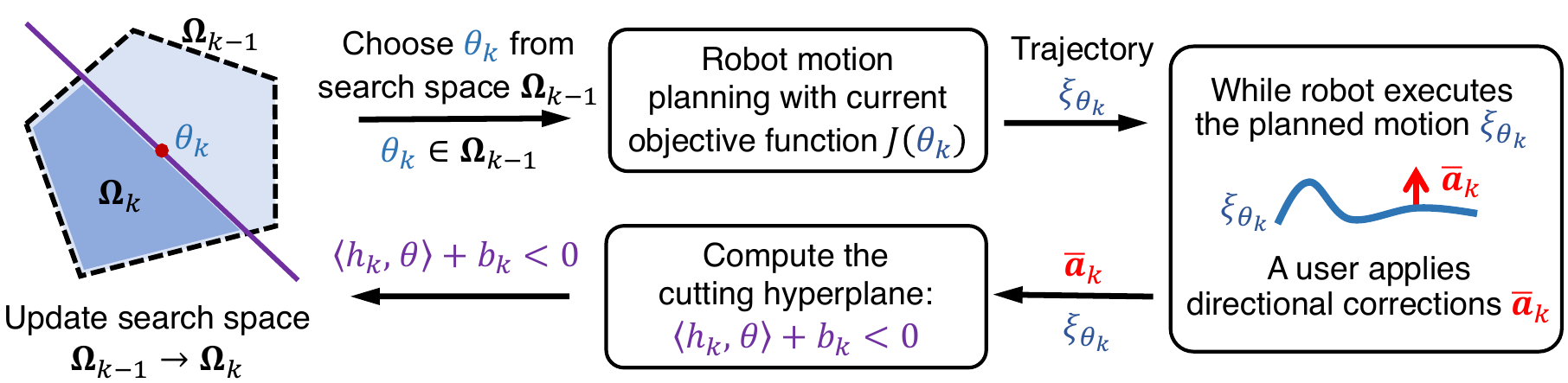}
	\end{subfigure}
	\caption{An  overview of the proposed algorithm.
	}
	\label{figure_algorithm}
\end{figure}

\begin{itemize}
	\item[(I)] The proposed method learns an objective function from human \emph{directional corrections}. It only requires that a correction, regardless of its magnitude, points in a direction of improving robot motion. As we will show in  Sections \ref{section_problem} and \ref{experiment_compare}, the allowable corrections that satisfy this requirement always account for half of the input space, making it more flexible for users to choose corrections from. Also, unlike existing methods, the proposed method directly learns from sparse directional corrections without a \emph{pre-processing} step of creating an `intended' trajectory.
	\item[(II)] The proposed learning algorithm has strong theoretical foundations. First, the proposed learning method is based on the cutting plane technique, which has straightforward geometric interpretations, as shown in Fig. \ref{figure_algorithm}. Second, we have established the theoretical results to show the convergence of the method towards finding the true objective function induced by all corrections. 
\end{itemize}

Numerical examples, a user study on two human-robot games, and a real-world quadrotor experiment confirm the efficacy and convergence of the proposed algorithm. The user study and real-world experiment further show that the proposed method is significantly more effective (higher success rate), efficient/effortless (fewer corrections needed), and potentially more accessible (fewer early wasted trials)  than the state-of-the-art methods.

In what follows,  Section II describes the problem formulation. Section III outlines the main algorithm and presents its geometric interpretation. Section IV provides the theoretical results of the algorithm and its detailed implementation. Various numerical examples and comparisons are given in Section V. Section VI presents a user study and Section VII provides a real-world experiment.  Conclusions are drawn in Section VIII. The appendix includes the proof, discussion, and possible extension of this work.

\section{Problem Formulation}\label{section_problem}

Consider a robot with the following dynamics and the initial condition:
\begin{equation}\label{equ_dynamics}
\boldsymbol{x}_{t+1}=\boldsymbol{f}(\boldsymbol{x}_{t},\boldsymbol{u}_{t}), \quad \text{with} \quad\boldsymbol{x}_0,
\end{equation}
where $\boldsymbol{x}_t\in\mathbb{R}^{n}$ is the robot state, $\boldsymbol{u}_t\in\mathbb{R}^{m}$ is the  control input,   $\boldsymbol{f}:\mathbb{R}^n\times\mathbb{R}^m\mapsto\mathbb{R}^n$ is  differentiable, and $t=1,2,\cdots$ is the time step. As commonly used in objective learning methods \cite{ng2000algorithms,ziebart2008maximum,ratliff2006maximum,jain2013learning,jain2015learning,bajcsy2017learning,zhang2019learning, losey2018including,jin2018inverse,jin2019inverse},  suppose that the robot  cost function has the following parameterized form:
\begin{equation}\label{equ_objective}
J(\boldsymbol{u}_{0:T},\boldsymbol{\theta})=\sum\nolimits_{t=0}^{T}\boldsymbol{\theta}\tran \boldsymbol{\phi}(\boldsymbol{x}_{t},\boldsymbol{u}_{t})+h(\boldsymbol{x}_{T+1}),
\end{equation} 
where $\boldsymbol{\phi}:\mathbb{R}^n\times \mathbb{R}^m\mapsto\mathbb{R}^{r}$  is a vector of the  \emph{pre-defined} features (basis functions);  $\boldsymbol{\theta}\in\mathbb{R}^{r}$ is a  vector of  weights, which are  \emph{tunable};  and  $h(\boldsymbol{x}_{T+1})$ is the  final cost  on the robot final  state $\boldsymbol{x}_{T+1}$, such as penalizing the distance between  $\boldsymbol{x}_{T+1}$ and a given goal state $\boldsymbol{x}^{\text{goal}}$, ${h}(\boldsymbol{x}_{T+1})=\norm{\boldsymbol{x}^{\text{goal}}-\boldsymbol{x}_{T+1}}^2$. Note that depending on  specific applications, the final cost term ${h}(\boldsymbol{x}_{T+1})$ can  be either absent, i.e., ${h}(\boldsymbol{x}_{T+1})=0$, or  also parameterized in a weighted-feature form with the  weights to be learned jointly with $\boldsymbol{\theta}$. The method developed in this paper can sufficiently handle either cases with little modifications.
For a fixed choice of $\boldsymbol{\theta}$, the  robot plans a sequence of  inputs $\boldsymbol{u}_{0:T}$ over  time horizon $T$ by (locally) optimizing the cost function (\ref{equ_objective})  subject to  (\ref{equ_dynamics}),  producing a  trajectory
\begin{equation}\label{equ_trajectory}
\boldsymbol{\xi}_{\boldsymbol{\theta}}=\left\{\boldsymbol{x}_{0:T\text{+}1}^{\boldsymbol{\theta}}, \boldsymbol{u}_{0:T}^{\boldsymbol{\theta}}\right\}.
\end{equation}
In the following text,  we  occasionally write the cost function (\ref{equ_objective}) as   $J(\boldsymbol{\theta})$ for  ease of  readability.

For a specific task,
suppose that a human user expects  the robot to minimize an \emph{implicit} cost function   $J(\boldsymbol{\theta}^*)$ in the same form of  (\ref{equ_objective}) but with  \emph{unknown}  $\boldsymbol{\theta}^*$. We  call  $\boldsymbol{\theta}^*$  the \emph{true  weight vector}. In general,  a  human user
may neither explicitly write down the value of $\boldsymbol{\theta}^*$ nor demonstrate the corresponding \emph{desired trajectory} $\boldsymbol{\xi}_{\boldsymbol{\theta}^*}$,  but
the human user  can tell  whether the robot motion trajectory is \emph{desired} or not. A robot trajectory is desired if it minimizes the implicit  $J(\boldsymbol{\theta}^*)$; otherwise, it is not desired.   In order for the robot to  learn  $J(\boldsymbol{\theta}^*)$ (and thus generate the desired trajectory $\boldsymbol{\xi}_{\boldsymbol{\theta}^*}$),  the human user is only able to  make  corrections  to the robot  motion. After receiving each human correction,   the robot updates its  guess  $\boldsymbol{\theta}$ towards the true $\boldsymbol{\theta}^*$. This procedure has been shown in Fig. \ref{figure_intro}.

Specifically, the process of a robot learning from human directional  corrections is iterative, as shown in Fig. \ref{figure_algorithm}. Each  iteration basically includes three steps: robot planning (\& execution), human correction, and robot update. Let $k=1,2, 3,\cdots$, be the iteration index and let  $\boldsymbol{\theta}_k$ denote the robot's guess of the weight vector at iteration $k$.  At  $k=1$, the robot is initialized with an arbitrary guess $\boldsymbol{\theta}_1$.  At iteration $k$, the robot first performs  motion \emph{planning}, i.e. solving  $\boldsymbol{\xi}_{\boldsymbol{\theta}_k}$  by
minimizing
the cost function $J(\boldsymbol{\theta}_k)$ in (\ref{equ_objective}) subject to  its dynamics  in  (\ref{equ_dynamics}).  During the robot executing  $\boldsymbol{\xi}_{\boldsymbol{\theta}_k}$, the human user observes $\boldsymbol{\xi}_{\boldsymbol{\theta}_k}$ and applies a \emph{correction}, denoted by  $\boldsymbol{a}_{t_k}\in\mathbb{R}^m$, to the robot in its input space. Here, ${t_k}\in\{0,1,\cdots,T\}$ is called \emph{correction time},  indicating at which time step  the correction $\boldsymbol{a}_{t_k}$ is made.  After  receiving  $\boldsymbol{a}_{t_k}$, the robot \emph{updates} its  guess $\boldsymbol{\theta}_k$  to $\boldsymbol{\theta}_{k+1}$ based on an update rule  developed later.

One distinguishing feature of our method is that we assume that each human  correction $\boldsymbol{a}_{t_k}$  satisfies
\begin{equation}\label{equ_assumption}
\left\langle -\nabla J(\boldsymbol{u}_{0:T}^{\boldsymbol{\theta}_k},\boldsymbol{\theta}^*),\,\,\boldsymbol{{a}}_k \right\rangle>0, \quad  k=1,2,3,\cdots.
\end{equation}
Here,
\begin{equation}\label{equ_aug_correction_vec}
\boldsymbol{{a}}_{k}= \begin{bmatrix}
\boldsymbol{0}\tran & \cdots ,& \boldsymbol{a}_{t_k}\tran, & \cdots,\boldsymbol{0}\tran
\end{bmatrix}\tran\in\mathbb{R}^{m{(T+1)}},
\end{equation} 
with    $\boldsymbol{a}_{t_k}$ filled at the  $t_k$th entry and   $\boldsymbol{0}\in\mathbb{R}^m$ elsewhere. Here, $\left\langle \cdot,\cdot\right\rangle$ is the dot product, and $-\nabla J(\boldsymbol{u}_{0:T}^{\boldsymbol{\theta}_k},\boldsymbol{\theta}^*)$ is the gradient-descent of $J(\boldsymbol{\theta}^*)$ with respect to   $\boldsymbol{u}_{0:T}$ evaluated at the robot current  trajectory $\boldsymbol{\xi}_{\boldsymbol{\theta}_k}=\{\boldsymbol{x}_{0:T\text{+}1}^{\boldsymbol{\theta}_k}, \boldsymbol{u}_{0:T}^{\boldsymbol{\theta}_k}\}$. Note that  (\ref{equ_assumption}) does not require a specific value to the magnitude of $\boldsymbol{a}_{t_k}$ but requires its direction roughly around the gradient-descent direction of $J(\boldsymbol{\theta}^*)$. It means that the correction $\boldsymbol{{a}}_{k}$ aims to guide the robot motion $\boldsymbol{\xi}_{\boldsymbol{\theta}_k}$ towards  a lower cost under $J(\boldsymbol{\theta}^*)$ unless the trajectory is desired. Thus, we call $\boldsymbol{a}_{t_k}$ a \emph{directional correction}.

\smallskip

In practice, one always needs to account for human's noisy or imperfect  corrections. Thus, we modify (\ref{equ_assumption}) as follows
\begin{subequations}\label{equ_assumption_modify}
	\begin{equation}\label{equ_assumption_modify.e}
	\begin{aligned}
	&\E_{\boldsymbol{a}_k}\left\langle -\nabla J(\boldsymbol{u}_{0:T}^{\boldsymbol{\theta}_k},\boldsymbol{\theta}^*),\,\,\boldsymbol{{a}}_k \right\rangle\\[2pt]
	=&\left\langle -\nabla J(\boldsymbol{u}_{0:T}^{\boldsymbol{\theta}_k},\boldsymbol{\theta}^*),\,\, \E_{\boldsymbol{a}_k} (\boldsymbol{{a}}_k)
	\right\rangle\\[2pt]
	=&\left\langle -\nabla J(\boldsymbol{u}_{0:T}^{\boldsymbol{\theta}_k},\boldsymbol{\theta}^*),\,\, \boldsymbol{\bar{a}}_k
	\right\rangle >0, \quad  k=1,2,...,
	\end{aligned}
	\end{equation}
	where
	\begin{equation}\label{equ_assumption_modify.abar}
	\boldsymbol{\bar{a}}_k=\E_{\boldsymbol{a}_k} (\boldsymbol{{a}}_k).
	\end{equation}
\end{subequations}
Here, $\E(\cdot)$ is the expectation  with respect to the human correction data $\boldsymbol{a}_k$.   (\ref{equ_assumption_modify}) says that the original assumption (\ref{equ_assumption}) can be met in expectation; in other words, the robot can receive and average human's \emph{multiple} directional corrections before one update of the cost function is made. There are two ways to implement multiple corrections. The most convenient way is that a human user applies multiple directional corrections at different time steps but all within the same execution of the robot motion $\boldsymbol{\xi}_{\boldsymbol{\theta}_k}$, i.e., the robot executes the current plan $\boldsymbol{\xi}_{\boldsymbol{\theta}_k}$ just once. The second way is that the multiple corrections are applied in the multiple repetitions of the execution of  $\boldsymbol{\xi}_{\boldsymbol{\theta}_k}$, i.e., the robot executes the same plan $\boldsymbol{\xi}_{\boldsymbol{\theta}_k}$ for multiple times.
Whichever implementation,  by taking the expectation of human's multiple directional corrections $\boldsymbol{\bar{a}}_k=\E_{\boldsymbol{a}_k} (\boldsymbol{{a}}_k)$, (\ref{equ_assumption_modify})  permits  noisy and imperfect human directional corrections. The similar strategies have also been  adopted in \cite{jain2015learning}.

The \textbf{problem of interest} in this paper is that given  human (averaged)  directional corrections $\boldsymbol{\bar{a}}_{k}$ satisfying the assumption   (\ref{equ_assumption_modify}), we aim to develop a rule to update  the robot weight vector  guess $\boldsymbol{\theta}_k$ such that  $\boldsymbol{\theta}_k$ converges to~$\boldsymbol{\theta}^*$ as $k=1,2,3,...$.

\begin{remark}
	We assume  that  human  corrections $\boldsymbol{a}_{t_k}\in\mathbb{R}^m$  are  in the robot input space,  which means that $\boldsymbol{a}_{t_k}$ can be directly added to  robot  input $\boldsymbol{u}_{t_k}$. This can be readily fulfilled in some applications such as autonomous vehicles, where a  user directly changes the steering angle of a vehicle. For other cases where the corrections are not readily in the robot input space, it could be met via some specific interfaces (or computation), which map correction signals to the robot input space. For example, when a human interacts with the end effector of a robot manipulator, one can convert the task-space contact force to the joint torques via Jacobian. Then, $\boldsymbol{a}_{t_k}$ is the mapped correction. The reason why we do not consider the corrections in the state space	is that 1)  corrections in input spaces may be easier to implement, as shown in our later experiments, and 2) corrections in state spaces can be infeasible for the under-actuated robot systems  \cite{tedrake2009underactuated}.
	
\end{remark}

\begin{figure}[h]
	\centering
	\begin{subfigure}[b]{0.45\linewidth}
		\includegraphics[width=\linewidth]{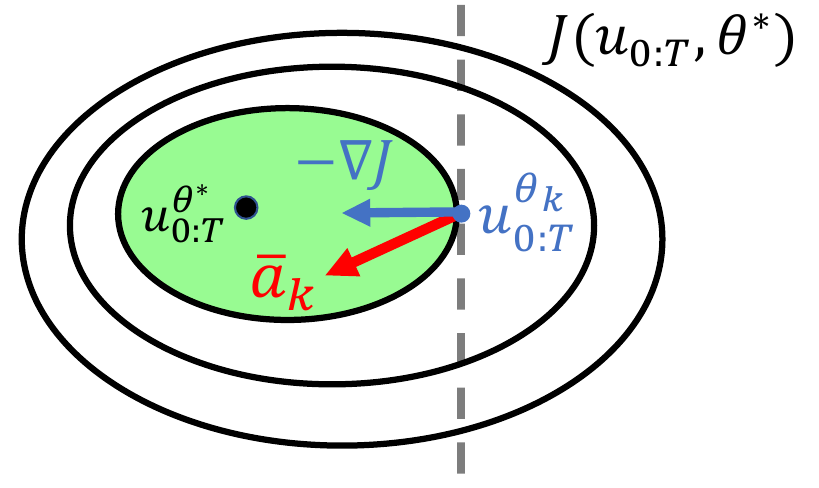}
		\caption{Allowable region (green)  of magnitude corrections.}
		\label{figure_correctionvs.1}
	\end{subfigure}
	\hspace{15pt}
	\begin{subfigure}[b]{.45\linewidth}
		\includegraphics[width=\linewidth]{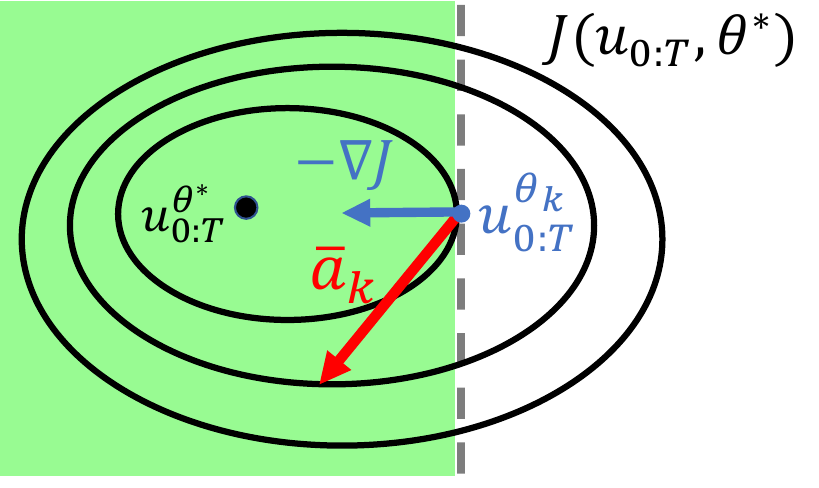}
		\caption{Allowable region  (green) of directional corrections.}
		\label{figure_correctionvs.2}
	\end{subfigure}
	\caption{Magnitude corrections v.s.  directional corrections. Contours (circles) and  the  desired  trajectory $\boldsymbol{u}_{0:T}^{\boldsymbol{\theta}^*}$ (black  dot) of the true cost function $J(\boldsymbol{\theta}^*)$ are plotted.
		(a) Green region (a sub-level set) indicates all allowable magnitude corrections $\boldsymbol{\bar{a}}_k$  that satisfy
		$J(\boldsymbol{u}_{0:T}^{\boldsymbol{{\theta}}_k}+\boldsymbol{\bar{a}}_k,\boldsymbol{\theta}^*){<} J(\boldsymbol{u}_{0:T}^{\boldsymbol{{\theta}}_k},\boldsymbol{\theta}^*)$. (b) Green  region (half of the input space) indicates all allowable directional corrections  $\boldsymbol{\bar{a}}_k$  that satisfy $\langle-\nabla J(\boldsymbol{u}_{0:T}^{\boldsymbol{\theta}_k},\boldsymbol{\theta}^*), \boldsymbol{\bar{a}}_k \rangle{>}0$.
	}
	\label{figure_correctionvs}
\end{figure}

\begin{remark}
		Assumption  (\ref{equ_assumption}) on  directional correction $\boldsymbol{a}_{{t}_k}$ is less restrictive than  ones in  \cite{bajcsy2017learning,zhang2019learning, losey2018including,jain2015learning} using   magnitude corrections,  which assume  the cost of the  corrected robot  trajectory $\boldsymbol{u}_{0:T}^{\boldsymbol{{\theta}}_k}+\boldsymbol{\bar{a}}_k$ is   lower  than that of  the robot original motion  $\boldsymbol{u}_{0:T}^{\boldsymbol{\theta}_k}$, i.e.,
	$\small J(\boldsymbol{u}_{0:T}^{\boldsymbol{{\theta}}_k}+\boldsymbol{\bar{a}}_k,\boldsymbol{\theta}^*){<} J(\boldsymbol{u}_{0:T}^{\boldsymbol{{\theta}}_k},\boldsymbol{\theta}^*)$. As shown in Fig. \ref{figure_correctionvs}, their assumptions usually lead to the restriction of  correction magnitude. Specifically, to satisfy  $\small J(\boldsymbol{u}_{0:T}^{\boldsymbol{{\theta}}_k}+\boldsymbol{\bar{a}}_k,\boldsymbol{\theta}^*){<} J(\boldsymbol{u}_{0:T}^{\boldsymbol{{\theta}}_k},\boldsymbol{\theta}^*)$, $\norm{\boldsymbol{\bar{a}}_k}$ has to be chosen from the  $\small J(\boldsymbol{u}_{0:T}^{\boldsymbol{{\theta}}_k},\boldsymbol{\theta}^*)$-sublevel set of $J(\boldsymbol{\theta}^*)$, as shown by the green region in Fig. \ref{figure_correctionvs.1}.  Furthermore, this  region will shrink as $\boldsymbol{u}_{0:T}^{\boldsymbol{{\theta}}_k}$ gets close to the  desired trajectory $\boldsymbol{u}_{0:T}^{\boldsymbol{\theta}^*}$ (black dot), thus making  $\norm{\boldsymbol{\bar{a}}_k}$ more difficult to choose.  In contrast,   the  directional corrections satisfying (\ref{equ_assumption}) always account for  half of the  input space, as shown by the green region in Fig. \ref{figure_correctionvs.2}.  A human  user can choose any correction as long as its direction lies in the half space of the   gradient descent of $J(\boldsymbol{\theta}^*)$.  Thus, (\ref{equ_assumption}) would be more likely to be satisfied  for non-expert  users. We will  experimentally show this advantage later in  Section \ref{experiment_compare} and  Section \ref{section_games}.
\end{remark}

\section{Main Algorithm Outline and its Geometric Interpretation}
In this section, we will outline the proposed main algorithm and present its geometric interpretation.

\subsection{Hyperplane for Each Directional  Correction}

Before developing the main algorithm, we first show that the condition  (\ref{equ_assumption_modify}) equals a  linear inequality imposed on the true  $\boldsymbol{\theta}^*$. This has been formally stated by the following lemma.

\begin{lemma}\label{lemma_correctionEquivalent}
	Suppose that  the current guess of the  weight vector  is $\boldsymbol{\theta}_k$, and the robot  trajectory  $\boldsymbol{\xi}_{\boldsymbol{\theta}_k}{=}\{\boldsymbol{x}_{0:T\text{+}1}^{\boldsymbol{\theta}_k}, \boldsymbol{u}_{0:T}^{\boldsymbol{\theta}_k}\}$ is a result of (locally) minimizing  $J(\boldsymbol{\theta}_k)$ in (\ref{equ_objective}) subject to  dynamics   (\ref{equ_dynamics}). For   $\boldsymbol{\xi}_{\boldsymbol{\theta}_k}$,  given a human (averaged) directional correction $\boldsymbol{\bar{a}}_{t_k}$
	satisfying (\ref{equ_assumption_modify}), one has 
	\begin{equation}\label{equ_assumption2}
	\left\langle\boldsymbol{h}_k,\boldsymbol{\theta}^* \right\rangle +b_k<0, \quad k=1,2,3\cdots,
	\end{equation}
	with 
	\begin{subequations}\label{equ_hk}
		\begin{align}
		\boldsymbol{h}_k&=\boldsymbol{H}_1\tran(\boldsymbol{x}_{0:{T\text{+}1}}^{\boldsymbol{\theta}_k}, \boldsymbol{u}_{0:T}^{\boldsymbol{\theta}_k})\boldsymbol{\bar{a}}_k\,\, \in\mathbb{R}^{r},\\[3pt]
		b_k&=\boldsymbol{\bar{a}}_k\tran\boldsymbol{H}_2(\boldsymbol{x}_{0:{T\text{+}1}}^{\boldsymbol{\theta}_k}, \boldsymbol{u}_{0:T}^{\boldsymbol{\theta}_k})
		\nabla h(\boldsymbol{x}_{T+1}^{\boldsymbol{\theta}_k}) 
		\,\, \in \mathbb{R}.
		\end{align}
	\end{subequations}
	Here, $\boldsymbol{\bar a}_k$ is defined in (\ref{equ_assumption_modify.abar}); $\nabla h(\boldsymbol{x}_{T+1}^{\boldsymbol{\theta}_k})$ is the gradient of the final cost $h(\boldsymbol{x}_{T+1})$ in (\ref{equ_objective}) evaluated at $\boldsymbol{x}_{T+1}^{\boldsymbol{\theta}_k}$; $\boldsymbol{H}_1(\boldsymbol{x}^{\boldsymbol{\theta}_k}_{0:T\text{+}1},\boldsymbol{u}^{\boldsymbol{\theta}_k}_{0:T})$ and $\boldsymbol{H}_2(\boldsymbol{x}^{\boldsymbol{\theta}_k}_{0:T\text{+}1},\boldsymbol{u}^{\boldsymbol{\theta}_k}_{0:T})$ are  computed as:
	\begin{subequations}\label{equ_matH}
		\begin{align}
		\boldsymbol{H}_1(\boldsymbol{x}^{\boldsymbol{\theta}_k}_{0:T\text{+}1},\boldsymbol{u}^{\boldsymbol{\theta}_k}_{0:T})&{=}\begin{bmatrix}
		\boldsymbol{F}_u\boldsymbol{F}_x^{{-}1}\boldsymbol{\Phi}_x{+}\boldsymbol{\Phi}_u\\
		\frac{\partial  \boldsymbol{\phi}\tran}{\partial \boldsymbol{u}^{\boldsymbol{\theta}_k}_{T}}\\
		\end{bmatrix}\in\mathbb{R}^{m(T\text{+}1)\times r},\\
		\boldsymbol{H}_2(\boldsymbol{x}^{\boldsymbol{\theta}_k}_{0:T\text{+}1},\boldsymbol{u}^{\boldsymbol{\theta}_k}_{0:T})&{=}\begin{bmatrix}
		\boldsymbol{F}_u\boldsymbol{F}_x^{-1}\boldsymbol{V}\\
		\frac{\partial  \boldsymbol{f}\tran}{\partial \boldsymbol{u}^{\boldsymbol{\theta}_k}_{T}}
		\end{bmatrix}\in\mathbb{R}^{m(T\text{+}1)\times n},
		\end{align} 
	\end{subequations}
	with 
	\begin{subequations}\label{equ_rm}
		\begin{align}
		\boldsymbol{F}_x&
		{=}\begin{bmatrix}
		{I} & \frac{-\partial\boldsymbol{f}\tran}{\partial\boldsymbol{x}^{\boldsymbol{\theta}_k}_1} &\cdots &\boldsymbol{0}&\boldsymbol{0}\\
		\boldsymbol{0} & I &\cdots &\boldsymbol{0} &\boldsymbol{0}\\
		\vdots &  \vdots & \ddots  & \vdots &\vdots \\
		\boldsymbol{0} &  \boldsymbol{0} & \cdots  & I &\frac{{-}\partial\boldsymbol{f}\tran}{\partial\boldsymbol{x}^{\boldsymbol{\theta}_k}_{T\text{-}1}} \\
		\boldsymbol{0} &  \boldsymbol{0} & \cdots  &  &I \\
		\end{bmatrix},\,\,
		\boldsymbol{\Phi}_x
		{=}\begin{bmatrix}
		\frac{\partial \boldsymbol{\phi}\tran}{\partial \boldsymbol{x}^{\boldsymbol{\theta}_k}_1} \\[4pt]
		\frac{\partial \boldsymbol{\phi}\tran}{\partial \boldsymbol{x}^{\boldsymbol{\theta}_k}_2} \\[4pt]
		\vdots \\[4pt]
		\frac{\partial \boldsymbol{\phi}\tran}{\partial \boldsymbol{x}^{\boldsymbol{\theta}_k}_T} \\[4pt]
		\end{bmatrix},
		\\
		\boldsymbol{F}_u&{=}\begin{bmatrix}
		\frac{\partial\boldsymbol{f}\tran}{\partial\boldsymbol{u}^{\boldsymbol{\theta}_k}_0} & \boldsymbol{0} &\cdots & \boldsymbol{0}\\
		\boldsymbol{0} & \frac{\partial\boldsymbol{f}\tran}{\partial\boldsymbol{u}^{\boldsymbol{\theta}_k}_1} &\cdots & \boldsymbol{0} \\
		\vdots &  \vdots &\ddots  & \vdots \\
		\boldsymbol{0} &  \boldsymbol{0} & \cdots  & \frac{\partial\boldsymbol{f}\tran}{\partial\boldsymbol{u}^{\boldsymbol{\theta}_k}_{T\text{-}1}}  \\
		\end{bmatrix},\,\,
		\boldsymbol{\Phi}_u{=}\small
		\begin{bmatrix}
		\frac{\partial \boldsymbol{\phi}\tran}{\partial \boldsymbol{u}^{\boldsymbol{\theta}_k}_0} \\[4pt]
		\frac{\partial \boldsymbol{\phi}\tran}{\partial \boldsymbol{u}^{\boldsymbol{\theta}_k}_1} \\[4pt]
		\vdots \\[4pt]
		\frac{\partial \boldsymbol{\phi}\tran}{\partial \boldsymbol{u}^{\boldsymbol{\theta}_k}_{T\text{-}1}} 
		\end{bmatrix},\\
		\boldsymbol{V}&{=}
		\begin{bmatrix}
		\boldsymbol{0} & \boldsymbol{0}& \cdots & \boldsymbol{0}&  \frac{\partial\boldsymbol{f}}{\partial \boldsymbol{x}^{\boldsymbol{\theta}_k}_T}
		\end{bmatrix}\tran.
		\end{align} 
	\end{subequations}
	In above, the dimensions are $\boldsymbol{F}_x\in\mathbb{R}^{nT\times nT}$, $\boldsymbol{F}_u\in\mathbb{R}^{mT\times nT}$,
	$\boldsymbol{\Phi}_x\in\mathbb{R}^{nT\times r}$,
	$\boldsymbol{\Phi}_u\in\mathbb{R}^{mT\times r}$,
	$\boldsymbol{V}\in\mathbb{R}^{nT\times n}$. $I$ is $n\times n$ identity. For a general differentiable function $\boldsymbol{g}(\boldsymbol{x})$ and a fixed value $\boldsymbol{x}^*$, we denote $\frac{\partial \boldsymbol{g}}{\partial \boldsymbol{x}^{^*}}$ as the Jacobian  of $\boldsymbol{g}(\boldsymbol{x})$ evaluated at $\boldsymbol{x}^*$.
\end{lemma}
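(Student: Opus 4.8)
The plan is to show that the whole lemma collapses to a single gradient identity, after which (\ref{equ_assumption}) and (\ref{equ_assumption2}) become literally the same inequality. Concretely, I would first prove that the reduced gradient of the implicit cost with respect to the control sequence, evaluated along the current trajectory $\boldsymbol{\xi}_{\boldsymbol{\theta}_k}$, splits additively into a part linear in $\boldsymbol{\theta}^*$ and a part carrying the final-cost gradient,
\[
\nabla J(\boldsymbol{u}_{0:T}^{\boldsymbol{\theta}_k},\boldsymbol{\theta}^*)=\boldsymbol{H}_1\boldsymbol{\theta}^*+\boldsymbol{H}_2\,\nabla h(\boldsymbol{x}_{T+1}^{\boldsymbol{\theta}_k}),
\]
with $\boldsymbol{H}_1,\boldsymbol{H}_2$ exactly as in (\ref{equ_matH}). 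Granting this identity, the lemma is immediate: since $\boldsymbol{h}_k=\boldsymbol{H}_1^\prime\boldsymbol{\bar a}_k$ and $b_k=\boldsymbol{\bar a}_k^\prime\boldsymbol{H}_2\nabla h(\boldsymbol{x}_{T+1}^{\boldsymbol{\theta}_k})$ by (\ref{equ_hk}), contracting the identity with $\boldsymbol{\bar a}_k$ gives $\langle\nabla J,\boldsymbol{\bar a}_k\rangle=\langle\boldsymbol{h}_k,\boldsymbol{\theta}^*\rangle+b_k$, so $\langle -\nabla J,\boldsymbol{\bar a}_k\rangle>0$ holds if and only if $\langle\boldsymbol{h}_k,\boldsymbol{\theta}^*\rangle+b_k<0$. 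The equivalence is exact, not just one-directional, precisely because the displayed relation is an equality.

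To establish the gradient identity I would run a standard sensitivity (reduced-gradient) computation along the trajectory. Writing $A_t=\partial\boldsymbol{f}/\partial\boldsymbol{x}_t^{\boldsymbol{\theta}_k}$ and $B_t=\partial\boldsymbol{f}/\partial\boldsymbol{u}_t^{\boldsymbol{\theta}_k}$, linearizing (\ref{equ_dynamics}) gives $\delta\boldsymbol{x}_{t+1}=A_t\delta\boldsymbol{x}_t+B_t\delta\boldsymbol{u}_t$ with $\delta\boldsymbol{x}_0=\boldsymbol{0}$ since $\boldsymbol{x}_0$ is fixed. Stacking these for $t=0,\dots,T-1$ and noting that $\boldsymbol{F}_x$ in (\ref{equ_rm}) is block upper-bidiagonal with unit diagonal and superdiagonal blocks $-(\partial\boldsymbol{f}/\partial\boldsymbol{x}_t^{\boldsymbol{\theta}_k})^\prime=-A_t^\prime$, so that $\boldsymbol{F}_x^\prime$ is block lower-bidiagonal with unit diagonal and subdiagonal blocks $-A_t$, while $\boldsymbol{F}_u^\prime=\diag(B_0,\dots,B_{T-1})$, the stacked linearization reads $\boldsymbol{F}_x^\prime\,\delta\boldsymbol{x}_{1:T}=\boldsymbol{F}_u^\prime\,\delta\boldsymbol{u}_{0:T-1}$, i.e. $\delta\boldsymbol{x}_{1:T}=(\boldsymbol{F}_x^\prime)^{-1}\boldsymbol{F}_u^\prime\,\delta\boldsymbol{u}_{0:T-1}$; here $\boldsymbol{F}_x$ is invertible for free, being unit upper-triangular. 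The terminal perturbation is treated separately through $\delta\boldsymbol{x}_{T+1}=A_T\delta\boldsymbol{x}_T+B_T\delta\boldsymbol{u}_T$, where $A_T\delta\boldsymbol{x}_T=\boldsymbol{V}^\prime\delta\boldsymbol{x}_{1:T}$ recovers the matrix $\boldsymbol{V}$ of (\ref{equ_rm}).

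Finally I would assemble $dJ$ by the chain rule. The running-cost differential contributes $(\boldsymbol{\theta}^*)^\prime\boldsymbol{\Phi}_x^\prime\,\delta\boldsymbol{x}_{1:T}$ from the state dependence ($t=1,\dots,T$) together with $(\boldsymbol{\theta}^*)^\prime\boldsymbol{\Phi}_u^\prime\,\delta\boldsymbol{u}_{0:T-1}$ and the isolated $t=T$ control term $(\boldsymbol{\theta}^*)^\prime(\partial\boldsymbol{\phi}^\prime/\partial\boldsymbol{u}_T)^\prime\delta\boldsymbol{u}_T$, while the final cost contributes $\nabla h^\prime\,\delta\boldsymbol{x}_{T+1}$. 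Substituting the sensitivity relation and using the transpose identities $\boldsymbol{\Phi}_x^\prime(\boldsymbol{F}_x^\prime)^{-1}\boldsymbol{F}_u^\prime=(\boldsymbol{F}_u\boldsymbol{F}_x^{-1}\boldsymbol{\Phi}_x)^\prime$ and $\boldsymbol{V}^\prime(\boldsymbol{F}_x^\prime)^{-1}\boldsymbol{F}_u^\prime=(\boldsymbol{F}_u\boldsymbol{F}_x^{-1}\boldsymbol{V})^\prime$ gathers the $\delta\boldsymbol{u}_{0:T-1}$ coefficients into the top block $\boldsymbol{F}_u\boldsymbol{F}_x^{-1}\boldsymbol{\Phi}_x+\boldsymbol{\Phi}_u$ of $\boldsymbol{H}_1$ and the top block $\boldsymbol{F}_u\boldsymbol{F}_x^{-1}\boldsymbol{V}$ of $\boldsymbol{H}_2$, whereas the isolated $t=T$ terms supply the bottom blocks $\partial\boldsymbol{\phi}^\prime/\partial\boldsymbol{u}_T$ and $\partial\boldsymbol{f}^\prime/\partial\boldsymbol{u}_T$. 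Reading off the coefficient of $\delta\boldsymbol{u}_{0:T}$ then gives exactly $\nabla J=\boldsymbol{H}_1\boldsymbol{\theta}^*+\boldsymbol{H}_2\nabla h$, closing the argument.

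I expect the main obstacle to be organizational rather than conceptual: getting the transpose conventions and block partition right so that the stacked Jacobians reproduce the paper's $\boldsymbol{F}_x,\boldsymbol{F}_u,\boldsymbol{\Phi}_x,\boldsymbol{\Phi}_u,\boldsymbol{V}$, and in particular keeping the boundary contributions at $t=T$ (the last control block) and $t=T+1$ (the terminal state feeding $\nabla h$) separate from the stacked interior blocks. It is worth noting that the identity uses only dynamic feasibility of $\boldsymbol{\xi}_{\boldsymbol{\theta}_k}$ --- that its states are generated from its controls through (\ref{equ_dynamics}), so that the $\boldsymbol{x}$--$\boldsymbol{u}$ sensitivities are well defined --- and not the optimality of $\boldsymbol{\xi}_{\boldsymbol{\theta}_k}$ for $J(\boldsymbol{\theta}_k)$; optimality belongs to the setup but is inert in this particular computation.
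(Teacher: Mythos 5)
Your proof is correct and follows essentially the same route as the paper's: linearize the dynamics around $\boldsymbol{\xi}_{\boldsymbol{\theta}_k}$, push the control perturbations through the chain rule to obtain the reduced-gradient identity $\nabla J(\boldsymbol{u}_{0:T}^{\boldsymbol{\theta}_k},\boldsymbol{\theta}^*)=\boldsymbol{H}_1\boldsymbol{\theta}^*+\boldsymbol{H}_2\nabla h(\boldsymbol{x}_{T+1}^{\boldsymbol{\theta}_k})$, and then contract with $\boldsymbol{\bar{a}}_k$ so that (\ref{equ_assumption}) and (\ref{equ_assumption2}) become the same inequality. The only difference is bookkeeping: the paper stacks the terminal step into augmented block matrices $\boldsymbol{A},\boldsymbol{B},\boldsymbol{C},\boldsymbol{D}$ and inverts $\boldsymbol{A}$ via a Schur complement, whereas you keep the $t=T$ and $t=T+1$ boundary terms separate and invert only the unit-triangular $\boldsymbol{F}_x$ --- the same computation organized slightly differently, and your observation that only dynamic feasibility (not optimality) of $\boldsymbol{\xi}_{\boldsymbol{\theta}_k}$ is used is also consistent with the paper's proof.
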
 

\begin{proof}
	We first derive the explicit form of  $\nabla J(\boldsymbol{u}^{\boldsymbol{\theta}_k}_{0:T}, \boldsymbol{\theta}^*)$ in (\ref{equ_assumption_modify}), and then show that (\ref{equ_assumption_modify}) can be re-written as (\ref{equ_assumption2}).

	Consider the robot current trajectory  $\boldsymbol{\xi}_{\boldsymbol{\theta}_k}{=}\{\boldsymbol{x}_{0:T\text{+}1}^{\boldsymbol{\theta}_k}, \boldsymbol{u}_{0:T}^{\boldsymbol{\theta}_k}\}$, which  satisfies the robot dynamics constraint  (\ref{equ_dynamics}). For any $t=0,1,..., T$, define the infinitesimal  perturbation pair $({\delta}\boldsymbol{\boldsymbol{x}}_{t}, {\delta}\boldsymbol{\boldsymbol{u}}_{t})$ at the state-input pair $(\boldsymbol{x}^{\boldsymbol{\theta}_k}_t,\boldsymbol{u}^{\boldsymbol{\theta}_k}_t)$. By linearizing the dynamics (\ref{equ_dynamics})  around $(\boldsymbol{x}^{\boldsymbol{\theta}_k}_t,\boldsymbol{u}^{\boldsymbol{\theta}_k}_t)$, we have
		\begin{equation}\label{equ_linear_dynamics}
	\delta\boldsymbol{x}_{t+1}=\small\frac{\partial \boldsymbol{f}}{\partial\boldsymbol{x}^{\boldsymbol{\theta}_k}_t} \delta\boldsymbol{x}_{t}+\small\frac{\partial \boldsymbol{f}}{\partial\boldsymbol{u}^{\boldsymbol{\theta}_k}_t} \delta\boldsymbol{u}_{t},
	\end{equation}
		where 
	$\frac{\partial \boldsymbol{f}}{\partial\boldsymbol{x}^{\boldsymbol{\theta}_k}_t}$ and $\frac{\partial \boldsymbol{f}}{\partial\boldsymbol{u}^{\boldsymbol{\theta}_k}_t}$ are the Jacobians of  $\boldsymbol{f}$ with respect to $\boldsymbol{x}_t$ and $\boldsymbol{u}_t$, respectively, evaluated at $(\boldsymbol{x}^{\boldsymbol{\theta}_k}_t,\boldsymbol{u}^{\boldsymbol{\theta}_k}_t)$. By stacking (\ref{equ_linear_dynamics}) for all $t=0,1,..., T$ and  also noting $\delta\boldsymbol{x}_0=\boldsymbol{0}$ (because $\boldsymbol{x}^{\boldsymbol{\theta}_k}_0=\boldsymbol{x}_0$ is given),  we have the following compact  form:
	\begin{equation}\label{equ_linear_dynamics_mat}
	-\boldsymbol{A}\tran\delta\boldsymbol{x}_{1:T+1}+\boldsymbol{B}\tran\delta\boldsymbol{u}_{0:T}=\boldsymbol{0}, 
	\end{equation}
	with $\delta\boldsymbol{x}_{1:T+1}{=}[\delta\boldsymbol{x}_{1}\tran,\cdots, \delta\boldsymbol{x}_{T+1}\tran]\tran$, $\delta\boldsymbol{u}_{1:T}{=}[\delta\boldsymbol{u}_{1}\tran,\cdots, \delta\boldsymbol{u}_{T}\tran]\tran$,
	\begin{align}
	\boldsymbol{A}=\begin{bmatrix}
	\boldsymbol{F}_x & -\boldsymbol{V}\\
	\boldsymbol{0}  & I
	\end{bmatrix}
	,\quad\text{and}\quad
	\boldsymbol{B}=\begin{bmatrix}
	\boldsymbol{F}_u & \boldsymbol{0}\\
	\boldsymbol{0}  & \small\frac{\partial\boldsymbol{f}\tran}{\partial\boldsymbol{u}^{\boldsymbol{\theta}_k}_{T}}
	\end{bmatrix},
	\end{align}
	with  $\boldsymbol{F}_x$ and $\boldsymbol{F}_u$ defined in (\ref{equ_rm}). Given $\delta\boldsymbol{x}_{1:T+1}$ and $\delta\boldsymbol{u}_{1:T}$, the increment of   $J(\boldsymbol{u}^{\boldsymbol{\theta}_k}_{0:T},\boldsymbol{\theta}^*)$, denoted as $\delta J(\boldsymbol{\theta}^*)$,  is
	\begin{equation}\label{equ_linear_objective}
	\delta J(\boldsymbol{\theta}^*)=\boldsymbol{C}\tran\delta\boldsymbol{x}_{1:T+1}+\boldsymbol{D}\tran\delta\boldsymbol{u}_{0:T},
	\end{equation}
	with 
	\begin{equation}
	\boldsymbol{C}=\begin{bmatrix}
	\boldsymbol{\Phi}_x\boldsymbol{\theta}^*\\
	\frac{\partial  {h}\tran}{\partial \boldsymbol{x}^{\boldsymbol{\theta}_k}_{T+1}}
	\end{bmatrix}\quad\text{and}\quad
	\boldsymbol{D}=\begin{bmatrix}
	\boldsymbol{\Phi}_u\boldsymbol{\theta}^*\\
	\frac{\partial  \boldsymbol{\phi}\tran}{\partial \boldsymbol{u}^{\boldsymbol{\theta}_k}_{T}}\boldsymbol{\theta}^*
	\end{bmatrix},
	\end{equation}
	with 
	$\boldsymbol{\Phi}_x$ and $\boldsymbol{\Phi}_u$  defined in (\ref{equ_rm}). Since $\boldsymbol{A}$ is always invertible (due to full rank), we solve for $\delta\boldsymbol{x}_{1:T+1}$ from (\ref{equ_linear_dynamics_mat}) and  submit it  to (\ref{equ_linear_objective}), yielding 
	\begin{align}
	\delta J(\boldsymbol{\theta}^*)&=\boldsymbol{C}\tran\delta\boldsymbol{x}_{1:T+1}+\boldsymbol{D}\tran\delta\boldsymbol{u}_{0:T},\nonumber\\
	&=\Big(\boldsymbol{C}\tran(\boldsymbol{A}^{-1})\tran\boldsymbol{B}\tran+\boldsymbol{D}\tran\Big)\delta\boldsymbol{u}_{0:T}.
	\end{align}
	Thus, we have 
	\begin{equation}\label{equ_gradient_appendix_1}
	\nabla J(\boldsymbol{u}^{\boldsymbol{\theta}_k}_{0:T}, \boldsymbol{\theta}^*)=\boldsymbol{B}\boldsymbol{A}^{-1}\boldsymbol{C}+\boldsymbol{D}.
	\end{equation}
	The above (\ref{equ_gradient_appendix_1}) can be further written as
	\begin{align}\label{equ_gradient_appendix_2}
	\nabla &J(\boldsymbol{u}^{\boldsymbol{\theta}_k}_{0:T}, \boldsymbol{\theta}^*)=\boldsymbol{B}\boldsymbol{A}^{-1}\boldsymbol{C}+\boldsymbol{D}\nonumber\\
	&=\begin{bmatrix}
	\boldsymbol{F}_u & \boldsymbol{0}\\
	\boldsymbol{0} & \small\frac{\partial\boldsymbol{f}\tran}{\partial\boldsymbol{u}^{\boldsymbol{\theta}_k}_{T}}
	\end{bmatrix}\begin{bmatrix}
	\boldsymbol{F}_x & -\boldsymbol{V}\\
	\boldsymbol{0}  & I
	\end{bmatrix}^{-1}\begin{bmatrix}
	\boldsymbol{\Phi}_x\boldsymbol{\theta}^*\\
	\frac{\partial  {h}\tran}{\partial \boldsymbol{x}^{\boldsymbol{\theta}_k}_{T\text{+}1}}
	\end{bmatrix}+\begin{bmatrix}
	\boldsymbol{\Phi}_u\boldsymbol{\theta}^*\\
	\frac{\partial  \boldsymbol{\phi}\tran}{\partial \boldsymbol{u}^{\boldsymbol{\theta}_k}_{T}}\boldsymbol{\theta}^*
	\end{bmatrix}\nonumber\\
	&=\begin{bmatrix}
	\boldsymbol{F}_u\boldsymbol{F}_x^{-1}\boldsymbol{\Phi}_x\boldsymbol{\theta}^*{+}\boldsymbol{\Phi}_u\boldsymbol{\theta}^*{+}\boldsymbol{F}_u\boldsymbol{F}_x^{-1}\boldsymbol{V}\frac{\partial h\tran}{\partial \boldsymbol{x}^{\boldsymbol{\theta}_k}_{T+1}}\\
	\frac{\partial  \boldsymbol{\phi}\tran}{\partial \boldsymbol{u}^{\boldsymbol{\theta}_k}_{T}}\boldsymbol{\theta}^*{+}\frac{\partial\boldsymbol{f}\tran}{\partial \boldsymbol{u}^{\boldsymbol{\theta}_k}_T}\frac{\partial h\tran}{\partial \boldsymbol{x}^{\boldsymbol{\theta}_k}_{T+1}}
	\end{bmatrix}, 
	\end{align}
	where we have used  Schur complement   to compute the inverse of the block matrix $\boldsymbol{A}$. By the definitions in  (\ref{equ_matH}), (\ref{equ_gradient_appendix_2}) becomes
	\begin{multline}\label{equ_gradient}
	{\nabla J(\boldsymbol{u}^{\boldsymbol{\theta}_k}_{0:T}, \boldsymbol{\theta}^*)}=\boldsymbol{H}_1(\boldsymbol{x}^{\boldsymbol{\theta}_k}_{0:T\text{+}1},\boldsymbol{u}^{\boldsymbol{\theta}_k}_{0:T})\boldsymbol{\theta}^*\\+\boldsymbol{H}_2(\boldsymbol{x}^{\boldsymbol{\theta}_k}_{0:T\text{+}1},\boldsymbol{u}^{\boldsymbol{\theta}_k}_{0:T})
	\nabla h( \boldsymbol{x}^{\boldsymbol{\theta}_k}_{T+1}).
	\end{multline}
	
	\medskip
	
	Substituting (\ref{equ_gradient}) into (\ref{equ_assumption_modify})  and also considering the definitions  in (\ref{equ_hk}), we obtain
	\begin{equation}
	\left\langle -\nabla J(\boldsymbol{u}_{0:T}^{\boldsymbol{\theta}_k},\boldsymbol{\theta}^*),\,\,{\boldsymbol{\bar{a}}_k} \right\rangle=-\left\langle\boldsymbol{h}_k,\boldsymbol{\theta}^* \right\rangle -b_k>0,
	\end{equation}
	which leads to (\ref{equ_assumption2}). This completes the proof. 
\end{proof}
We have the following remarks on the intuition of Lemma~\ref{lemma_correctionEquivalent}.

\begin{remark}
	The key result of   Lemma \ref{lemma_correctionEquivalent} is the linear inequality (\ref{equ_assumption2}), which is associated with a hyperplane $\left\langle\boldsymbol{h}_k,\boldsymbol{\theta} \right\rangle +b_k=0$. The   parameters  $\boldsymbol{h}_k$ and $b_k$ of this hyperplane, given in (\ref{equ_hk}), are both known and determined by      human directional correction $\boldsymbol{\bar{a}}_{k}$ and robot  current trajectory   $\boldsymbol{\xi}_{\boldsymbol{\theta}_k}$. In other words,  Lemma \ref{lemma_correctionEquivalent} states that given robot  motion $\boldsymbol{\xi}_{\boldsymbol{\theta}_k}$ and the human  directional correction  $\boldsymbol{\bar{a}}_{k}$, one can write  a hyperplane  $\left\langle\boldsymbol{h}_k,\boldsymbol{\theta} \right\rangle +b_k=0$ and a linear inequality  (\ref{equ_assumption2}) for the unknown  $\boldsymbol{\theta}^*$.
\end{remark}

\begin{remark}
	Obtaining  $\boldsymbol{h}_k$ and~$b_k$ requires computing the matrices	$\small\boldsymbol{H}_1(\boldsymbol{x}^{\boldsymbol{\theta}_k}_{0:T\text{+}1},\boldsymbol{u}^{\boldsymbol{\theta}_k}_{0:T})$ and $\small\boldsymbol{H}_2(\boldsymbol{x}^{\boldsymbol{\theta}_k}_{0:T\text{+}1},\boldsymbol{u}^{\boldsymbol{\theta}_k}_{0:T})$ in (\ref{equ_matH}). In our previous work  \cite{jin2018inverse}, $\small\boldsymbol{H}_1(\boldsymbol{x}^{\boldsymbol{\theta}_k}_{0:T\text{+}1},\boldsymbol{u}^{\boldsymbol{\theta}_k}_{0:T})$ and $\small\boldsymbol{H}_2(\boldsymbol{x}^{\boldsymbol{\theta}_k}_{0:T\text{+}1},\boldsymbol{u}^{\boldsymbol{\theta}_k}_{0:T})$  are called the `recovery matrix'. The Lemma 1 of \cite{jin2018inverse} has shown that the `recovery matrix'  can be iteratively computed by  integrating each  point  $(\boldsymbol{x}_t^{\boldsymbol{\theta}_k}, \boldsymbol{u}_t^{\boldsymbol{\theta}_k})$, $t=0,1,...,T$. This iterative property of the recovery matrix facilitates the computation of  $\boldsymbol{H}_1$ and $\boldsymbol{H}_2$ by avoiding  the inverse of $\boldsymbol{F}_x$ in (\ref{equ_matH}),  significantly reducing  the computational burden. Please refer to Lemma 1 of \cite{jin2018inverse} for the iteration formula. 
\end{remark}

\subsection{Outline of the Proposed  Algorithm}

With Lemma \ref{lemma_correctionEquivalent}, we are ready to develop the main algorithm to learn $\boldsymbol{\theta}^*$.  At each  iteration $k$, we maintain a \emph{weight search space}, denoted by  $\boldsymbol{\Omega}_k\subset\mathbb{R}^{r}$,   such that $\boldsymbol{\theta}^*\in\boldsymbol{\Omega}_k$ and $\boldsymbol{\theta}_k\in\boldsymbol{\Omega}_k$ for all $k=1,2,3,...$. This $\boldsymbol{\Omega}_k$ can be thought of as the possible location of $\boldsymbol{\theta}^*$, and $\boldsymbol{\theta}_k$ as
the current  guess to $\boldsymbol{\theta}^*$.
Rather than a direct rule to guide $\boldsymbol{\theta}_{k}$ towards $\boldsymbol{\theta}^*$, we will develop a rule to update $\boldsymbol{\Omega}_k$ to $\boldsymbol{\Omega}_{k+1}$ such that a useful  measure of the size of  $\boldsymbol{\Omega}_k$ will converge to 0. With this high-level idea, we outline the proposed main algorithm below.

\begin{tcolorbox}[title=\textbf{Main Algorithm (Outline)},left=0.5mm, right=0.7mm]
 \emph{Initialize the weight search space $\boldsymbol{\Omega}_0$ to be 
	\begin{equation}\label{equ_linftyball}
	\boldsymbol{\Omega}_0=\{ \boldsymbol{\theta}\in\mathbb{R}^r \,\,| \,\, 
	-\ubar{c}_i\leq  \boldsymbol{\theta}[i]\leq \bar{c}_i, \,\, i=1,2,...,r
	\},
	\end{equation} where  constants $\ubar{c}_i\geq 0$ and $\bar{c}_i\geq 0$   denote the lower bound and upper bounds of the $i$th entry $\boldsymbol{\theta}[i]$ in $\boldsymbol{\theta}$, respectively. Here, $\ubar{c}_i$ and $\bar{c}_i$ can be chosen large enough such that  $\boldsymbol{\theta}^*\in\boldsymbol{\Omega}_0$. At   iteration $k=1,2,...,$ the learning  proceeds  with the following three steps:} 
\smallskip
\begin{itemize}[leftmargin=35pt,font=\itshape]
	\setlength\itemsep{0.6em}
	\item[\textbf{Step 1:}] \emph{Choose a weight vector  guess $\boldsymbol{\theta}_k$ from the current weight search space $\boldsymbol{\Omega}_{k-1}$, i.e., $\boldsymbol{\theta}_k\in \boldsymbol{\Omega}_{k-1}$(we will discuss how to choose  $\boldsymbol{\theta}_k\in \boldsymbol{\Omega}_{k-1}$ in Section \ref{section_results}).}
	
	\item[\textbf{Step 2:}] \emph{The robot restarts and plans its motion trajectory $\boldsymbol{\xi}_{\boldsymbol{\theta}_k}$ by solving a trajectory optimization with   the cost function $J(\boldsymbol{\theta}_k)$ in (\ref{equ_objective}) and the dynamics in (\ref{equ_dynamics}). While the robot is  executing the plan  $\boldsymbol{\xi}_{\boldsymbol{\theta}_k}$, a human applies (averaged)  directional correction {$\boldsymbol{\bar{a}}_{k}$} in (\ref{equ_assumption_modify.abar}). Then, a hyperplane $\left\langle\boldsymbol{h}_k,\boldsymbol{\theta} \right\rangle+b_k=0$ is computed in (\ref{equ_assumption2})-(\ref{equ_hk}).}
	
	\item[\textbf{Step 3:}] \emph{Update the weight search space $\boldsymbol{\Omega}_{k-1}$ to $\boldsymbol{\Omega}_{k}$ via
		\begin{equation}\label{equ_updateOmega}
		\boldsymbol{\Omega}_{k}=\boldsymbol{\Omega}_{k-1}\cap\left\{\boldsymbol{\theta}\in\mathbb{R}^r\,|\, \left\langle\boldsymbol{h}_k,\boldsymbol{\theta} \right\rangle+b_k<0\right\}.
		\end{equation}}
\end{itemize}

\end{tcolorbox}

 We provide some remarks to the above outline of the proposed algorithm. For the initialization  (\ref{equ_linftyball}), we allow different entries of $\boldsymbol{\theta}$ to have their own lower and upper bounds, which may come from prior knowledge. Simply but not necessarily, one could also initialize 
\begin{equation}
\boldsymbol{\Omega}_0=\{\boldsymbol{\theta}\in\mathbb{R}^r \,\,| \,\, \norm{\boldsymbol{\theta}}_{\infty}\leq  R \},
\end{equation} where 
\begin{equation}\label{equ_radius_R}
R=\max \{ \ubar{c}_i, \ \bar{c}_i, \,\, i=1,\cdots,r \}.
\end{equation}
In Step 1, a weight vector guess $\boldsymbol{\theta}_k$ is chosen from $\boldsymbol{\Omega}_{k-1}$. We will (in (\ref{eq_prop2}) in Lemma \ref{lemma3})  show that the true  $\boldsymbol{\theta}^*$ \emph{always} lies in $\boldsymbol{\Omega}_k$ for any  $k=1,2,3,...$. Thus, we will expect $\boldsymbol{\theta}_k $ to be closer to $\boldsymbol{\theta}^*$ if  a size measure of $\boldsymbol{\Omega}_k$  gets smaller. In fact,  the weight search space $\boldsymbol{\Omega}_k$ is \emph{always non-increasing} as $\boldsymbol{\Omega}_k\subseteq\boldsymbol{\Omega}_{k-1}$ due to (\ref{equ_updateOmega}) in Step 3. In the next Section \ref{section_results}, we will  further focus on how to  pick $\boldsymbol{\theta}_k\in \boldsymbol{\Omega}_{k-1}$ such that the size  of $\boldsymbol{\Omega}_k$ is  \emph{strictly reduced} as $k$ increases. In Step 2, the robot trajectory $\boldsymbol{\xi}_{\boldsymbol{\theta}_k}$  is planned by solving a trajectory optimization with the cost function $J(\boldsymbol{\theta}_k)$ in (\ref{equ_objective}) and  the dynamics constraint   (\ref{equ_dynamics}). This can be done by any available trajectory optimization methods, e.g., \cite{li2004iterative} and existing solvers \cite{Andersson2019}. After a human applies a directional correction $\boldsymbol{\bar{a}}_{t_k}$ to the executed $\boldsymbol{\xi}_{\boldsymbol{\theta}_k}$, a hyperplane $\left\langle\boldsymbol{h}_k,\boldsymbol{\theta} \right\rangle+b_k=0$  can be computed via (\ref{equ_assumption2})-(\ref{equ_hk}), which will be used to   update the search space by (\ref{equ_updateOmega}) in Step 3. The detailed implementation of the above main algorithm  will be described in the next section.

Given the above proposed main algorithm, we next present the following important lemma.

\begin{lemma}\label{lemma3}
	Under the proposed main algorithm, one has
	\begin{equation}\label{eq_prop1}
	\left\langle\boldsymbol{h}_k, \boldsymbol{\theta}_k\right\rangle+b_k=0, \quad \forall \,\, k=1,2,3,...
	\end{equation}
	and 
	\begin{equation}\label{eq_prop2}
	\boldsymbol{\theta}^*\in \boldsymbol{\Omega}_k, \quad \forall\,\, k=1,2,3,...
	\end{equation}
	
\end{lemma}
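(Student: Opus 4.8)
The plan is to reduce both statements to a single affine identity that is already latent in the construction of Lemma~\ref{lemma_correctionEquivalent}. The coefficients $\boldsymbol{h}_k$ and $b_k$ in (\ref{equ_hk}) are assembled from $\boldsymbol{H}_1$, $\boldsymbol{H}_2$, $\nabla h(\boldsymbol{x}_{T+1}^{\boldsymbol{\theta}_k})$ and $\boldsymbol{\bar{a}}_k$, all evaluated along the fixed trajectory $\boldsymbol{\xi}_{\boldsymbol{\theta}_k}$; none of them depends on which weight vector is substituted into the cost $J$. Since the running cost $\boldsymbol{\theta}^\prime\boldsymbol{\phi}$ is linear in the weight and the terminal cost $h$ carries no weight, the reduced gradient of $J$ with respect to $\boldsymbol{u}_{0:T}$ taken along $\boldsymbol{\xi}_{\boldsymbol{\theta}_k}$ is affine in its weight argument, and the proof of Lemma~\ref{lemma_correctionEquivalent} gives $\nabla J(\boldsymbol{u}_{0:T}^{\boldsymbol{\theta}_k},\boldsymbol{\theta})=\boldsymbol{H}_1\boldsymbol{\theta}+\boldsymbol{H}_2\nabla h(\boldsymbol{x}_{T+1}^{\boldsymbol{\theta}_k})$ for every $\boldsymbol{\theta}$. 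Contracting with $\boldsymbol{\bar{a}}_k$ and using $\boldsymbol{h}_k=\boldsymbol{H}_1^\prime\boldsymbol{\bar{a}}_k$ and $b_k=\boldsymbol{\bar{a}}_k^\prime\boldsymbol{H}_2\nabla h(\boldsymbol{x}_{T+1}^{\boldsymbol{\theta}_k})$, I obtain the key identity
\[
\langle \boldsymbol{h}_k, \boldsymbol{\theta} \rangle + b_k = \langle \nabla J(\boldsymbol{u}_{0:T}^{\boldsymbol{\theta}_k},\boldsymbol{\theta}), \boldsymbol{\bar{a}}_k \rangle \qquad \text{for all } \boldsymbol{\theta},
\]
which says that evaluating the hyperplane functional at any weight equals the projection of the reduced gradient of $J(\boldsymbol{\theta})$ (at the fixed trajectory) onto the augmented correction.

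For (\ref{eq_prop1}) I would specialize this identity to $\boldsymbol{\theta}=\boldsymbol{\theta}_k$. By Step~2 of the main algorithm, $\boldsymbol{\xi}_{\boldsymbol{\theta}_k}$ minimizes $J(\boldsymbol{\theta}_k)$ over dynamics-feasible trajectories, so the first-order stationarity condition forces the reduced gradient to vanish, $\nabla J(\boldsymbol{u}_{0:T}^{\boldsymbol{\theta}_k},\boldsymbol{\theta}_k)=\boldsymbol{0}$. Substituting into the key identity immediately gives $\langle \boldsymbol{h}_k,\boldsymbol{\theta}_k\rangle+b_k=0$, i.e. the current guess $\boldsymbol{\theta}_k$ always lies on the generated hyperplane.

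For (\ref{eq_prop2}) I would argue by induction on $k$. The base case $\boldsymbol{\theta}^*\in\boldsymbol{\Omega}_0$ holds by the initialization (\ref{equ_linftyball}), whose bounds $\ubar{r}_i,\bar{r}_i$ are chosen large enough to contain $\boldsymbol{\theta}^*$. Assuming $\boldsymbol{\theta}^*\in\boldsymbol{\Omega}_{k-1}$, Lemma~\ref{lemma_correctionEquivalent} applied to the correction $\boldsymbol{a}_{t_k}$ (which satisfies (\ref{equ_assumption})) gives $\langle \boldsymbol{h}_k,\boldsymbol{\theta}^*\rangle+b_k<0$, so $\boldsymbol{\theta}^*$ lies in the open half-space appearing in the update (\ref{equ_updateOmega}); intersecting that half-space with $\boldsymbol{\Omega}_{k-1}$ therefore keeps $\boldsymbol{\theta}^*$ inside, i.e. $\boldsymbol{\theta}^*\in\boldsymbol{\Omega}_k$. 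This closes the induction and yields (\ref{eq_prop2}) for all $k$.

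Together the two claims have a clean geometric reading that the key identity makes precise: the hyperplane passes through the current guess $\boldsymbol{\theta}_k$ (part one) while strictly separating the unknown target $\boldsymbol{\theta}^*$ to its negative side (part two), which is exactly the cutting-plane geometry the algorithm relies on. I expect the only genuinely delicate point to be justifying that the same matrices $\boldsymbol{H}_1,\boldsymbol{H}_2$ represent the reduced gradient of $J(\boldsymbol{\theta})$ for \emph{every} weight $\boldsymbol{\theta}$, and not merely for $\boldsymbol{\theta}^*$ as stated in Lemma~\ref{lemma_correctionEquivalent}; this hinges on the linearity of $J$ in $\boldsymbol{\theta}$ and on $\boldsymbol{H}_1,\boldsymbol{H}_2$ being functions of the fixed trajectory only, so I would state it explicitly before specializing to $\boldsymbol{\theta}_k$. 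Everything else is a direct substitution plus a one-line induction.
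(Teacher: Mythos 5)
Your proposal is correct and follows essentially the same route as the paper: the paper likewise establishes $\nabla J(\boldsymbol{u}_{0:T}^{\boldsymbol{\theta}_k},\boldsymbol{\theta}_k)=\boldsymbol{H}_1\boldsymbol{\theta}_k+\boldsymbol{H}_2\nabla h(\boldsymbol{x}_{T+1}^{\boldsymbol{\theta}_k})$ by repeating the derivation of Lemma~\ref{lemma_correctionEquivalent} with $\boldsymbol{\theta}_k$ in place of $\boldsymbol{\theta}^*$ (your ``key identity'' is just this fact stated for all weights at once), invokes first-order stationarity to get $\left\langle\boldsymbol{h}_k,\boldsymbol{\theta}_k\right\rangle+b_k=0$, and proves (\ref{eq_prop2}) by the same induction using (\ref{equ_assumption2}) and the update (\ref{equ_updateOmega}). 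The point you flag as delicate---that $\boldsymbol{H}_1,\boldsymbol{H}_2$ represent the reduced gradient for every weight, by linearity of $J$ in $\boldsymbol{\theta}$ and dependence of the matrices on the fixed trajectory only---is exactly what the paper relies on implicitly, so making it explicit is a presentational improvement rather than a different argument.
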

\noindent
A proof of Lemma \ref{lemma3} is given in Appendix \ref{appendix_2}. Lemma \ref{lemma3} has the following   intuitive geometric explanations. First,  (\ref{eq_prop1}) says that the current guess $\boldsymbol{\theta}_k$ is always located on the current hyperplane $ \left\langle\boldsymbol{h}_k,\boldsymbol{\theta} \right\rangle+b_k=0$. Second, (\ref{eq_prop2}) says that although the proposed algorithm directly updates the weight search space  $\boldsymbol{\Omega}_k$ via (\ref{equ_updateOmega}), the true (but unknown) weight vector $\boldsymbol{\theta}^*$  is always contained in the current  $\boldsymbol{\Omega}_k$. Thus, intuitively, the smaller the search space $\boldsymbol{\Omega}_k$ is, the closer $  \boldsymbol{\theta}^* $ 
is to  $ \boldsymbol{\theta}_k $.

\subsection{Geometric Interpretation}\label{section_iterpretation}

In this part, we will provide an interpretation of the proposed main algorithm through a geometric perspective. For simplicity of illustrations, we assume $\boldsymbol{\theta}\in\mathbb{R}^2$ here. 

\begin{figure}[h]
	\hspace{-2pt}
	\begin{subfigure}[b]{0.45\linewidth}
		\includegraphics[width=\linewidth]{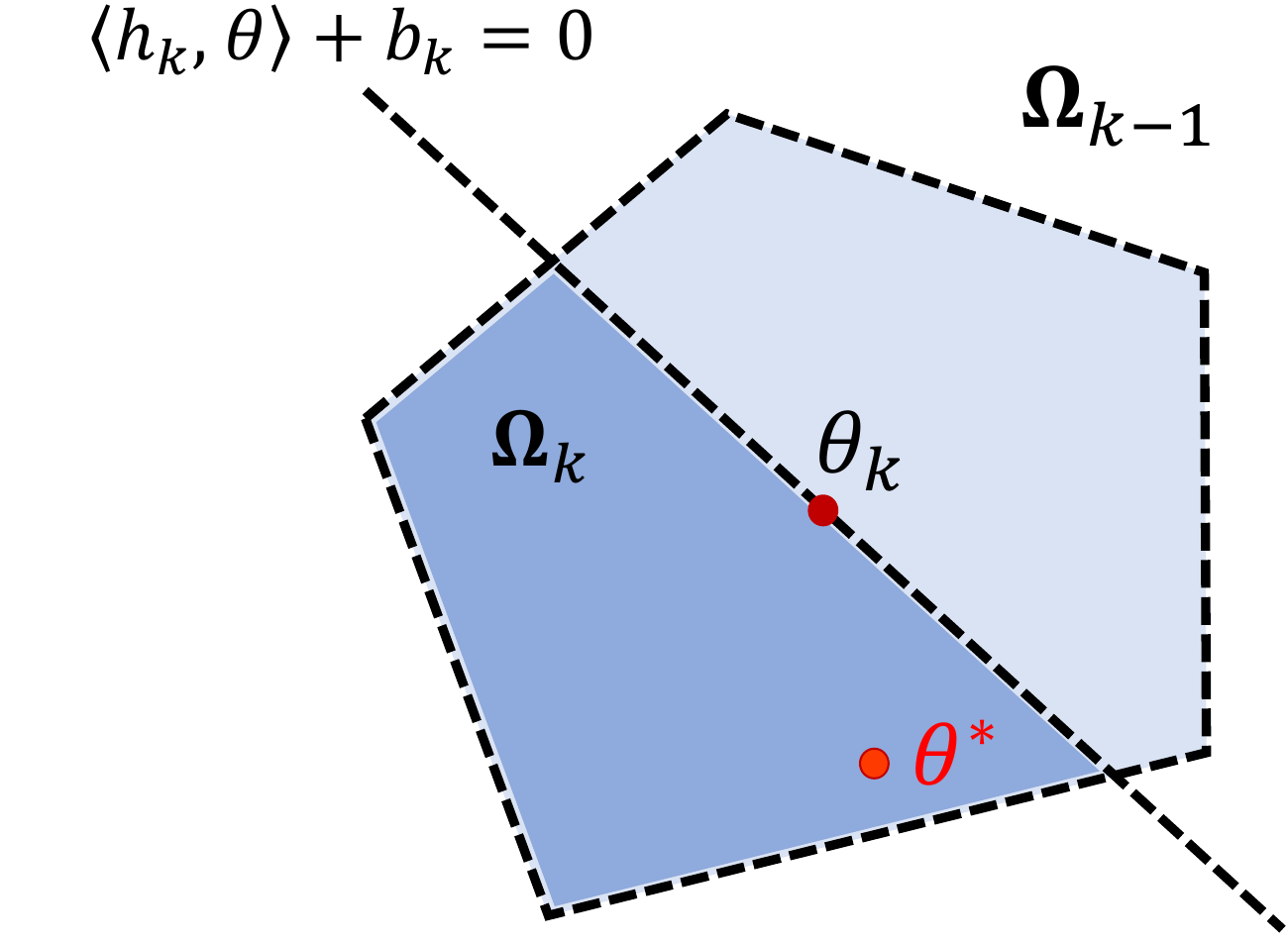}
		\caption{At $k$-th iteration}
		\label{fig_interpretation.1}
	\end{subfigure}
	\begin{subfigure}[b]{.54\linewidth}
		\includegraphics[width=\linewidth]{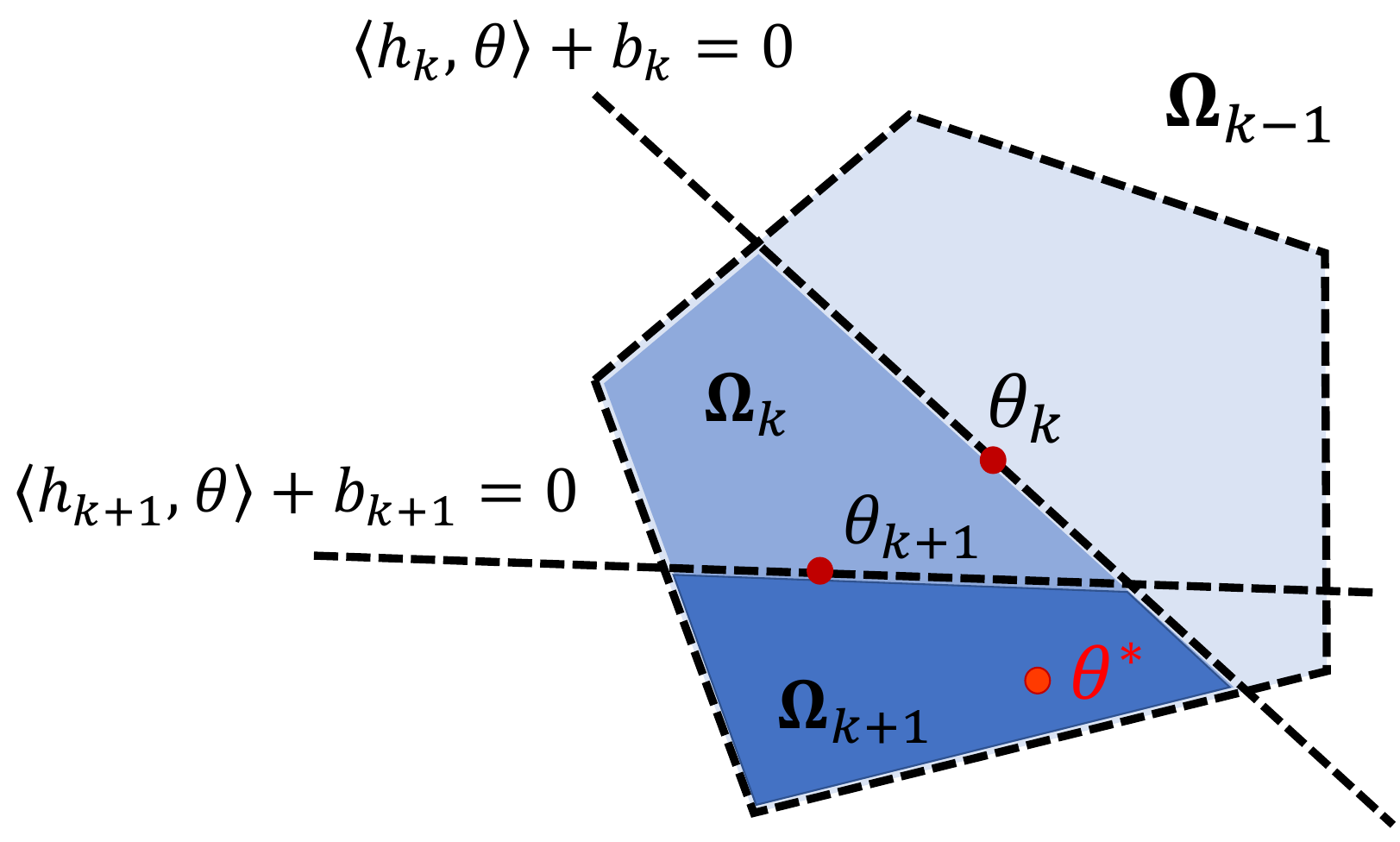}
		\caption{At $(k+1)$-th iteration}
		\label{fig_interpretation.2}
	\end{subfigure}
	\caption{Illustration of updating $\boldsymbol{\Omega}_{k}$.}
	\label{fig_interpretation}
\end{figure}

At the  $k$th iteration shown in Fig. \ref{fig_interpretation.1}, by Step 1 of the main algorithm, a  weight vector guess $\boldsymbol{\theta}_k$ (red dot)  is picked from the  current   search space $\boldsymbol{\Omega}_{k-1}$ (light blue region), i.e., $\boldsymbol{\theta}_k\in\boldsymbol{\Omega}_{k-1}$. By Step 2, we  obtain a hyperplane $\left\langle\boldsymbol{h}_k, \boldsymbol{\theta}\right\rangle+b_k=0$ (black dashed line), which cuts through the weight search space $\boldsymbol{\Omega}_{k-1}$  into two portions. By  (\ref{eq_prop1}) of Lemma \ref{lemma3},  we know that  $\boldsymbol{\theta}_k$ always lies on this hyperplane because  $\left\langle\boldsymbol{h}_k, \boldsymbol{\theta}_k\right\rangle+b_k=0$, as shown in Fig.  \ref{fig_interpretation.1}. 
By Step 3 of  the main algorithm, we  only keep one of the two cut portions,  which is the interaction  between $\boldsymbol{\Omega}_{k-1}$ and the half space $\left\langle\boldsymbol{h}_k, \boldsymbol{\theta}\right\rangle+b_k<0$, and the kept portion will be used as the new  search space  for the next  iteration, that is, $\boldsymbol{\Omega}_{k}=\boldsymbol{\Omega}_{k-1}\cap\left\{\boldsymbol{\theta}\in\mathbb{R}^r\,|\, \left\langle\boldsymbol{h}_k,\boldsymbol{\theta} \right\rangle+b_k<0 \right\}$, as shown by the  blue region in Fig. \ref{fig_interpretation.1}. The above procedure repeats  at the next iteration $k+1$ in  Fig. \ref{fig_interpretation.2}, and  finally produces a smaller  search space $\boldsymbol{\Omega}_{k+1}$, as shown by the darkest blue region in Fig. \ref{fig_interpretation.2}. From (\ref{equ_updateOmega}), one has $\boldsymbol{\Omega}_{0}\supseteq\cdots\boldsymbol{\Omega}_{k-1}\supseteq\boldsymbol{\Omega}_{k}\supseteq\boldsymbol{\Omega}_{k+1}\supseteq\cdots$. Moreover, by (\ref{eq_prop2}) in Lemma \ref{lemma3}, we note that the  true   $\boldsymbol{\theta}^*$ (red point) is always  contained in $\boldsymbol{\Omega}_{k}$ whenever $k$ is.

\begin{figure}[h]
	\hspace{-03pt}
	\begin{subfigure}[b]{0.53\linewidth}
		\includegraphics[width=\linewidth]{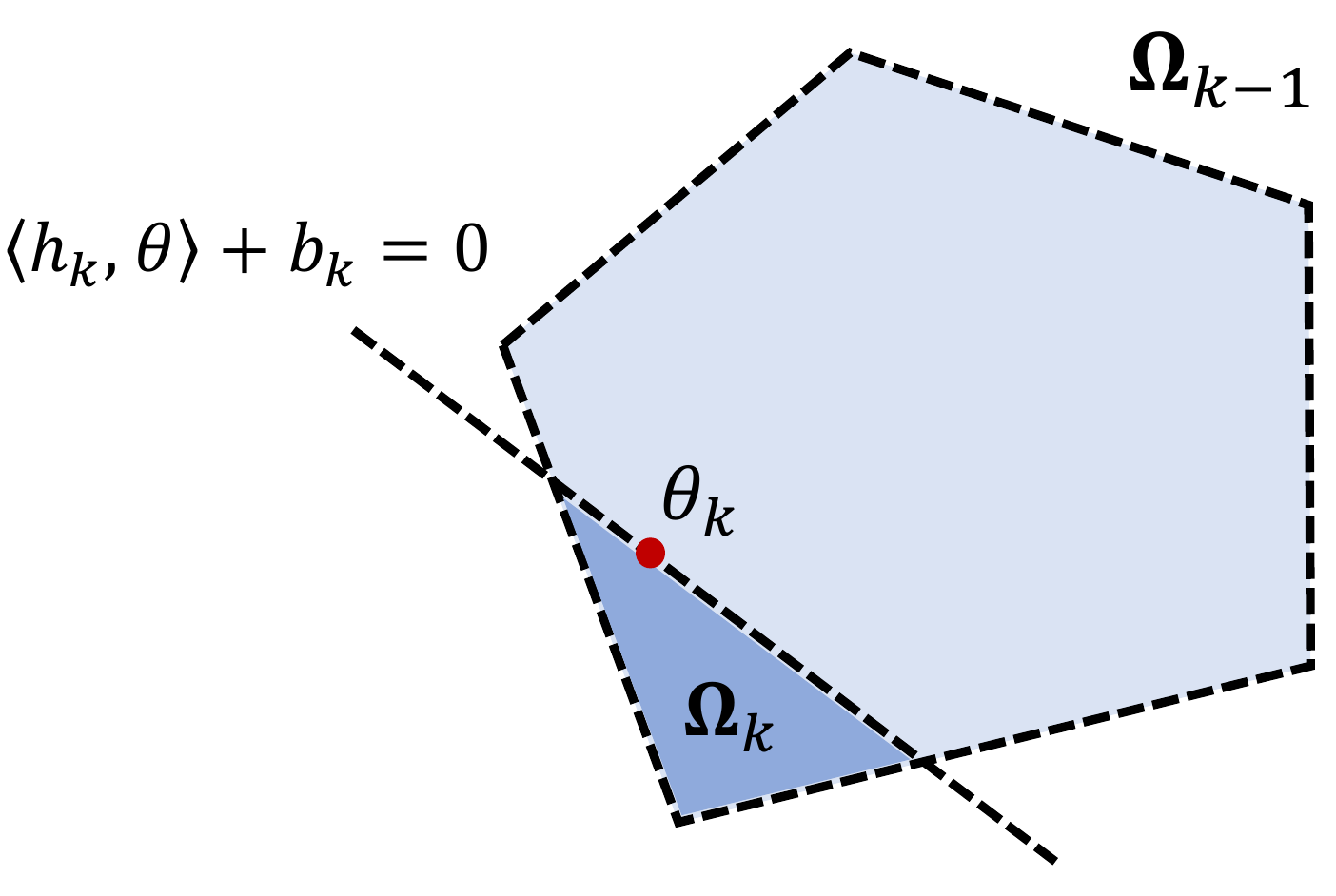}
		\caption{A large cut  from  $\boldsymbol{\Omega}_{k-1}$}
		\label{fig_interpretation2.1}
	\end{subfigure}
	\hfill
	\begin{subfigure}[b]{.40\linewidth}
		\includegraphics[width=\linewidth]{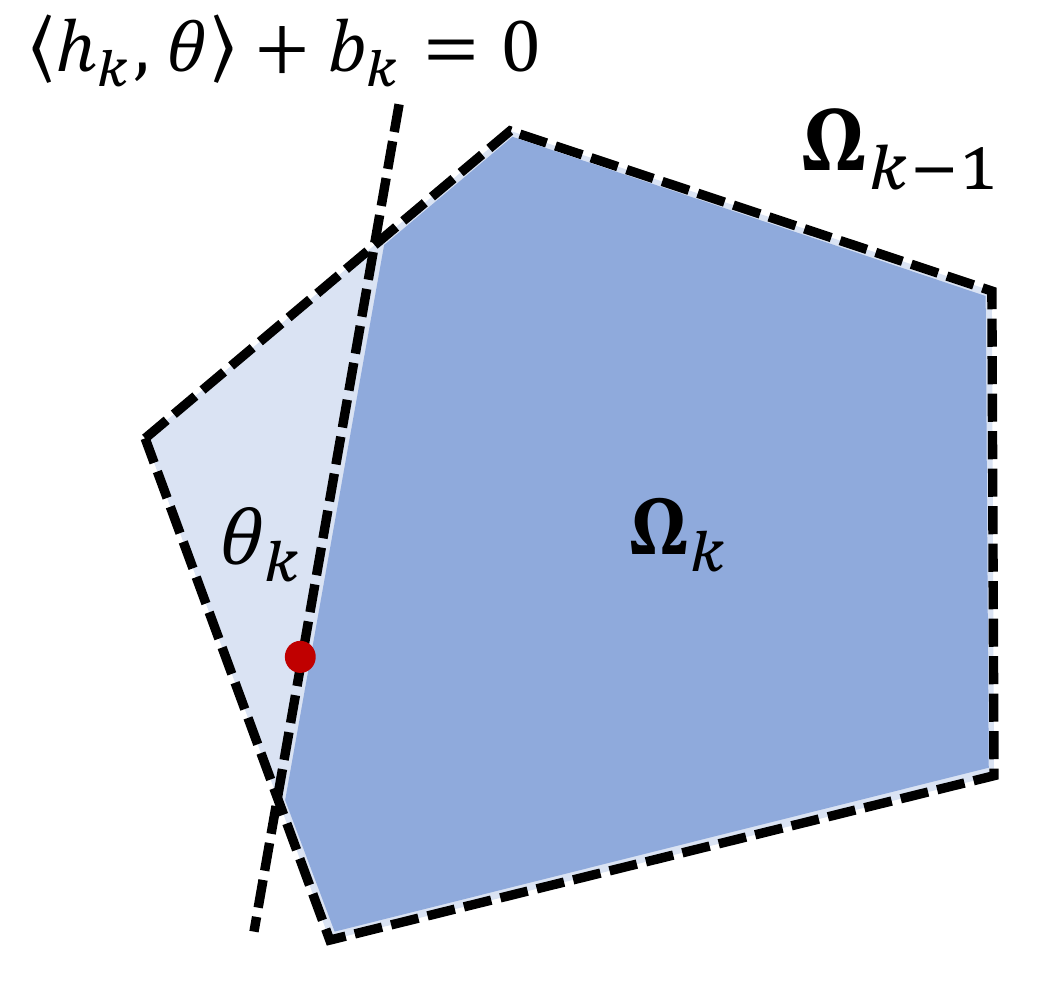}
		\caption{A small cut from  $\boldsymbol{\Omega}_{k-1}$}
		\label{fig_interpretation2.2}
	\end{subfigure}
	\caption{Illustration of how    different human directional corrections $\boldsymbol{\bar{a}}_{k}$ affect the cut of  the weight search space $\boldsymbol{\Omega}_{k-1}$.}
	\label{fig_interpretation2}
\end{figure}

Besides the above geometric illustration, we  also  have the following observations:

(1) The key idea of the proposed main algorithm is to cut and remove the weight search space $\boldsymbol{\Omega}_{k-1}$ as each human directional correction $\boldsymbol{\bar{a}}_{k}$ becomes available. Thus, we always expect that $\boldsymbol{\Omega}_{k-1}$ can quickly shrink  as $k$ increases, because thereby we can say that the     guess $\boldsymbol{\theta}_{k}$ is close  to the true weight vector $\boldsymbol{\theta}^*$. As shown in Fig. \ref{fig_interpretation},  the reduction rate of $\boldsymbol{\Omega}_{k-1}$  depends on two factors: the human directional correction  $\boldsymbol{\bar{a}}_{k}$,  and  how to choose   $\boldsymbol{\theta}_k\in\boldsymbol{\Omega}_{k-1}$, both discussed below.

(2)  From  (\ref{equ_hk}), we note that  human directional correction  $\boldsymbol{\bar{a}}_{k}$ determines $\boldsymbol{h}_{k}$, which is the normal vector of  the hyperplane $\left\langle\boldsymbol{h}_k, \boldsymbol{\theta}\right\rangle+b_k=0$. When fixing $\boldsymbol{\theta}_k$,  we can think of the hyperplane rotates around $\boldsymbol{\theta}_k$  with  different    {$\boldsymbol{\bar{a}}_{k}$},   leading to different removals of   $\boldsymbol{\Omega}_{k-1}$. This can be illustrated by comparing Fig. \ref{fig_interpretation2.1} and Fig. \ref{fig_interpretation2.2}.

(3)  The choice of the guess $\boldsymbol{\theta}_k\in\boldsymbol{\Omega}_{k-1}$ defines the specific location of the hyperplane $\left\langle\boldsymbol{h}_k, \boldsymbol{\theta}\right\rangle+b_k=0$, because the hyperplane is always passing through  $\boldsymbol{\theta}_k$  by (\ref{eq_prop1}) in Lemma \ref{lemma3}. Thus,  $\boldsymbol{\theta}_k$  also affects how $\boldsymbol{\Omega}_{k-1}$ is cut and  removed. This can be illustrated by comparing Fig. \ref{fig_interpretation.1} with Fig. \ref{fig_interpretation2.1}.

Based on the above observations, we know that the convergence of the proposed main algorithm is determined by the reduction of the weight search space $\boldsymbol{\Omega}_{k-1} $. This reduction  depends on both  the human directional correction {$\boldsymbol{\bar{a}}_k$}  and the   choice of the  weight vector guess $\boldsymbol{\theta}_k\in \boldsymbol{\Omega}_{k-1}$. In the next section, we will present a way of choosing  $\boldsymbol{\theta}_k$ to guarantee the convergence of the proposed main algorithm.

\section{Algorithm Implementation and Convergence Analysis}\label{section_results}

In this section, we  detail the choice of  $\boldsymbol{\theta}_k\in \boldsymbol{\Omega}_{k-1}$, show the convergence of the proposed algorithm, and present a detailed implementation of the algorithm with a termination criterion.

\subsection{Choice of Current Guess $\boldsymbol{\theta}_k\in \boldsymbol{\Omega}_{k-1}$}

In the proposed  main algorithm, at   iteration $k$,  the weight  search space $\boldsymbol{\Omega}_{k-1}$  is updated using (\ref{equ_updateOmega}), rewritten  below:
$$
\boldsymbol{\Omega}_{k}=\boldsymbol{\Omega}_{k-1}\cap\left\{\boldsymbol{\theta}\in\mathbb{R}^r\,|\, \left\langle\boldsymbol{h}_k,\boldsymbol{\theta} \right\rangle+b_k<0   \right\}.
$$
To show the reduction of a weight search space, it is straightforward to use the  volume measure (length or area for one- or two-dimension cases, respectively) of a (closure)   search space $\boldsymbol{\Omega}_k$, denoted as $\vol(\boldsymbol{\Omega}_k)$. Zero volume implies the convergence of the search space   \cite{boyd2007localization}. By  (\ref{equ_updateOmega}), we know that $\boldsymbol{\Omega}_{k}\subseteq\boldsymbol{\Omega}_{k-1}$ and thus $\vol(\boldsymbol{\Omega}_k)$ is non-increasing. In the following we need to further find a way such that $\vol(\boldsymbol{\Omega}_k)$ is strictly decreasing, i.e., there exists a constant $0\leq \alpha<1$ such that
\begin{equation}\label{equ_volume_reduction}
\vol(\boldsymbol{\Omega}_k)\leq \alpha \vol(\boldsymbol{\Omega}_{k-1}).
\end{equation} 

From Fig. \ref{fig_interpretation2}, 
we observe   that  for a fixed choice of  $\boldsymbol{\theta}_k  \in\boldsymbol{\Omega}_{k-1}$, different human directional corrections $\boldsymbol{\bar{a}}_k$ can lead to different reduction of $\boldsymbol{\Omega}_{k-1}$. For example,   Fig. \ref{fig_interpretation2.1} has
a large volume reduction from $\boldsymbol{\Omega}_{k-1} $ to $\boldsymbol{\Omega}_{k}$  while Fig. \ref{fig_interpretation2.2} leads to a very small  reduction.  Unfortunately, we cannot assume the specific direction of a  human correction  $\boldsymbol{\bar{a}}_k$ (it is the discretion of the  user), but we can choose the current guess $\boldsymbol{\theta}_k$ `smartly'  to avoid the very small  reduction regardless of specific human  corrections.  One intuitive choice of such  $\boldsymbol{\theta}_k$ is at some \emph{center} of the  search space $\boldsymbol{\Omega}_{k-1}$. Thus, we define the following center of the maximum volume ellipsoid (MVE) inscribed in  $\boldsymbol{\Omega}_{k-1}$.

\begin{definition}[Maximum Volume Inscribed Ellipsoid \cite{boyd2004convex}]\label{def_mve}
	Given a compact convex set $\boldsymbol{\Omega}\subset\mathbb{R}^r$, the maximum volume ellipsoid (MVE) inscribed in $\boldsymbol{\Omega}$, defined as $\boldsymbol{E}$,  is denoted by 
	\begin{equation}\label{equ_defmve}
	\boldsymbol{E}=\{ \bar{B}\boldsymbol{\theta}+\boldsymbol{\bar{d}}\,| \,\, 
	\norm{\boldsymbol{\theta}}_2\leq 1  \}. 
	\end{equation}
	Here,  $\bar{B}\in\mathbb{S}^r_{++}$ (i.e., a $r\times r$ positive definite matrix) and $\boldsymbol{\bar{d}}\in\mathbb{R}^r$ is called the center of $\boldsymbol{E}$. $\bar{B}$ and $\boldsymbol{\bar{d}}$ solve the  optimization:
	\begin{equation}\label{equ_mve_general}
	\begin{aligned}
	\max\nolimits_{\boldsymbol{{d}}, {B}\in\mathbb{S}^r_{{++}}}\,\, & \log\det{B}\\
	{s.t.}\,\, & \sup\nolimits_{\norm{\boldsymbol{\theta}}_2\leq 1} \emph{I}_{\boldsymbol{\Omega}}(B\boldsymbol{\theta}+\boldsymbol{d})\leq 0,
	\end{aligned} 
	\end{equation}
	where $\emph{I}_{\boldsymbol{\Omega}}(\boldsymbol{\theta})=0$ for $\boldsymbol{\theta}\in \boldsymbol{\Omega}$ and $\emph{I}_{\boldsymbol{\Omega}}(\boldsymbol{\theta})=\infty$ for $\boldsymbol{\theta}\notin \boldsymbol{\Omega}$.
\end{definition}

By Definition \ref{def_mve},  let $\boldsymbol{E}_k$ denote the MVE inscribed in $\boldsymbol{\Omega}_k$ and $\boldsymbol{d}_k$ denote the center of $\boldsymbol{E}_k$, $k=0,1,...$. For the choice of $\boldsymbol{\theta}_{k+1}\in\boldsymbol{\Omega}_k$, we choose
\begin{equation} \label{eq_thetak}
\boldsymbol{\theta}_{k+1}=  \boldsymbol{d}_{k},
\end{equation}
as illustrated in Fig. \ref{fig_interpretation3}. Other  choices of $\boldsymbol{\theta}_{k+1}$ using other types of centers of the search space  are discussed  in Appendix \ref{appendix_centerchoice}.

\begin{figure}[h]
	\centering
	\begin{subfigure}[b]{0.45\linewidth}
		\includegraphics[width=\linewidth]{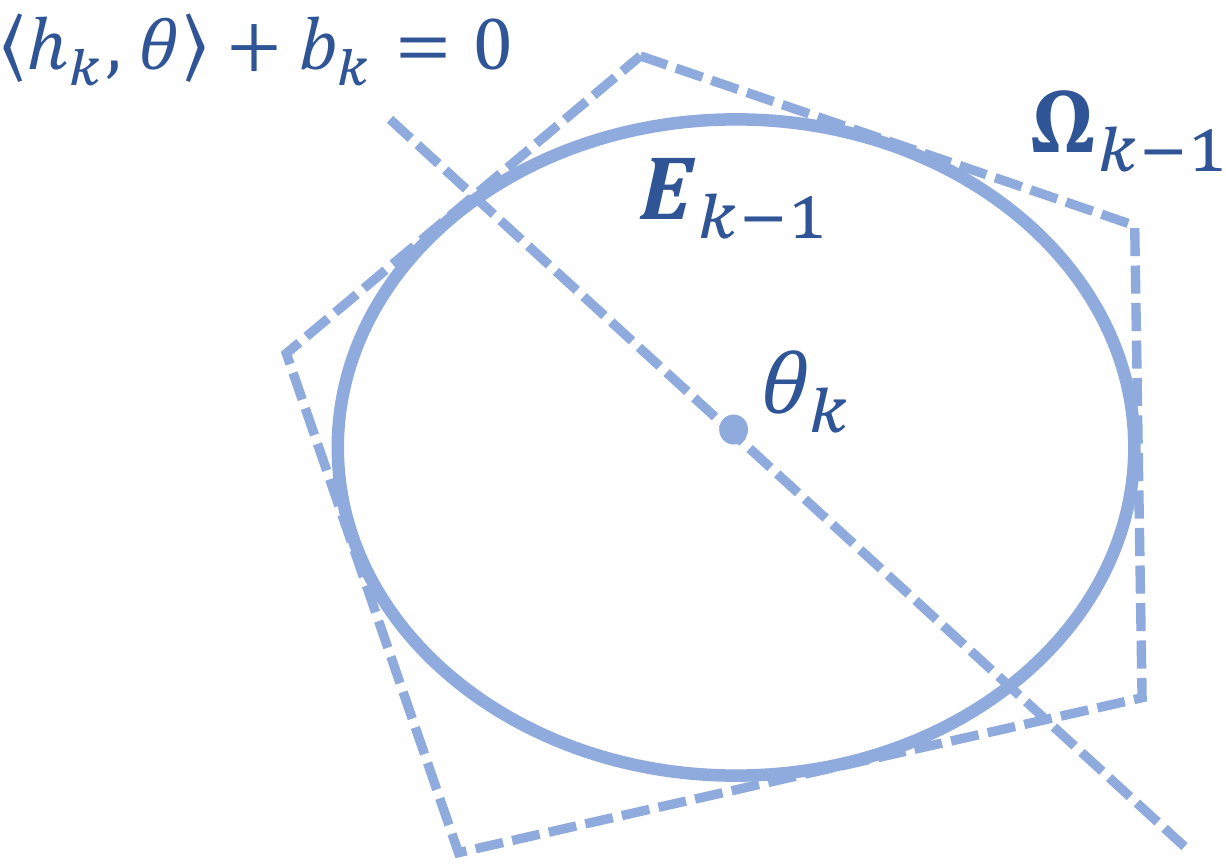}
		\caption{Center of MVE in $\boldsymbol{\Omega}_{k-1}$}
		\label{fig_interpretation3.1}
	\end{subfigure}
	\hspace{5pt}
	\begin{subfigure}[b]{.45\linewidth}
		\includegraphics[width=\linewidth]{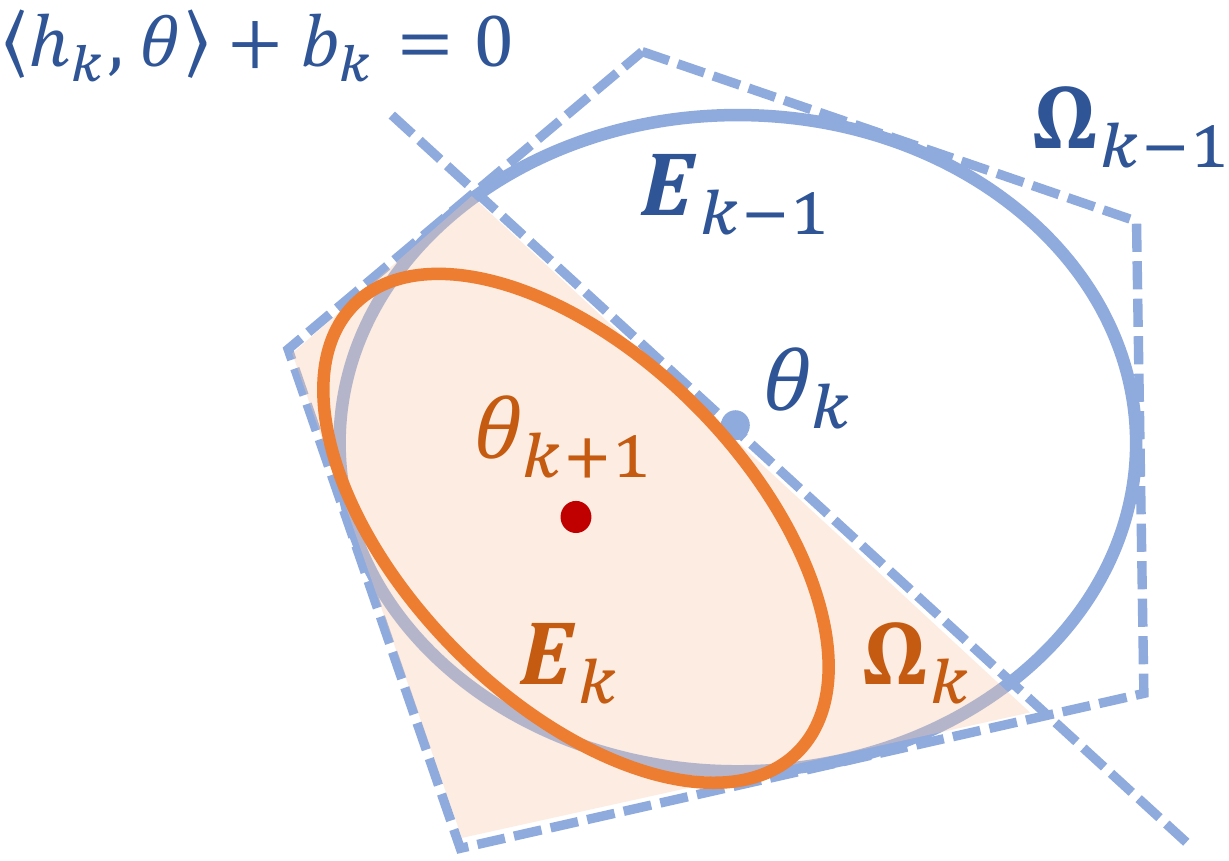}
		\caption{Center of MVE in $\boldsymbol{\Omega}_{k}$}
		\label{fig_interpretation3.2}
	\end{subfigure}
	\caption{Illustration of choosing  $\boldsymbol{\theta}_{k+1}$ as the center of MVE $\boldsymbol{E}_k$ inscribed in the  weight search space $\boldsymbol{\Omega}_k$.}
	\label{fig_interpretation3}
\end{figure}

We now give a computational method to  solve for   $\boldsymbol{d}_{k}$, i.e. the center of MVE $\boldsymbol{E}_k$ inscribed in $\boldsymbol{\Omega}_{k}$. Recall that in the  main algorithm,  the initial $	\boldsymbol{\Omega}_0$ is (\ref{equ_linftyball}), which is  equivalent to
\begin{equation}\label{equ_initialomega}
\boldsymbol{\Omega}_0=\left\{\boldsymbol{\theta}\,\,\,\middle\rvert\, \,
\begin{aligned}
\left\langle\boldsymbol{e}_i,\boldsymbol{\theta} \right\rangle-\bar{c}_i\leq0\\
-\left\langle\boldsymbol{e}_i,\boldsymbol{\theta} \right\rangle-\ubar{c}_i\leq0
\end{aligned},
\,\, \,\,\,  i=1,\cdots, r \right\},
\end{equation}
where $\boldsymbol{e}_i$ is a unit vector with the only $i$th entry as 1. Following the update  (\ref{equ_updateOmega}),  $\boldsymbol{\Omega}_k$ is also a compact polytope, which is
\begin{equation}\label{equ_omegak}
\boldsymbol{\Omega}_k=\left\{
\boldsymbol{\theta}\,\middle\rvert\,
\begin{aligned}
&\left\langle\boldsymbol{e}_i,\boldsymbol{\theta} \right\rangle-\bar{c}_i\leq0, \,\,  i=1,\cdots, r;\\
&-\left\langle\boldsymbol{e}_i,\boldsymbol{\theta}
\right\rangle-\ubar{c}_i\leq0, \,\,  i=1,\cdots, r;\\
&\left\langle\boldsymbol{h}_j,\boldsymbol{\theta} \right\rangle+b_j<0, \,\, j=1,\cdots, k
\end{aligned}
\right\}.
\end{equation}
As a result,  solving  (\ref{equ_mve_general}) for the center $\boldsymbol{d}_{k}$   becomes a convex program \cite{boyd2004convex},  stated in the lemma below.

\begin{lemma}\label{lemma5}
	For a polytope $\boldsymbol{\Omega}_{k}$ in (\ref{equ_omegak}), the center $\boldsymbol{d}_{k}$  of the MVE $\boldsymbol{E}_k$ inscribed in $\boldsymbol{\Omega}_{k}$ can be solved via the following convex optimization: 
	\begin{equation}
	\begin{aligned}\label{equ_mve}
	\min\nolimits_{\boldsymbol{d}, B\in\mathbb{S}^r_{{++}}}\,\,  &-\log\det{B}\\
	{s.t.}& \,\,  \norm{B\boldsymbol{e}_i}_2+\left\langle\boldsymbol{d}, \boldsymbol{e}_i \right\rangle\leq \bar{c}_i,\,\, i{=}1,\cdots r,\\
	& \,\,  \norm{B\boldsymbol{e}_i}_2-\left\langle\boldsymbol{d}, \boldsymbol{e}_i \right\rangle\leq \ubar{c}_i,\,\, i{=}1,\cdots r,\\
	&\,\, \norm{B\boldsymbol{{h}}_j}_2{+}\left\langle\boldsymbol{d}, \boldsymbol{h}_j\right\rangle\leq -b_j,\,\, j{=}1,\cdots k.
	\end{aligned} 
	\end{equation}
\end{lemma}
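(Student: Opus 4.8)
The plan is to reduce the geometric containment requirement of Definition~\ref{def_mve}---that the ellipsoid $\boldsymbol{E}_k$ be inscribed within $\boldsymbol{\Omega}_k$---to the finite family of conic constraints appearing in (\ref{equ_mve}), and then to recognize the resulting problem as convex. I would begin with the objective: maximizing $\log\det B$ in (\ref{equ_mve_general}) is equivalent to minimizing $-\log\det B$, and since the volume of $\boldsymbol{E}_k=\{B\boldsymbol{\theta}+\boldsymbol{\bar{d}}\mid\norm{\boldsymbol{\theta}}_2\leq1\}$ is proportional to $\det B$, this objective indeed selects the maximum-volume inscribed ellipsoid.

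Next I would translate the containment constraint. The condition $\sup_{\norm{\boldsymbol{\theta}}_2\leq1}I_{\boldsymbol{\Omega}_k}(B\boldsymbol{\theta}+\boldsymbol{d})\leq0$ in (\ref{equ_mve_general}) says exactly that $\boldsymbol{E}_k\subseteq\boldsymbol{\Omega}_k$, where I take the closure so that the strict cutting-plane inequalities become non-strict. Because $\boldsymbol{\Omega}_k$ in (\ref{equ_omegak}) is an intersection of halfspaces, containment decomposes: $\boldsymbol{E}_k\subseteq\boldsymbol{\Omega}_k$ holds if and only if $\boldsymbol{E}_k$ lies in each halfspace separately, reducing one quantified constraint over $\boldsymbol{\Omega}_k$ to $2r+k$ scalar constraints.

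The core step is then the single-halfspace reduction. For a generic inequality $\langle\boldsymbol{a},\boldsymbol{\theta}\rangle\leq c$, the ellipsoid lies in the corresponding halfspace if and only if $\sup_{\norm{\boldsymbol{u}}_2\leq1}\langle\boldsymbol{a},B\boldsymbol{u}+\boldsymbol{d}\rangle\leq c$. Using the symmetry of $B\in\mathbb{S}^r_{++}$ to write $\langle\boldsymbol{a},B\boldsymbol{u}\rangle=\langle B\boldsymbol{a},\boldsymbol{u}\rangle$ and the fact that the support function of the Euclidean unit ball is the $\ell_2$ norm (self-duality of $\norm{\cdot}_2$), the supremum equals $\norm{B\boldsymbol{a}}_2$, so the condition becomes $\norm{B\boldsymbol{a}}_2+\langle\boldsymbol{a},\boldsymbol{d}\rangle\leq c$. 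Specializing the pair $(\boldsymbol{a},c)$ to $(\boldsymbol{e}_i,\bar{r}_i)$, to $(-\boldsymbol{e}_i,\ubar{r}_i)$, and to $(\boldsymbol{h}_j,-b_j)$ for the three families of inequalities in (\ref{equ_omegak}) produces precisely the three constraint blocks of (\ref{equ_mve}).

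Finally I would verify convexity so that (\ref{equ_mve}) is a bona fide convex program: $-\log\det B$ is convex on $\mathbb{S}^r_{++}$, and each constraint $\norm{B\boldsymbol{a}}_2+\langle\boldsymbol{a},\boldsymbol{d}\rangle\leq c$ is convex in $(\boldsymbol{d},B)$ since $B\mapsto\norm{B\boldsymbol{a}}_2$ is a norm composed with a linear map and $\boldsymbol{d}\mapsto\langle\boldsymbol{a},\boldsymbol{d}\rangle$ is affine. The minimizer then yields the MVE center $\boldsymbol{d}_k$. I expect the only genuine subtlety to be the passage from the strict inequalities $\langle\boldsymbol{h}_j,\boldsymbol{\theta}\rangle+b_j<0$ to the closed feasible set used in (\ref{equ_mve}); this must be justified by noting that the inscribed ellipsoid is full-dimensional (positive volume), so replacing $\boldsymbol{\Omega}_k$ by its closure leaves the optimal $B$ and $\boldsymbol{d}_k$ unchanged. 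The remaining steps are the standard derivation of the MVE program for a polytope and present no real difficulty.
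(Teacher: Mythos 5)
Your derivation is correct and is precisely the standard argument that the paper invokes by citation: the paper gives no proof of its own but points to Chapter 8.4.2 of Boyd and Vandenberghe, where the MVE program for a polytope is obtained exactly as you do---reducing containment to per-halfspace conditions, evaluating $\sup_{\norm{\boldsymbol{u}}_2\leq1}\langle\boldsymbol{a},B\boldsymbol{u}+\boldsymbol{d}\rangle=\norm{B\boldsymbol{a}}_2+\langle\boldsymbol{a},\boldsymbol{d}\rangle$ via self-duality of the Euclidean norm, and noting convexity of $-\log\det B$ and the norm constraints. Your added remark on passing from the strict cutting-plane inequalities to their closure is a legitimate refinement that the paper glosses over (it implicitly works with the closure of $\boldsymbol{\Omega}_k$), so there is nothing to correct.
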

\noindent
The proof of  Lemma \ref{lemma5} can be found in Chapter 8.4.2 in  \cite[pp.414]{boyd2004convex}. (\ref{equ_mve}) can be efficiently solved by  available convex program solvers, e.g., \cite{diamond2016cvxpy}. In the implementation, since the number of linear inequalities grows as  $k$ increases, the trick of dropping some redundant inequalities in (\ref{equ_omegak}) can be adopted \cite{boyd2007localization}. Dropping redundant inequalities does not change $\boldsymbol{\Omega}_k$ and its volume reduction (convergence).  Please see how to identify the redundant inequalities in \cite{boyd2007localization}.

\subsection{Exponential Convergence  and Termination Criterion}
Now we analyze the convergence of the volume reduction of  $\boldsymbol{\Omega}_{k}$  and develop its termination criterion in implementation.  First,  the convergence of the reduction of $\vol(\boldsymbol{\Omega}_{k})$ is guaranteed by the following lemma.

\begin{lemma}\label{lemma6}
	Let $\boldsymbol{\theta}_k$ be chosen as the center $\boldsymbol{d}_{k-1}$ of the  MVE $\boldsymbol{E}_{k-1}$ inscribed in $\boldsymbol{\Omega}_{k-1}\subset \mathbb{R}^r$. Then, the update (\ref{equ_updateOmega}) has
	\begin{equation} \label{eq_volreduction}
	\frac{\vol({\boldsymbol{\Omega}_k})}{\vol(\boldsymbol{\Omega}_{k-1})}\leq (1-\frac{1}{r}).
	\end{equation}
\end{lemma}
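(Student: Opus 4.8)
The plan is to read the lemma as a purely geometric statement and reduce it to a normalized configuration. Two facts are already available: by Lemma~\ref{lemma3}, the cutting hyperplane $\left\langle\boldsymbol{h}_k,\boldsymbol{\theta}\right\rangle+b_k=0$ passes exactly through $\boldsymbol{\theta}_k=\boldsymbol{d}_{k-1}$, the \emph{center} of the maximum volume ellipsoid $\boldsymbol{E}_{k-1}$ inscribed in $\boldsymbol{\Omega}_{k-1}$; and by (\ref{equ_updateOmega}), $\boldsymbol{\Omega}_k$ is the intersection of $\boldsymbol{\Omega}_{k-1}$ with one of the two halfspaces this hyperplane bounds. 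Hence (\ref{eq_volreduction}) is equivalent to the claim that a halfspace cut through the center of the MVE of a convex body removes at least a fixed fraction of its volume, independent of which side is retained.

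First I would exploit affine invariance to normalize. Both the ratio $\vol(\boldsymbol{\Omega}_k)/\vol(\boldsymbol{\Omega}_{k-1})$ and the property ``$\boldsymbol{E}$ is the MVE of $\boldsymbol{\Omega}$'' are preserved under any invertible affine map: ellipsoids map to ellipsoids, every volume is scaled by the same factor $\lvert\det A\rvert$ so the maximizer of (\ref{equ_mve_general}) maps to the maximizer, and the Jacobian cancels in the ratio. I would therefore apply the affine map sending $\boldsymbol{E}_{k-1}$ to the unit ball $\{\boldsymbol{\theta}:\norm{\boldsymbol{\theta}}_2\le 1\}$ and $\boldsymbol{d}_{k-1}$ to the origin; a subsequent rotation (which preserves the ball) aligns the cut with $\{[\boldsymbol{\theta}]_1=0\}$, with retained part $\tilde{\boldsymbol{\Omega}}_k=\tilde{\boldsymbol{\Omega}}_{k-1}\cap\{[\boldsymbol{\theta}]_1\le 0\}$. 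In these coordinates the unit ball is the MVE of the transformed body $\tilde{\boldsymbol{\Omega}}_{k-1}$, so John's theorem yields the sandwich $\{\norm{\boldsymbol{\theta}}_2\le 1\}\subseteq\tilde{\boldsymbol{\Omega}}_{k-1}\subseteq\{\norm{\boldsymbol{\theta}}_2\le r\}$.

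The crux, and the step I expect to be the main obstacle, is to show that the removed portion $\tilde{\boldsymbol{\Omega}}_{k-1}\cap\{[\boldsymbol{\theta}]_1\ge 0\}$ carries at least a $1/r$ fraction of $\vol(\tilde{\boldsymbol{\Omega}}_{k-1})$. The naive estimate, bounding the removed part below by the half inscribed unit ball and the whole body above by the circumscribed ball of radius $r$, only gives the far weaker ratio $1/(2r^{r})$; the clean linear-in-$1/r$ bound genuinely requires the \emph{optimality} of the MVE, not merely the John sandwich. Concretely, I would invoke the first-order tangency/complementarity conditions characterizing the maximizer $(\bar{B},\boldsymbol{\bar{d}})$ of (\ref{equ_mve_general}): these conditions prevent $\tilde{\boldsymbol{\Omega}}_{k-1}$ from extending disproportionately toward the retained side without a compensating extent on the removed side, which is exactly what precludes a vanishing removed fraction. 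This central-cut volume estimate is the classical volume-reduction guarantee underlying the maximum-volume-inscribed-ellipsoid cutting-plane method, and I would carry it out along the lines of \cite{boyd2007localization,boyd2004convex}.

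Finally, I would transport the resulting inequality $\vol(\tilde{\boldsymbol{\Omega}}_k)\le(1-\tfrac{1}{r})\vol(\tilde{\boldsymbol{\Omega}}_{k-1})$ back through the normalizing affine map. Since that map scales both volumes by the same nonzero factor, the ratio is unchanged and (\ref{eq_volreduction}) follows for the original $\boldsymbol{\Omega}_k$ and $\boldsymbol{\Omega}_{k-1}$. Iterating this one-step bound then gives the exponential decay $\vol(\boldsymbol{\Omega}_k)\le(1-\tfrac{1}{r})^{k}\vol(\boldsymbol{\Omega}_0)$ used in the convergence analysis.
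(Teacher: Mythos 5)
Your setup (affine normalization sending $\boldsymbol{E}_{k-1}$ to the unit ball and the cut to a coordinate hyperplane, the John sandwich $B\subseteq\tilde{\boldsymbol{\Omega}}_{k-1}\subseteq rB$, and the observation that the sandwich alone only yields a removed fraction of order $1/(2r^r)$) is correct, but it is also the easy part. The step you yourself flag as the crux --- that a halfspace cut through the MVE center removes at least a $1/r$ fraction of the volume --- \emph{is} Lemma \ref{lemma6}; after your normalization you have reduced the lemma to itself, and you never prove it. You assert it follows from the first-order optimality/tangency conditions of (\ref{equ_mve_general}) and would be ``carried out along the lines of'' \cite{boyd2007localization,boyd2004convex}, but neither reference contains such a proof: \cite{boyd2004convex} only formulates the MVE problem (Ch.~8.4, which the paper already uses for Lemma \ref{lemma5}), and \cite{boyd2007localization} states the volume bound for the MVE cutting-plane method \emph{without proof}, deferring to the same primary source the paper itself cites. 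Indeed, the paper's entire proof of Lemma \ref{lemma6} is the sentence that it ``is a direct theorem from \cite{tarasov1988method}''; so your proposal attempts more than the paper does, but the mathematical content remains uncloseed: ``the tangency conditions prevent disproportionate extension toward the retained side'' is a heuristic, not an argument.

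Moreover, the deferred step cannot be completed as you describe with the stated constant, because the inequality $\vol(\boldsymbol{\Omega}_k)\leq(1-\tfrac{1}{r})\vol(\boldsymbol{\Omega}_{k-1})$, read as a statement about the volumes of the \emph{sets}, fails in low dimension. For $r=2$ the MVE of a triangle is its Steiner inellipse, whose center is the centroid; a cut through the centroid parallel to one side splits the triangle into pieces carrying $4/9$ and $5/9$ of the area, so the retained portion can have fraction $5/9>1/2=1-\tfrac{1}{r}$ (and such an $\boldsymbol{\Omega}_{k-1}$ is reachable: one diagonal central cut of the initial box (\ref{equ_linftyball}) leaves a triangle). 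For $r=1$ the claim is already false for an interval. The classical theorem of \cite{tarasov1988method} that everyone is pointing to bounds the volumes of the successive \emph{inscribed ellipsoids}, $\vol(\boldsymbol{E}_k)\leq c\,\vol(\boldsymbol{E}_{k-1})$ with a dimension-independent $c<1$, not the volumes of $\boldsymbol{\Omega}_k$; no dimension-independent factor can hold for the set volumes at all, since for $K=\mathrm{conv}\left(B\cup\{r\boldsymbol{e}_1\}\right)$ the unit ball is the MVE (the John conditions are met by the tangency circle at height $1/r$ together with $-\boldsymbol{e}_1$) and the central cut orthogonal to $\boldsymbol{e}_1$ retains a $1-\Theta(1/\sqrt{r})$ fraction. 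So a rigorous completion of your plan would have to prove a genuinely different estimate --- e.g., a corrected removed-fraction bound, or the ellipsoid-volume contraction of \cite{tarasov1988method}, which would in turn require restating Lemma \ref{lemma6} and adjusting Theorem \ref{theorem_1} --- rather than invoking a ``classical'' set-volume bound that, in the form needed here, does not exist in the cited literature.
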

\noindent
Lemma \ref{lemma6} is a  theorem directly from \cite{tarasov1988method}. Lemma \ref{lemma6} indicates   $$\vol{(\boldsymbol{\Omega}_k)}\leq (1-\frac{1}{r})^k\vol{(\boldsymbol{\Omega}_0)}.$$ 
Thus, the convergence rate of $\vol{(\boldsymbol{\Omega}_k)}\rightarrow 0$ is as fast as $(1-\frac{1}{r})^k\rightarrow 0$, as $k=0,1,2,...$.

To implement the proposed main algorithm, we will not only need the exponential convergence as established by Lemma \ref{lemma6} but also a termination criterion, which specifies the maximum number of iterations needed for  $\vol{(\boldsymbol{\Omega}_k)}$  below certain a given threshold. Thus, we have the following theorem.

\begin{theorem}\label{theorem_1}
	In the main algorithm, suppose $\boldsymbol{\Omega}_0$ is given by (\ref{equ_linftyball}), 
	and at iteration $k$,  $\boldsymbol{\theta}_k$ is chosen as the center $\boldsymbol{d}_{k-1}$ of the  MVE $\boldsymbol{E}_{k-1}$ inscribed in $\boldsymbol{\Omega}_{k-1}$. Given a termination condition     $$\vol{(\boldsymbol{\Omega}_k)}\leq (2\epsilon)^r$$ with $\epsilon$  a user-specified   threshold, the proposed  main algorithm runs for $k\leq K$ iterations, namely, the algorithm terminates in at most $K$ iterations, where
	\begin{equation}\label{equ_maxloop}
	K= \frac{r\log(R/\epsilon)}{-\log(1-1/r)}, 
	\end{equation}
	with $R$ given in (\ref{equ_radius_R}).
\end{theorem}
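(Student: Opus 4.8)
The plan is to combine the per-iteration volume contraction of Lemma~\ref{lemma6} with an explicit upper bound on $\vol(\boldsymbol{\Omega}_0)$, and then solve for the smallest iteration index at which the termination threshold $(2\epsilon)^r$ is guaranteed to be met. Since the hypotheses exactly match those of Lemma~\ref{lemma6} (the guess $\boldsymbol{\theta}_k$ is the center $\boldsymbol{d}_{k-1}$ of the MVE inscribed in $\boldsymbol{\Omega}_{k-1}$), the contraction estimate applies at every step, so the only work is bookkeeping.

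First I would iterate the contraction. Composing $\vol(\boldsymbol{\Omega}_k)\leq(1-1/r)\vol(\boldsymbol{\Omega}_{k-1})$ from Lemma~\ref{lemma6} over $k$ steps gives $\vol(\boldsymbol{\Omega}_k)\leq(1-1/r)^k\vol(\boldsymbol{\Omega}_0)$. Next I would bound the initial volume using the box structure: because $\boldsymbol{\Omega}_0$ in (\ref{equ_linftyball}) is an axis-aligned box, its volume is $\prod_{i=1}^{r}(\ubar{r}_i+\bar{r}_i)$, and since $\ubar{r}_i,\bar{r}_i\leq R$ by the definition (\ref{equ_radius_R}) of $R$, we obtain $\vol(\boldsymbol{\Omega}_0)\leq(2R)^r$. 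Chaining the two estimates yields $\vol(\boldsymbol{\Omega}_k)\leq(1-1/r)^k(2R)^r$.

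Finally I would enforce the termination condition. It suffices to find $k$ for which $(1-1/r)^k(2R)^r\leq(2\epsilon)^r$, equivalently $(1-1/r)^k\leq(\epsilon/R)^r$. Taking logarithms and dividing by $\log(1-1/r)<0$ flips the inequality and produces $k\geq r\log(R/\epsilon)/\bigl(-\log(1-1/r)\bigr)=K$. Hence the threshold is attained no later than iteration $K$, which is precisely the claimed bound.

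I expect no genuine obstacle here: the argument is a routine geometric-series estimate. The only points requiring care are the orientation of the inequality when dividing through by the negative quantity $\log(1-1/r)$, and the observation that $\vol$ is taken over the \emph{closure} of $\boldsymbol{\Omega}_k$, so the strict inequalities appearing in the update (\ref{equ_updateOmega}) do not alter the measure and the bound of Lemma~\ref{lemma6} transfers verbatim.
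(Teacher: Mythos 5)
Your proposal is correct and follows essentially the same route as the paper's own proof: iterate the contraction bound of Lemma~\ref{lemma6} to get $\vol(\boldsymbol{\Omega}_k)\leq(1-1/r)^k(2R)^r$, then take logarithms and solve for the iteration count at which $(2\epsilon)^r$ is guaranteed. The only cosmetic difference is that you derive the bound $\vol(\boldsymbol{\Omega}_0)\leq(2R)^r$ explicitly from the box structure and solve the inequality for $k$, whereas the paper asserts the initial bound and verifies the threshold by substituting $k=K$; these are equivalent.
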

\begin{proof}
	Initially, we have $\vol{(\boldsymbol{\Omega}_0)}\leq (2R)^r$. From Lemma \ref{lemma6}, after $k$ iterations, we have 
	\begin{equation}
	\vol{(\boldsymbol{\Omega}_k)}\leq (1-\frac{1}{r})^k\vol{(\boldsymbol{\Omega}_0)}\leq (1-\frac{1}{r})^k (2R)^r,
	\end{equation} which yields to 
	\begin{equation}
	\log\vol{(\boldsymbol{\Omega}_k)}\leq k \log(1-\frac{1}{r})+\log(2R)^r.
	\end{equation}
	When $k=\frac{r\log(R/\epsilon)}{-\log(1-1/r)}$, 
	\begin{equation}
	\log\vol{(\boldsymbol{\Omega}_k)}\leq -r \log(R/\epsilon)+\log(2R)^r.
	\end{equation}
	The above equation is simplified to
	\begin{equation}
	\log \vol{(\boldsymbol{\Omega}_k)}\leq \log (2\epsilon)^r,
	\end{equation}
	which means that the termination condition $\vol{(\boldsymbol{\Omega}_k)}\leq (2\epsilon)^r$ is satisfied. This completes the proof.
\end{proof}

\noindent
We have the following comments on the above Theorem \ref{theorem_1}.

\begin{remark}\label{remark1}
	Since  Lemma \ref{lemma3} states that both $\boldsymbol{\theta}^*$ and $\boldsymbol{\theta}_k$ are    in  $\boldsymbol{\Omega}_k$ for any $k=1,2,3,...$ ,
	the user-specified threshold $\epsilon$ in the termination condition $\vol{(\boldsymbol{\Omega}_k)}\leq (2\epsilon)^r$ can also be understood as an indicator for the distance between the true  $\boldsymbol{\theta}^*$ (usually unknown in practice) and the  robot current guess $\boldsymbol{\theta}_k$.  $\epsilon$  is set based on the desired learning accuracy.  \footnote{In practice,  it is usually easy to set $\epsilon$ with a small value,  because the empirical results in Sections \ref{section_games} and  \ref{section.real_experiments} show that the robot motion trajectory can converge (to the desired motion) more quickly than the objective function itself. This means that one  usually  sees the convergence of the robot trajectory before  the termination condition is reached, as shown in Fig. \ref{fig_compare}. 
			More discussion about setting $\epsilon$ will be given in Appendix \ref{section.discussion.parameters}.}
\end{remark}

\subsection{Detailed Implementation of  Main Algorithm}

With the termination criterion stated in Theorem \ref{theorem_1} and the choice of  $\boldsymbol{\theta}_k$ in (\ref{eq_thetak}) and Lemma  \ref{lemma5}, we present the detailed implementation of the proposed main method  in Algorithm \ref{algorithm1}. The setting of the initial $\boldsymbol{\Omega}_0$ in (\ref{equ_linftyball}) and  $\epsilon$ will be given in Appendix \ref{section.discussion.parameters}.

\begin{algorithm2e}[th]
	\small 
	\SetKwInput{Initialization}{Initialization}
	\KwIn{Specify a termination threshold $\epsilon$ and use it to compute the maximum iteration  $K$ by (\ref{equ_maxloop}). }
	\Initialization{Initial weight search space $\boldsymbol{\Omega}_0$ in (\ref{equ_linftyball}).}
	\smallskip
	\For{$k=1,2,\cdots, K$}{

		\smallskip
		Choose a  weight vector guess $\boldsymbol{\theta}_k\in \boldsymbol{\Omega}_{k-1}$ via Lemma \ref{lemma5}\;
		
		\smallskip	
		Restart and plan a robot trajectory $\boldsymbol{\xi}_{\boldsymbol{\theta}_k}$ by solving a trajectory optimization  with the cost function  $J(\boldsymbol{\theta}_k)$ in (\ref{equ_objective}) and the dynamics  in (\ref{equ_dynamics})\;
		
		\smallskip
		Robot executes the planned trajectory $\boldsymbol{\xi}_{\boldsymbol{\theta}_k}$ while receving the human averaged directional correction $\boldsymbol{\bar{a}}_{k}$ in (\ref{equ_assumption_modify.abar})\;
		
		\smallskip
		Compute the  matrices $\boldsymbol{H}_1(\boldsymbol{x}^{\boldsymbol{\theta}_k}_{0:T\text{+}1},\boldsymbol{u}^{\boldsymbol{\theta}_k}_{0:T})$ and $\boldsymbol{H}_2(\boldsymbol{x}^{\boldsymbol{\theta}_k}_{0:T\text{+}1},\boldsymbol{u}^{\boldsymbol{\theta}_k}_{0:T})$, and then compute  the hyperplane and half space $\left\langle\boldsymbol{h}_k,\boldsymbol{\theta} \right\rangle+b_k<0 $ via (\ref{equ_assumption2})-(\ref{equ_hk})\;

		\smallskip
		Update the weight search space by $	\boldsymbol{\Omega}_{k}=\boldsymbol{\Omega}_{k-1}\cap\left\{\boldsymbol{\theta}\in \mathbb{R}^{r} \,|\, \left\langle\boldsymbol{h}_k,\boldsymbol{\theta} \right\rangle+b_k<0 \right\}$ via (\ref{equ_updateOmega})\;
	}
	\smallskip
	\textbf{Output:} $\boldsymbol{\theta}_K$.
	\caption{Learning from  directional corrections} \label{algorithm1}
\end{algorithm2e}

\section{Numerical Examples}\label{section.simulation}
In this section, we will test the proposed method in numerical simulations and make comparisons with the magnitude-correction methods discussed in the previous related work.

\subsection{Inverted Pendulum} \label{simulation_pendulum}
The dynamics of a pendulum is
\begin{equation}
\ddot{\alpha}=\frac{-g}{l}\sin\alpha-\frac{d}{ml^2}\dot{\alpha}+\frac{u}{ml^2},
\end{equation}
with $\alpha$ being the angle between the pendulum and the direction of gravity, $u$ is the torque applied to the pivot, $l=1$m, $m=1$kg, and $d=0.1$Ns/m are the length, mass, and damping ratio of the pendulum, respectively. We discretize the dynamics using the Euler method with a  time interval $\Delta=0.2$s.
The state  and control input of the pendulum   are  $\boldsymbol{x}=[\alpha,\dot{\alpha}]\tran$ and $\boldsymbol{u}=u$, respectively.  The initial condition is $\boldsymbol{x}_0=[0,0]\tran$. 
In the cost function  (\ref{equ_objective}), we set the feature and weight vectors as
\begin{equation}\label{pendulum_costfunction}
\begin{aligned}
\boldsymbol{\phi}&=[ \alpha^2,\alpha, \dot{\alpha}^2, u^2 ]\tran\in\mathbb{R}^4,\\
\boldsymbol{\theta}&=[\theta_1,\theta_2, \theta_3, \theta_4]\tran\in\mathbb{R}^4,
\end{aligned}
\end{equation}
respectively, and   the final cost   $h(\boldsymbol{x}_{T+1})=10(\alpha-\pi)^2+10\dot{\alpha}^2$,  as our goal is to  swing up the pendulum.
The discrete time horizon is $T=30$.

For numerical analysis, we `simulate' the human directional corrections instead of using real human corrections. The simulated corrections are generated as follows.  Suppose that the  true weight vector is known: $\boldsymbol{\theta}^*{=}[0.5,0.5,0.5,0.5]\tran$.
At  iteration $k$, a simulated     correction   $\boldsymbol{a}_{t_k}$ is generated using the \emph{sign} of  the  gradient of the true cost function $J(\boldsymbol{\theta}^*)$, i.e.,
\begin{equation}\label{correction_pendulum}
\boldsymbol{a}_{t_k}=-\text{sign}\left(\nabla J(\boldsymbol{u}^{\boldsymbol{\theta}_k}_{0:T},\boldsymbol{\theta}^*)\left[{t_k}\right]\right)
\in\mathbb{R}.
\end{equation}
Here, $\nabla J(\boldsymbol{u}^{\boldsymbol{\theta}_k}_{0:T},\boldsymbol{\theta}^*)\left[{t_k}\right]$ denotes the $t_k$th entry of $\nabla J$, and the  correction  time  ${t_k}$ is  randomly drawn  from the a uniform distribution over  horizon  $[0, T]$. Obviously, the above simulated  directional corrections satisfies  the assumption (\ref{equ_assumption}).

\begin{figure}[h]
	\centering
	\includegraphics[width=0.43\textwidth]{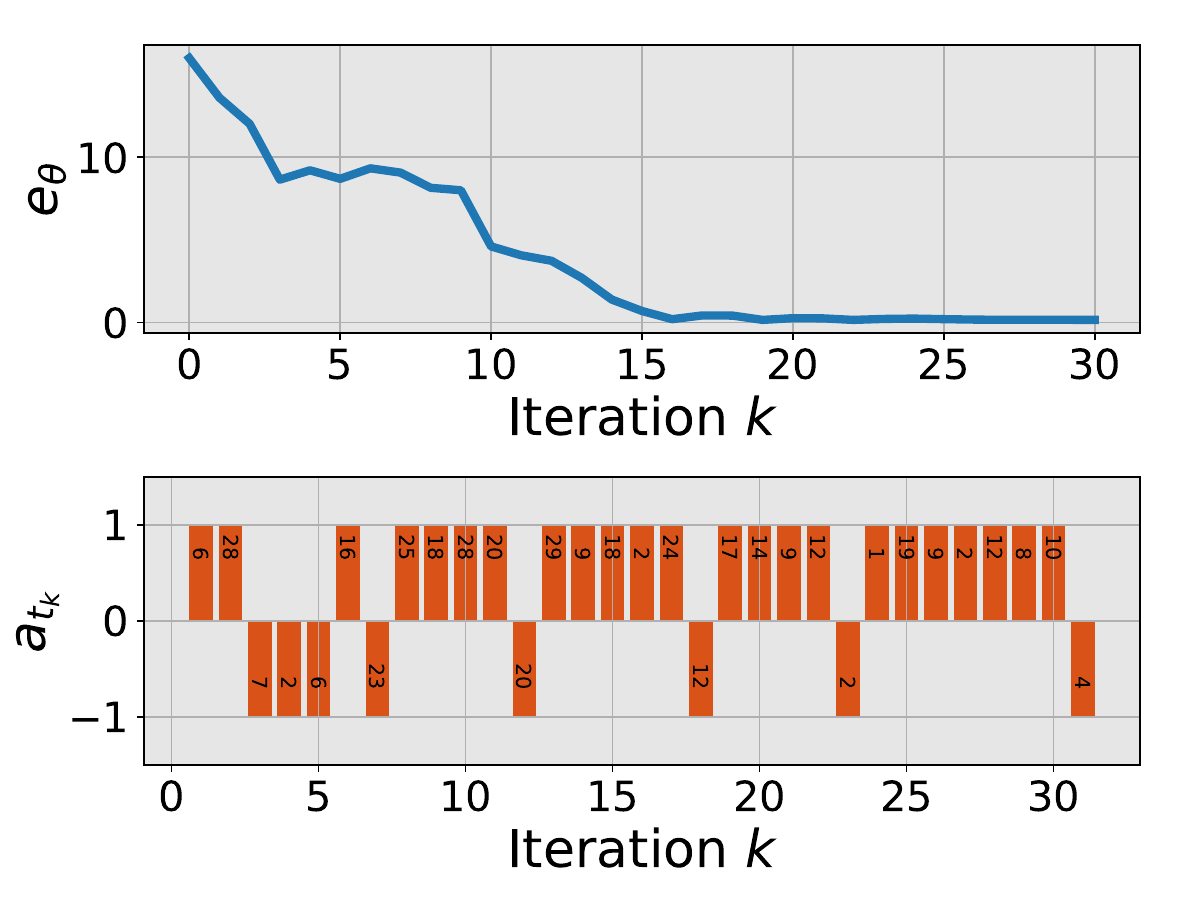}
	\caption{Learning  from simulated  directional corrections for the  pendulum. The upper panel shows the weight  error $e_{\boldsymbol{\theta}}=\norm{\boldsymbol{\theta}-\boldsymbol{\theta}^*}^2$ versus  iteration.  The bottom  shows the simulated  directional correction $\boldsymbol{a}_{t_k}$ (positive or negative sign)  at each iteration ${k}$; here, the value in each bar is  ${t}_k$, which  is randomly drawn from a uniform distribution over horizon $[0, 30]$.}
	\label{fig_pendulum_1}
\end{figure}

The initial weight search space is set as 
$\boldsymbol{\Omega}_0
=\{\boldsymbol{\theta}\,|\, 0\leq {\boldsymbol{\theta}[i]}\leq 5,\,\, i=1,2,3,4 \}$.
In Algorithm \ref{algorithm1}, we set the  termination threshold  $\epsilon=10^{-1}$, and the maximum learning iteration  solved via (\ref{equ_maxloop}) is $K=55$.  We apply  Algorithm \ref{algorithm1} to learn the  true  $\boldsymbol{\theta}^*$.  To illustrate   results, we define the weight error $e_{\boldsymbol{\theta}}=\norm{\boldsymbol{\theta}-\boldsymbol{\theta}^*}^2$ and plot  $e_{\boldsymbol{\theta}}$ versus iteration $k$ in the top panel of Fig. \ref{fig_pendulum_1}. In the bottom panel of Fig.  \ref{fig_pendulum_1}, we  plot the simulated  directional correction $\boldsymbol{a}_{t_k}$ at each iteration $k$, where $+1$ and $-1$ bar denote  the positive and negative sign (i.e., direction) of  correction $\boldsymbol{a}_{t_k}$ in (\ref{correction_pendulum}), respectively; the  number inside the bar denotes the  correction  time ${t_k}$ randomly  drawn from $[0, T]$.

From the results in Fig. \ref{fig_pendulum_1}, we can see that as the learning iteration $k$ increases, the weight vector guess $\boldsymbol{\theta}_k$ converges to the true $\boldsymbol{\theta}^*=[0.5,0.5,0.5,0.5]\tran$. This  shows the efficacy of the  method, as guaranteed by Theorem~\ref{theorem_1}.

\subsection{Two-link Robot Arm  System}\label{section_exp_robotarm_1}
Here, we test the proposed method on a two-link robot arm.  The dynamics of the robot arm  system (moving horizontally) is $M(\boldsymbol{q})\ddot{\boldsymbol{q}}+\boldsymbol{c}(\boldsymbol{q},\boldsymbol{\dot{q}})=\boldsymbol{\tau}$, where $M(\boldsymbol{q})$ is the inertia matrix, $\boldsymbol{c}(\boldsymbol{q},\boldsymbol{\dot{q}})$ is the Coriolis and centrifugal term; $\boldsymbol{q}=[q_1,q_2]\tran$ is the vector of  joint angles, and $\boldsymbol{\tau}=[\tau_1, \tau_2]\tran$ is the vector of  joint toques. The state  and control input   are  $\boldsymbol{x}=[\boldsymbol{q},\dot{\boldsymbol{q}}]\tran\in\mathbb{R}^4$ and $\boldsymbol{u}=\boldsymbol{\tau}\in\mathbb{R}^2$, respectively. The initial condition of the robot arm is set as $\boldsymbol{x}_0=[0,0,0,0]\tran$. All physical  parameters in the dynamics are  units. We discretize the   dynamics using the Euler method with a time interval $\Delta=0.2$s. In the cost function  (\ref{equ_objective}), we set the feature and weight vectors as
\begin{subequations}
	\begin{align}
	\boldsymbol{\phi}&=[ q_1^2,\, q_1,\, {q}_2^2,\,{q}_2, \, \norm{\boldsymbol{u}}^2]\tran\in\mathbb{R}^5,\\
	\boldsymbol{\theta}&=[\theta_1,\theta_2,\theta_3,\theta_4,\theta_5]\tran\in\mathbb{R}^5,
	\end{align}
\end{subequations} respectively, and   the final cost   $h(\boldsymbol{x}_{T+1})=100\big((q_1-\frac{\pi}{2})^2+q_2^2+\dot{q}_1^2+\dot{q}_2^2\big)$, as  the robot arm aims to  reach and stop at the pose of $\boldsymbol{q}=[\frac{\pi}{2}, 0]\tran$. 
The discrete-time horizon is  $T=50$.

We still `simulate'  directional corrections. Suppose that we  know the  true  $\boldsymbol{\theta}^*=[1,1,1,1,1]\tran$. At each iteration $k$, the simulation generates a directional correction $\boldsymbol{a}_{t_k}$   using   the sign of     gradient of the true  $J(\boldsymbol{\theta}^*)$:
\begin{equation}\label{correction_robotam}
\boldsymbol{a}_{t_k}=-\text{sign}\left(
\nabla J(\boldsymbol{u}^{\boldsymbol{\theta}_k}_{0:T},\boldsymbol{\theta}^*)\left[{2t_k:2t_k+1}
\right]
\right)\in\mathbb{R}^2,
\end{equation}
where $\nabla J(\boldsymbol{u}^{\boldsymbol{\theta}_k}_{0:T},\boldsymbol{\theta}^*)\left[{2t_k:2t_k+1}
\right]$ are  the  entries  at the locations from $2t_k$ to $2t_k+1$ in $\nabla J$ (because $\boldsymbol{u}\in\mathbb{R}^{2}$), and 
the correction   time  $t_k$ is  randomly  drawn from (uniform distribution)  time horizon $[0, T]$.  \texttt{sign}  is applied entry-wise.

\begin{figure}[h]
	\centering
	\includegraphics[width=0.4\textwidth]{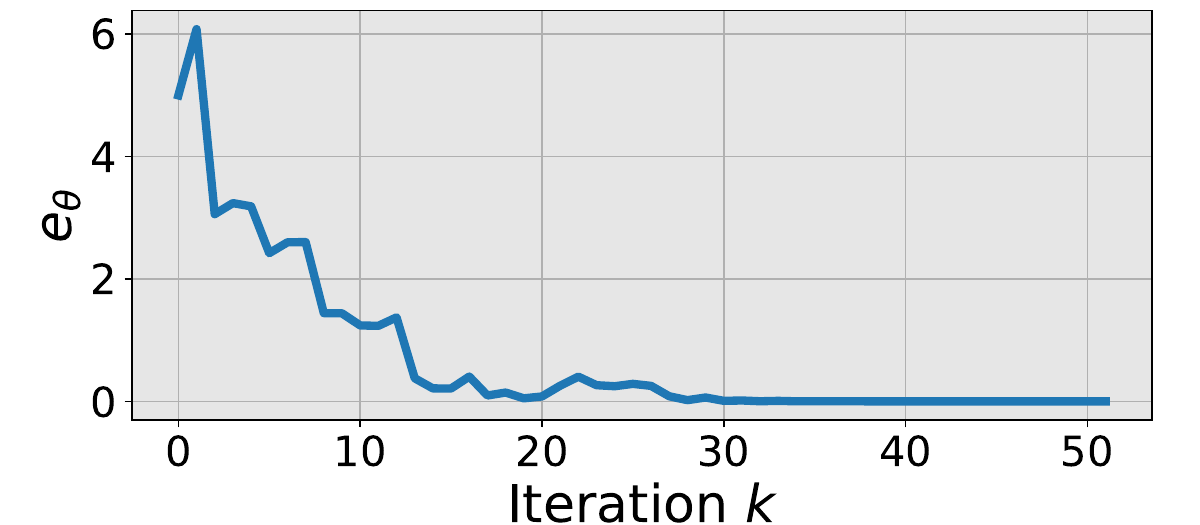}
	\caption{Learning  from simulated directional corrections for the  robot arm. The plot is $e_{\boldsymbol{\theta}}=\norm{\boldsymbol{\theta}-\boldsymbol{\theta}^*}^2$ versus  iteration. The corrections are given in Fig. \ref{fig_robotarm_2}. }
	\label{fig_robotarm_1}
\end{figure}

The initial weight search space is  $\boldsymbol{\Omega}_0
=
\{
\boldsymbol{\theta} \,\,|\,\, 
0\leq \boldsymbol{\theta}[i]\leq
4, \,\,\, i=1,2,..., 5 
\}$.
We set the  termination threshold   $\epsilon=10^{-1}$, and the maximum learning iteration is $K=83$ by (\ref{equ_maxloop}).  We apply  Algorithm~\ref{algorithm1} to learn the true  $\boldsymbol{\theta}^*$.  We define the weight error $e_{\boldsymbol{\theta}}=\norm{\boldsymbol{\theta}-\boldsymbol{\theta}^*}^2$ and plot  $e_{\boldsymbol{\theta}}$ versus  iteration  in Fig. \ref{fig_robotarm_1}. We also plot  $\boldsymbol{a}_{t_k}=[a_{t_k,1},a_{t_k,2}]\tran$   at each iteration $k$ in Fig. \ref{fig_robotarm_2}, where the value labeled on each bar marks the correction  time  $t_k$.  The results in Fig. \ref{fig_robotarm_1} show that as the  iteration increases,  $\boldsymbol{\theta}_k$ converges to the true $\boldsymbol{\theta}^*$, which  again confirms  Theorem~\ref{theorem_1}.

\begin{figure}[h]
	\centering
	\includegraphics[width=0.4\textwidth]{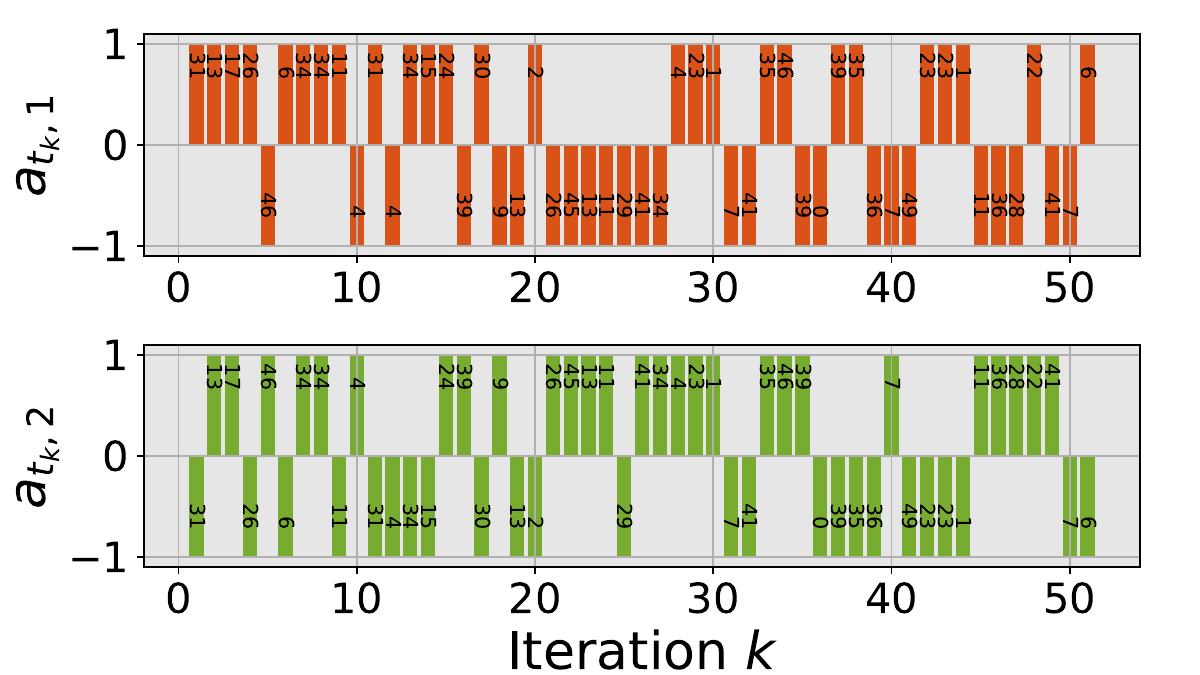}
	\caption{Simulated directional correction  $\boldsymbol{a}_{t_k}=[a_{t_k,1},a_{t_k,2}]\tran$  at each iteration $k$ for  the robot arm.  The number inside the bar is the correction time  ${t}_k$ randomly drawn from range $[0,50]$.}
	\label{fig_robotarm_2}
\end{figure}

\subsection{Comparison with  Magnitude-Correction Methods} \label{experiment_compare}

We compare the proposed method with two related works \cite{jain2015learning,bajcsy2017learning}, both of which learn an objective function from magnitude corrections.  The comparison is conducted on the inverted pendulum used in Section \ref{simulation_pendulum}. 
According to   \cite{bajcsy2017learning}, given a  magnitude correction,  the trajectory deformation  \cite{zhang2019learning} is  used to  obtain  a    \emph{human intended trajectory}.  Specifically, suppose the robot  trajectory is   $\boldsymbol{{\xi}}_{\boldsymbol{\theta}_k}=\{\boldsymbol{{x}}^{\boldsymbol{\theta}_k}_{0:T+1},\boldsymbol{{u}}^{\boldsymbol{\theta}_k}_{0:T}\}$. 
Given   a  correction $\boldsymbol{\bar{a}}_k$,   the human intended trajectory, denoted as  $\boldsymbol{\bar{\xi}}_{\boldsymbol{\theta}_k}=\{\boldsymbol{\bar{x}}^{\boldsymbol{\theta}_k}_{0:T+1},\boldsymbol{\bar{u}}^{\boldsymbol{\theta}_k}_{0:T} \}$,  is  solved as
\begin{equation}\label{equ_compare}
\boldsymbol{\bar{u}}^{\boldsymbol{\theta}_k}_{0:T}=\boldsymbol{{u}}^{\boldsymbol{\theta}_k}_{0:T}+M^{-1}\boldsymbol{\bar{a}}_k,
\end{equation}
where  $M$ is a matrix to  propagate a single-time-step correction $\boldsymbol{\bar{a}}_k$ (\ref{equ_aug_correction_vec}) along the rest of the robot trajectory \cite{dragan2015movement}, and $\boldsymbol{\bar{x}}^{\boldsymbol{\theta}_k}_{0:T+1}$ in $\boldsymbol{\bar{\xi}}_{\boldsymbol{\theta}_k}$  is obtained via rolling out the robot dynamics  (\ref{equ_dynamics}) given $\boldsymbol{\bar{u}}^{\boldsymbol{\theta}_k}_{0:T}$.
The learning  update used in both \cite{jain2015learning} and \cite{bajcsy2017learning}   is
\begin{equation}\label{equ_compare_update}
\boldsymbol{\theta}_{k+1}=\boldsymbol{\theta}_{k}+\eta\Big(\boldsymbol{\phi}(\boldsymbol{\bar{\xi}}_{\boldsymbol{\theta}_k})-\boldsymbol{\phi}(\boldsymbol{{\xi}}_{\boldsymbol{\theta}_k})\Big),
\end{equation}
where $\boldsymbol{\phi}(\boldsymbol{\bar{\xi}}_{\boldsymbol{\theta}_k})$ and $\boldsymbol{\phi}(\boldsymbol{{\xi}}_{\boldsymbol{\theta}_k})$ are  the  feature vectors of the human intended trajectory $\boldsymbol{\bar{\xi}}_{\boldsymbol{\theta}_k}$ and  the   uncorrected robot trajectory $\boldsymbol{{\xi}}_{\boldsymbol{\theta}_k}$, respectively.
Here, we set $M$ as  the finite differencing matrix \cite{zhang2019learning} and $\eta=0.0002$ (for  best performance).

In  comparison, we  use the simulated  corrections $\boldsymbol{a}_{t_k}$, which are generated from the true  $\boldsymbol{\theta}^*$, as  in (\ref{correction_pendulum}). We set the magnitude $\norm{\boldsymbol{a}_{t_k}}$ to three levels: $\norm{\boldsymbol{a}_{t_k}}=0.00125$, $\norm{\boldsymbol{a}_{t_k}}=0.001$, and $\norm{\boldsymbol{a}_{t_k}}=0.0008$,  because we want to see how sensitive the magnitude-correction methods \cite{jain2015learning,bajcsy2017learning} are to the  magnitudes of   corrections. Here, the correction time $t_k$ for different magnitude levels is different random draws. To illustrate    results, we measure the following three aspects for all methods.

\begin{itemize}
	\item \emph{Weight Error},  defined as $\norm{\boldsymbol{\theta}_k-\boldsymbol{\theta}^*}^2$. This is to measure the  error between the  learned $\boldsymbol{\theta}_k$ and the true  $\boldsymbol{\theta}^*$.
	
	\item \emph{Cost Regret}\cite{jain2015learning},  defined as $(\boldsymbol{\theta}^*)\tran\big(\boldsymbol{\phi}(\boldsymbol{{\xi}}_{\boldsymbol{\theta}_k})-\boldsymbol{\phi}(\boldsymbol{\xi}_{\boldsymbol{\theta}^*})\big)$. This is to measure  the  misalignment of the costs between  robot  current trajectory $\boldsymbol{{\xi}}_{\boldsymbol{\theta}_k}$ and the desired trajectory $\boldsymbol{\xi}_{\boldsymbol{\theta}^*}$,  evaluated under the  true cost function  $J(\boldsymbol{\theta}^*)=(\boldsymbol{\theta}^*)\tran \boldsymbol{\phi}(\boldsymbol{\xi})$.

	\item \emph{Trajectory error}, defined as $\norm{\boldsymbol{\xi}_{\boldsymbol{\theta}_k}-\boldsymbol{\xi}_{\boldsymbol{\theta}^*}}^2$, which measures the distance between the robot current trajectory $\boldsymbol{\xi}_{\boldsymbol{\theta}_k}$  and the desired trajectory $\boldsymbol{\xi}_{\boldsymbol{\theta}^*}$.
\end{itemize}

Given the simulated corrections $\boldsymbol{a}_{t_k}$, we run  Algorithm \ref{algorithm1} and the  method \cite{jain2015learning,bajcsy2017learning} (i.e.,  updating the weight vector via (\ref{equ_compare_update})). We show the results in Fig. \ref{fig_compare}, where the results of our method are in orange and the results of the magnitude-correction method \cite{jain2015learning,bajcsy2017learning} are in blue. Each column in Fig. \ref{fig_compare} corresponds to one level of correction magnitudes  (recall that we want to see how robust a method is against different magnitudes of corrections). In each magnitude level (each column), the first row shows the weight error versus iteration,   the second row shows the cost regret versus iteration, and the third row shows the trajectory error versus iteration. Based on the results in Fig.\ref{fig_compare}, we make the following comments.

\begin{figure}[h]
	\begin{subfigure}{.155\textwidth}
		\centering
		\includegraphics[width=\linewidth]{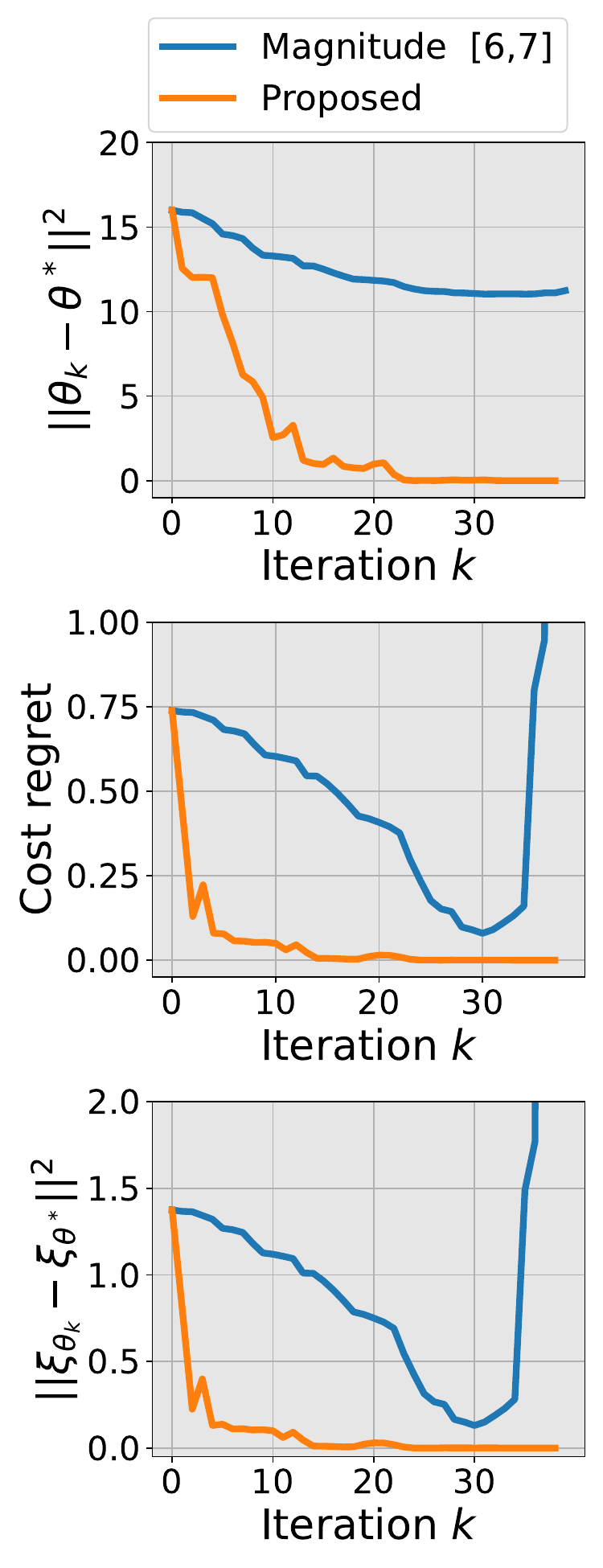}
		\caption{$\norm{\boldsymbol{a}_{t_k}}=0.00125$}
		\label{fig_compare.1}
	\end{subfigure}
	\hfill
	\begin{subfigure}{.155\textwidth}
		\centering
		\includegraphics[width=\linewidth]{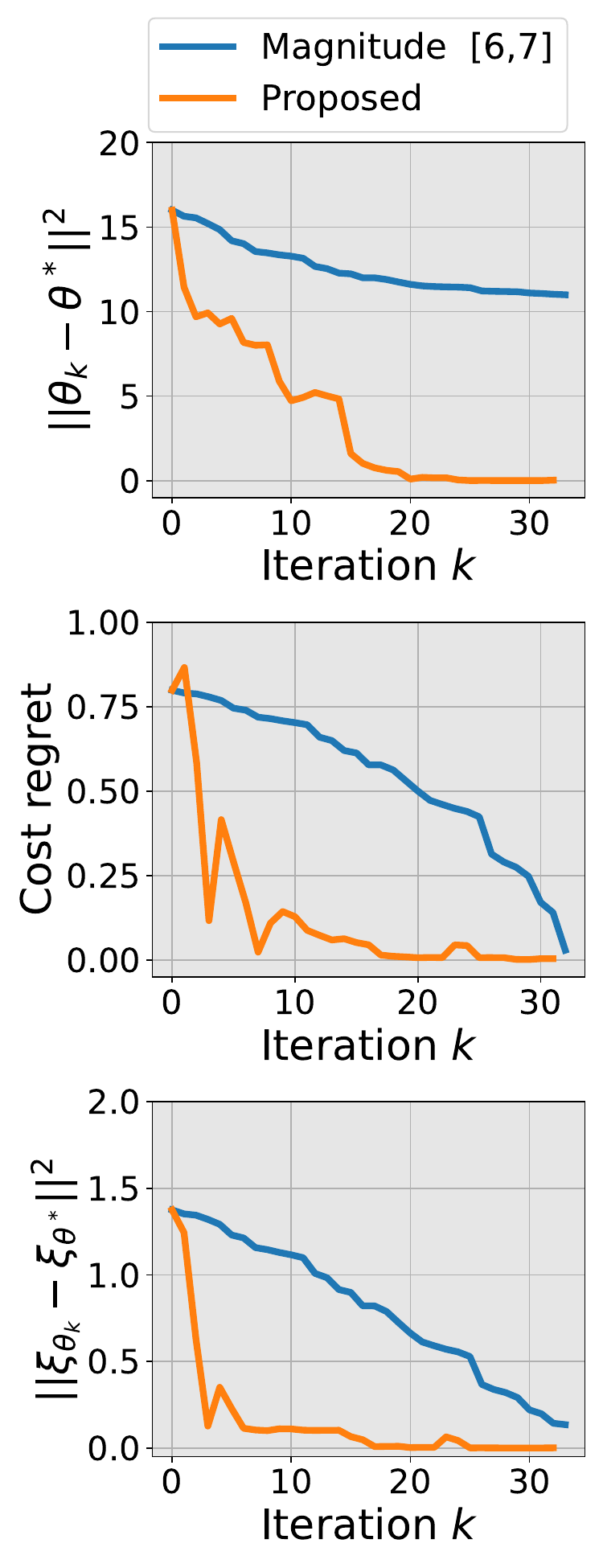}
		\caption{$\norm{\boldsymbol{a}_{t_k}}=0.001$}
		\label{fig_compare.2}
	\end{subfigure}
	\hfill
	\begin{subfigure}{.155\textwidth}
		\centering
		\includegraphics[width=\linewidth]{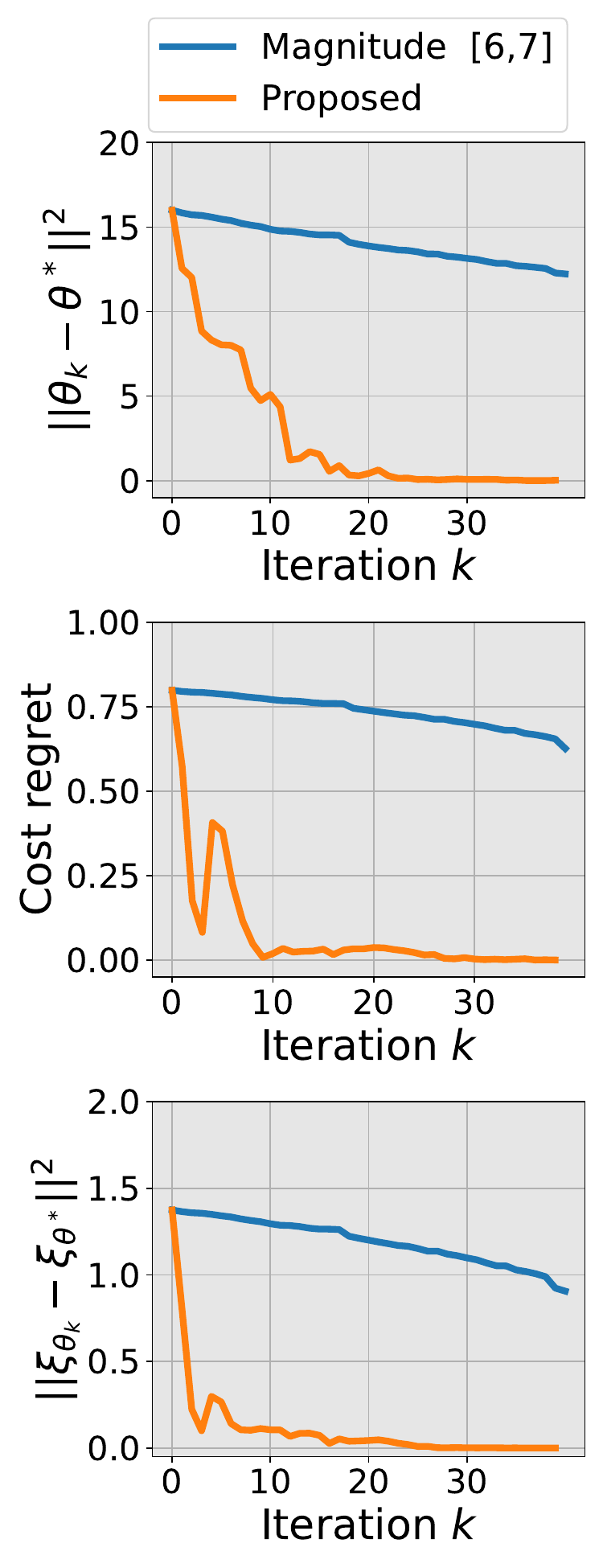}
		\caption{$\norm{\boldsymbol{a}_{t_k}}=0.0008$}
		\label{fig_compare.3}
	\end{subfigure}
	\caption{Comparison between the proposed method (in orange lines) and the magnitude-correction methods   \cite{jain2015learning,bajcsy2017learning} (in blue lines). Each column corresponds to a different  level of  correction magnitude, and  correction time $t_k$ in different magnitude levels are  different random draws. For each magnitude level (each column), the first row shows the weight error versus iteration; the second row is the cost regret versus iteration; and the third row is the trajectory error  versus iteration. Detailed analysis is given in the text.}
	\label{fig_compare}
\end{figure}

For the magnitude-correction methods \cite{jain2015learning,bajcsy2017learning} (in blue lines) in Fig. \ref{fig_compare}, comparing different columns, we note that larger magnitude  leads to faster convergence. However, as shown by the left and middle panels in the first row, the magnitude-correction method converges to a  local weight vector but not $\boldsymbol{\theta}^*$.   This is not a surprise,  because the magnitude-correction method can only guarantee the convergence of the cost regret, as shown by \cite{jain2015learning}, not necessarily the convergence of the cost function itself. One conjecture to the above results could be that the local weight vector (different from the true $\boldsymbol{\theta}^*$)  results in the same robot trajectory and the same optimal cost as $\boldsymbol{\theta}^*$ does. This conjecture has been evidenced in the second row and third row in Fig. \ref{fig_compare}, where the left and middle panels show that the magnitude-correction methods, although having learned a different weight vector, have both the cost regret and trajectory error converging to zero.

Also for the magnitude-correction methods \cite{jain2015learning,bajcsy2017learning} (in blue lines),
when $\norm{\boldsymbol{a}_{t_k}}=0.00125$ (first column),   we  observe that   there is a sharp increase of the cost regret and trajectory error near   convergence after 30 iterations. This could be because   the  magnitude-correction methods \cite{jain2015learning,bajcsy2017learning} are not robust against   the large magnitude of corrections near the desired trajectory. Specifically,  after $30$ iterations near   convergence, a relatively larger  magnitude, say $\norm{\boldsymbol{a}_{t_k}}=0.00125$, can be an over-correction  and thus violating  $J(\boldsymbol{\bar{u}}^{\boldsymbol{\theta}_k}_{0:T},\boldsymbol{\theta}^*)< J(\boldsymbol{{u}}^{\boldsymbol{\theta}_k}_{0:T},\boldsymbol{\theta}^*)$. This explanation has been  supported by  the second column of Fig. \ref{fig_compare}, where with a smaller magnitude $\norm{\boldsymbol{a}_{t_k}}=0.001$, there is no such over-correction phenomenon.

Combining all three columns in Fig. \ref{fig_compare}, one can conclude that  for  the magnitude-correction methods \cite{jain2015learning,bajcsy2017learning},  large correction magnitude, such as $\norm{\boldsymbol{a}_{t_k}}=0.00125$, will have faster convergence, but it can lead to  over-corrections, as shown in Fig. \ref{fig_compare.1}, making the learning process unstable. Smaller correction magnitude, such as $\norm{\boldsymbol{a}_{t_k}}=0.0008$, can avoid the  over-correction issue, but it leads to  slow convergence, as shown in Fig. \ref{fig_compare.3}. Notice that the magnitude change from $\norm{\boldsymbol{a}_{t_k}}=0.00125$ (unstable) in Fig. \ref{fig_compare.1} to  $\norm{\boldsymbol{a}_{t_k}}=0.0008$ (stable) in Fig. \ref{fig_compare.3} is very small, which could suggest that the magnitude-correction methods are sensitive to the selection of correction magnitudes. Thus, it could be difficult in practice to provide a   valid correction magnitude. We will further show and analyze this in our following user study in Section \ref{section_games}.

In contrast, Fig. \ref{fig_compare} has shown that the proposed method consistently learns the true  $\boldsymbol{\theta}^*$ and achieves a zero cost regret and zero trajectory error regardless of the correction magnitude levels.   Also, compared to the  magnitude-correction methods \cite{jain2015learning,bajcsy2017learning}, the proposed method requires less learning iterations (corrections) for convergence. 
Since the proposed method only leverages the direction of  $\boldsymbol{a}_{t_k}$  regardless of $\norm{\boldsymbol{a}_{t_k}}$, there is no over-correction issues near  convergence. In practice, the choice of directional corrections is more flexible than choosing  magnitude corrections, as  analyzed in Fig. \ref{figure_correctionvs} in  Section \ref{section_problem}. We  will continue to show this advantage in the following user study in Section \ref{section_games}.

\section{User Study}\label{section_games}

To show the effectiveness of the proposed method for learning from real human directional corrections, we have developed two human-robot simulation games based on which a user study is conducted. One game is  robot arm reaching  (Fig. \ref{fig_robot_iteration0}) and the other is 6-DoF  quadrotor maneuvering  (Fig. \ref{fig_uav_iteration0}). In each game, a human participant observes the visualization of robot motion, while applying directional corrections through a keyboard. The goal of each game is to teach a robot to learn an objective function, such that it can successfully navigate through an environment without the knowledge about the environment obstacles. 
We have released the accompanying codes of those two games for the readers to try themselves: \url{https://github.com/wanxinjin/Learning-from-Directional-Corrections}.

In what follows next, Sections \ref{game.arm} and \ref{game.quadrotor} describe  the designs of the two games. Section \ref{game.participant} provides the details of human participants and the procedure of the user study. The outcomes of the user study are presented in Section \ref{game.measurement} and detailed analysis is given in  Section \ref{game.resuts}.

\subsection{Robot Arm Reaching Game}\label{game.arm}

\begin{figure}[h]
	\centering
	\includegraphics[width=0.25\textwidth]{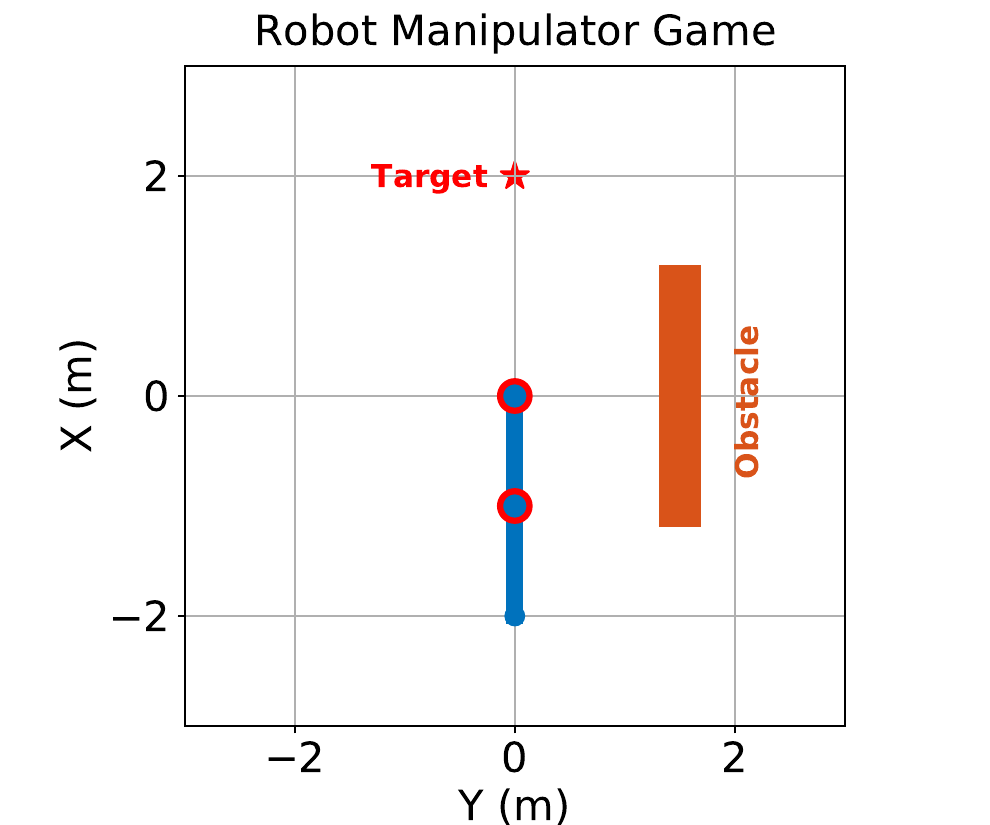}
	\caption{The robot arm reaching game. A human participant teaches the robot arm to learn a cost function by applying directional corrections via a keyboard (see the instructions in Table \ref{table_keycustomize}). The goal is to let the robot arm successfully move from the initial pose (the pose in the figure)  to reach the target (the star) while avoiding the obstacle.  Note that we have no knowledge about the location and size of the obstacle.} 
	\label{fig_robot_iteration0}
\end{figure}

\subsubsection{Robot Setup}\label{exp_armsimenv} 
The dynamics of a robot arm and  its physical parameters  follow Section \ref{section_exp_robotarm_1}.  The robot initial state is $\boldsymbol{x}_0{=}[-\frac{\pi}{2}, 0, 0, 0]\tran$, as  in Fig. \ref{fig_robot_iteration0}. The parameterized cost function $J(\boldsymbol{\theta})$ in (\ref{equ_objective}) is
\begin{subequations}\label{robotarm_features}
	\begin{align}
	\boldsymbol{\phi}&=[ q_1^2,\, q_1,\, {q}_2^2,\,{q}_2, \, \norm{\boldsymbol{u}}^2]\tran\in\mathbb{R}^5,\label{robotarm_features.1}\\
	\boldsymbol{\theta}&=[\theta_1,\theta_2,\theta_3,\theta_4,\theta_5]\tran\in\mathbb{R}^5.
	\end{align}
\end{subequations} 
Here, $\boldsymbol{\theta}\tran\boldsymbol{\phi}$  is  a general second-order polynomial. It is worth noting that in practice if prior knowledge about good features is not available,  general polynomial features are a good option.
The final cost  $h(\boldsymbol{x}_{T+1})$ in (\ref{equ_objective}) is 
\begin{equation}\label{equ_exp_pendulum_finalcost}
h(\boldsymbol{x}_{T+1})=100\big((q_1-\frac{\pi}{2})^2+q_2^2+\dot{q}_1^2+\dot{q}_2^2\big),
\end{equation} 
as   the robot arm aims to finally reach and stop at  the target pose: ${q}^{\text{target}}_1=\frac{\pi}{2}$ and ${q}^{\text{target}}_2=0$,  marked in Fig. \ref{fig_robot_iteration0}. The time  horizon of the game  is set as $T=50$ (that is, $50\Delta=10$s). Since we   choose the polynomial features  (\ref{robotarm_features.1}),      $\boldsymbol{\theta}$ will dictate  how the robot  reaches  the target (i.e., the specific trajectory to the goal).
By default, the initial  search space $\boldsymbol{\Omega}_0$ is set 
\begin{equation}\label{exp_robotarm_initialcondition}
\boldsymbol{\Omega}_0{=}\{\boldsymbol{{\theta}}\,|\,
\theta_1, \theta_3\in[0,1], \theta_2,\theta_4\in[-3,3], \theta_5\in[0, 0.5]
\}.
\end{equation}

Without human corrections, the robot with a random $\boldsymbol{\theta}_0$  will move and crash into the obstacle in Fig. \ref{fig_robot_iteration0}. The goal of the game is to let a human participant teach the robot arm to learn a  cost function, such that it can  move from the initial condition  and reach the target pose while avoiding the obstacle. Note that we have no knowledge about the  obstacle.

\begin{table}[h]
	\centering
	\caption	{Keyboard interface for the robot arm game.}
	\begin{tabular}{lll}
		\toprule
		Keys    & Directional correction & Interpretation of correction \\
		\midrule
		
		\emph{up} &   $\boldsymbol{a}=[1,0]$ &  counter-close-wise torque to Joint 1 \\
		
		\emph{down} &   $\boldsymbol{a}=[-1,0]$ &  close-wise torque to Joint 1 \\
		
		\emph{left} &   $\boldsymbol{a}=[0,1]$ &  counter-close-wise torque to Joint 2  \\
		
		\emph{right} &   $\boldsymbol{a}=[0,-1]$ &  close-wise torque to Joint 2 \\
		
		\bottomrule
	\end{tabular}
	
	\begin{tablenotes}
		\item[1] \textsuperscript{\textcolor{red}{*}} 	When implementing  \cite{jain2015learning,bajcsy2017learning},
		to allow   magnitude corrections via the  same interface, we additionally  detect the key pressing duration  $\delta t$  and interpret the  correction magnitude $u$ to be proportional to  $\delta t$ with saturation $u_{\max}$; i.e., $u=\min\{\beta \delta t, u_{\max}\}$ with $\beta=2$ and $u_{\max}=1$.  The magnitude correction using the above interface is $\boldsymbol{u}=u\boldsymbol{a}$. The correction time $t_k$ is the beginning time of the key press. 
	\end{tablenotes}
	
	\label{table_keycustomize}
\end{table}

\subsubsection{Interface} \label{exp_armsiminterface}

In the  robot arm  game, we use  a keyboard as the interface for a  human participant to apply directional corrections. We  customize  (\emph{up, down, left, right}) keys and associate them  with corresponding  directional corrections as listed in Table \ref{table_keycustomize}. During the game, at each learning iteration,  a human participant is allowed to press one or multiple keys from (\emph{up, down, left, right}), and  the  interface is listening to which key(s) the human player hits and  recording the timing  of the keystroke(s). The recorded information is translated into the directional correction $\boldsymbol{a}_{t_k}$ as per Table \ref{table_keycustomize}. For example, at  iteration $k$, while the robot  is  executing the motion trajectory $
\boldsymbol{\xi}_{\boldsymbol{\theta}_k}$, a human player hits the  \emph{up} and \emph{left} keys simultaneously at the time step $10$; then the corresponding correction information is translated into $\boldsymbol{a}_{t_k}=[1,1]\tran$ with $t_k=10$ according to Table~\ref{table_keycustomize}. We obtain averaged $\boldsymbol{\bar{a}}_{k}$ from $\boldsymbol{a}_{t_k}$ via (\ref{equ_aug_correction_vec}) and (\ref{equ_assumption_modify.abar}).

\subsection{6-DoF Quadrotor  Maneuvering Game}\label{game.quadrotor}

\begin{figure}[h]
	\centering
	\includegraphics[width=0.35\textwidth]{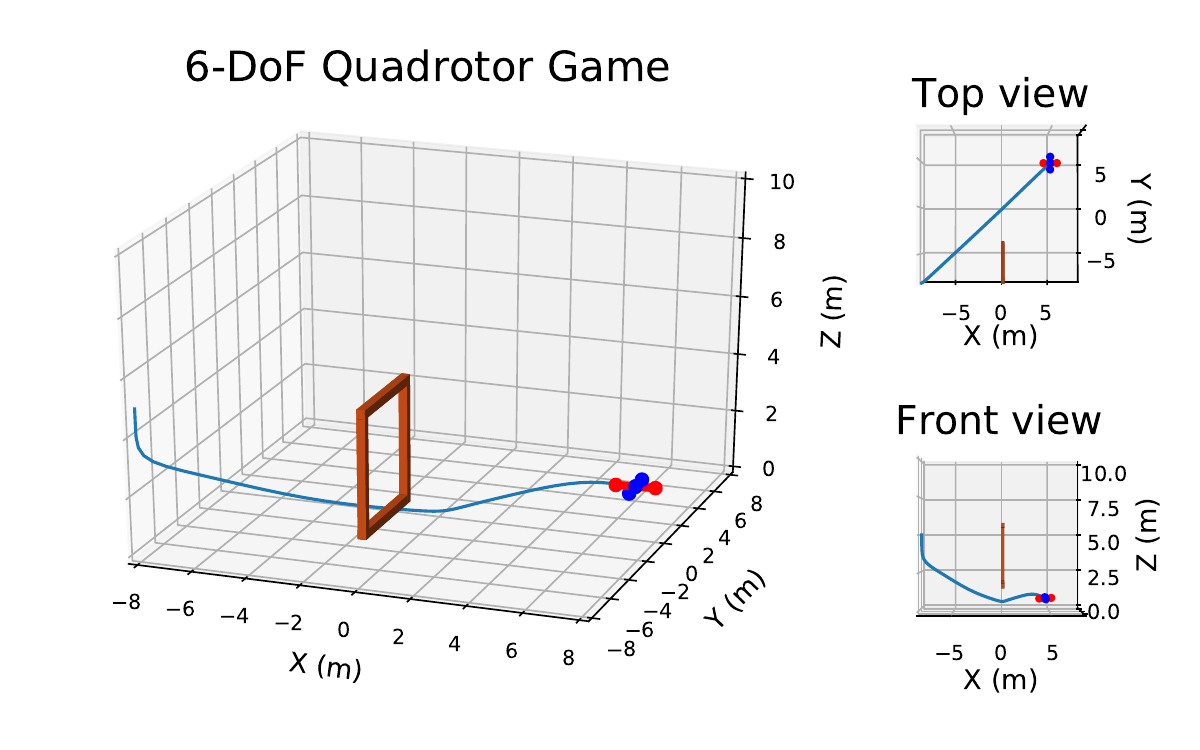}
	\caption{The 6-DoF quadrotor game. A human participant teaches the quadrotor to learn a   cost function by applying directional corrections through a keyboard (see the instruction in Table \ref{table_keycustomize_uav}). The goal is that  the quadrotor can  fly from the initial position (in the bottom left),  go through a  gate (colored in brown), and land on the target  (in the upper right).  Note that we have no knowledge about the gate.}
	\label{fig_uav_iteration0}
\end{figure}
\subsubsection{Robot Setup}
The dynamics of a quadrotor  is
\begin{subequations}\label{uav_dynamics}
	\begin{align}
	\dot{\boldsymbol{r }}_{I}&=\dot{\boldsymbol{v}}_{I},\\
	m\dot{\boldsymbol{v}}_{I}&=m\boldsymbol{g}_{I}+\boldsymbol{f}_{I},\\
	\dot{\boldsymbol{q}}_{B/I}&=\frac{1}{2}\Omega(\boldsymbol{\omega}_{B}){\boldsymbol{q}}_{B/I},\label{quaterdiff}\\
	J_{B}\dot{\boldsymbol{\omega}}_{B}&=\boldsymbol{\tau}_{B}-\boldsymbol{\omega}_{{B}}\times J_{B}\boldsymbol{\omega}_{B}.
	\end{align}
\end{subequations}
Here, the subscripts $_{B}$ and $_{I}$ denote the quantities  expressed in the quadrotor's body frame and world frame, respectively;  $m$ is the mass of the quadrotor; $\boldsymbol{r}_I\in\mathbb{R}^{3}$ and $\boldsymbol{v}_I\in\mathbb{R}^{3}$ are the position and velocity, respectively; $J_{B}\in\mathbb{R}^{3\times3}$ is the moment of inertia; $\boldsymbol{\omega}_{B}\in\mathbb{R}^{3}$ is the angular velocity; $\boldsymbol{q}_{B/I}\in\mathbb{R}^{4}$ is the unit quaternion \cite{kuipers1999quaternions} describing the attitude of the quadrotor's   body frame with respect to the world frame; (\ref{quaterdiff}) is the  time derivative of  the   quaternion  with $\Omega(\boldsymbol{\omega}_B)$ being the matrix form of $\boldsymbol{\omega}_{B}$  for quaternion multiplication \cite{kuipers1999quaternions};
$\boldsymbol{\tau}_{B}\in \mathbb{R}^{3}$ is the torque vector applied to the quadrotor; and $\boldsymbol{f}_{I}\in \mathbb{R}^{3}$ is the total force vector applied to the center of mass (COM). The total force magnitude $ \norm{\boldsymbol{f}_{I}}={ {f} }$ (along the z-axis of the quadrotor's  body frame) and the torque $\boldsymbol{\tau}_{B}=[\tau_x, \tau_y, \tau_z]\tran$ are generated by four rotor thrusts $[T_1, T_2, T_3, T_4]\tran$ as follows:

\begin{small}
	\begin{equation}
	\begin{bmatrix}
	{ {f}}\\
	\tau_x\\
	\tau_y\\
	\tau_z
	\end{bmatrix}=
	\begin{bmatrix}
	1&1 &1 &1\\
	0&-l_\text{w}/2& 0 & l_\text{w}/2 \\
	-l_\text{w}/2& 0 & l_\text{w}/2 & 0 \\
	\kappa&-\kappa&\kappa&-\kappa
	\end{bmatrix}
	\begin{bmatrix}
	T_1\\
	T_2\\
	T_3\\
	T_4
	\end{bmatrix},
	\end{equation}
\end{small}%
with $l_\text{w}$ denoting  the wing length of the quadrotor and $\kappa$ a fixed constant, here $\kappa=0.1$. In  dynamics   (\ref{uav_dynamics}), the gravity constant $\norm{\boldsymbol{g}_I}$ is set as $10$m/s$^2$ and the other  physical parameters  are units. We define the state  vector of the quadrotor as 
\begin{equation}
\boldsymbol{x}=\begin{bmatrix}
\boldsymbol{r}_I & \boldsymbol{v}_I  &\boldsymbol{q}_{B/I}  & \boldsymbol{\omega}_B 
\end{bmatrix} \in\mathbb{R}^{13},
\end{equation}
and  the control input vector as 
\begin{equation}
\boldsymbol{u}=\begin{bmatrix}
T_1 & T_2  &T_3 & T_4
\end{bmatrix}\tran \in\mathbb{R}^{4}.
\end{equation}
We discretize   (\ref{uav_dynamics})  by the Euler method with   interval $\Delta=0.1$s. To achieve  {$SE$(3)} maneuvering, we  define the attitude error between the quadrotor  attitude $\boldsymbol{q}$ and   target  $\boldsymbol{q}^{\text{target}}$ as \cite{lee2010geometric}
\begin{equation}
e(\boldsymbol{q},\boldsymbol{q}_{\text{target}})=\small{\frac{1}{2}} \text{trace}(I-R\tran(\boldsymbol{q}^{\text{target}})R(\boldsymbol{q})),
\end{equation}
where $R(\boldsymbol{q})\in\mathbb{R}^{3\times3}$ is the  rotation matrix  corresponding to~$\boldsymbol{q}$. 

\smallskip
In the cost function    (\ref{equ_objective}),  we  set the   final cost  term as
\begin{align}
h({\boldsymbol{x}_{T+1}})=\,&\norm{\boldsymbol{r}_I-\boldsymbol{r}_I^{\text{target}}}^2+10\norm{\boldsymbol{v}_I}^2+\nonumber\\
&100e(\boldsymbol{q}_{B/I},\boldsymbol{q}_{B/I}^{\text{target}})+10\norm{\boldsymbol{w}_B}^2, \label{uav_objective.2}
\end{align}
because the quadrotor aims to finally land on a target position  $\boldsymbol{r}_{I}^\text{target}$ with a target attitude $\boldsymbol{q}_{B/I}^{\text{target}}$. Here, $\boldsymbol{r}_{I}=[r_x,r_y,r_z]\tran$ is the position of the quadrotor expressed in the  world  frame. We set the weight-feature  cost term as
\begin{subequations}\label{uav_objective}
	\begin{align}
	\boldsymbol{\phi}&=\begin{bmatrix}
	r_{x}^2 & r_{x} & r_{y}^2& r_{y}&  
	r_{z}^2& r_{z}& \norm{\boldsymbol{u}}^2
	\end{bmatrix}\tran\in\mathbb{R}^7,\label{uav_objective.1}\\
	\boldsymbol{\theta}&=\begin{bmatrix}
	\theta_1 & \theta_2 &\theta_3 & \theta_4 &\theta_5 & \theta_6  &\theta_7
	\end{bmatrix}\tran\in\mathbb{R}^7.
	\end{align}
\end{subequations}
Here,   feature vector $\boldsymbol{\phi}$ consists of  general
polynomial features and     $\boldsymbol{\theta}$   will  determine the specific  trajectory of the quadrotor to the target.   By default,  the initial weight search space is
\begin{multline}\label{exp_uav_initialcondition}
\boldsymbol{\Omega}_0{=}\{\boldsymbol{{\theta}}\,|\,
\theta_1, \theta_3, \theta _5\in[0,1], \\ \theta_2,\theta_4, \theta_6\in[-8,8], \theta_7\in[0, 0.5]
\}.
\end{multline}

As  in Fig. \ref{fig_uav_iteration0}, the goal of this  quadrotor  game  is  to let a human participant  teach the quadrotor to learn a   cost function, such that the quadrotor can successfully fly from the initial position $\boldsymbol{r}_{I}(0)=[-8, -8, 5]\tran$ (in the bottom left),  go through a  gate (in brown color), and finally land on a  target position $\boldsymbol{r}_{I}^{\text{target}}=[8, 8, 0]\tran$ (in the upper right) with the target attitude $\boldsymbol{q}_{B/I}^{\text{target}}=[1, 0, 0, 0]\tran$. The initial    attitude of the quadrotor is $\boldsymbol{q}_{B/I}(0)=[1, 0, 0, 0]\tran$ and initial velocities  zeros. The time horizon for this game is  $T=50$, that is, $T\Delta=5$s. Note that we have no knowledge about the gate.

\begin{table}[h]
	\centering
	\caption {Keyboard interface for   6-DoF quadrotor game.}
	\begin{tabular}{lll}
		\toprule
		Keys & Directional correction &  Interpretation of correction  \\
		\midrule
		
		\emph{`up'} &   \begin{tabular}{@{}c@{}}$T_1{=}1, T_2{=}1,$\\
			$T_3{=}1, T_4{=}1$	
		\end{tabular}  & Upward force  applied at COM  \\[8pt]
		
		\emph{`down'} &   \begin{tabular}{@{}c@{}}$T_1{=}{-}1, T_2{=}{-}1,$\\
			$T_3{=}{-}1, T_4{=}{-}1$	
		\end{tabular} &   Downward force applied at COM  \\[8pt]
		
		\emph{`w'} &  
		\begin{tabular}{@{}l@{}} $T_1{=}0, T_2{=}1$\\
			$ T_3{=}0,  T_4{=}{-}1$	
		\end{tabular}
		& Negative torque along body-axis x \\[8pt]
		
		\emph{`s'} &   		\begin{tabular}{@{}l@{}} $ T_1{=}0, T_2{=}{-}1$\\
			$ T_3{=}0,  T_4{=}1$	
		\end{tabular} & Positive torque along body-axis x \\[8pt]
		
		\emph{`a'} & \begin{tabular}{@{}l@{}} $ T_1{=}1,  T_2{=}0 $\\
			$T_3{=}{-}1, T_4{=}0$	
		\end{tabular}   & Negative torque along body-axis y \\[8pt]
		
		\emph{`d'} &  \begin{tabular}{@{}l@{}}  $ T_1{=}{-}1,  T_2{=}0$ \\
			$T_3{=}1, T_4{=}0$	
		\end{tabular}  & Positive torque along body-axis y \\[4pt]

		\bottomrule
	\end{tabular}
	\begin{tablenotes}
		\item[1] \textsuperscript{\textcolor{red}{*}}  When implementing  \cite{jain2015learning,bajcsy2017learning},
		to allow   magnitude corrections via the  same interface, we  additionally  detect  key pressing duration  $\delta t$  and interpret the  correction magnitude $u$ to be  $u=\min\{\beta \delta t, u_{\max}\}$ with $\beta{=}2$ and $u_{\max}{=}1$ in our implementation.  Thus, the magnitude correction using the above interface is $\boldsymbol{u}=u[T_1, T_2, T_3, T_4]\tran$. The correction time $t_k$ is the beginning time of the key press.

	\end{tablenotes}
	\label{table_keycustomize_uav}
\end{table}

\subsubsection{Interface} \label{section.game.quadrotor}
The  keyboard interface for applying directional corrections is in Table~\ref{table_keycustomize_uav}. Specifically, we customize the (`\emph{up}', `\emph{down}', `\emph{w}', `\emph{s}', `\emph{a}', `\emph{d}') keys and map them to specific directional correction signals. During a learning iteration, a human participant is allowed to press one or multiple  keys  in Table \ref{table_keycustomize_uav}. The interface listens to  the keystrokes and translates the keystrokes into the directional corrections based on Table \ref{table_keycustomize_uav}. The time  step at which  a key is hit is  the correction time  $t_k$.  For example,  if a human participant presses \emph{`s'} key at  time step $5$;  then, the   directional correction will be $\boldsymbol{a}_{t_k}=[0, -1, 0, 1]\tran$  with  $t_k=5$.  Based on  (\ref{equ_aug_correction_vec}) and (\ref{equ_assumption_modify.abar}), we  obtain $\boldsymbol{\bar{a}}_{k}$ from $\boldsymbol{a}_{t_k}$.

\subsection{Participants and Procedure} \label{game.participant}

A total of 17 volunteers from Purdue College of Engineering have participated in our user study. Among these 17 participants,  1 was female and 16  male, with ages from 20 to 37 years old ($25.812\pm3.936$). 5 of those  17 participants had no robotics or related background, and all participants were novices to the two games. This user study had been reviewed and approved by the  Institutional Review Board (IRB) of  Purdue University, and all participants had signed the consent forms.

Each participant was instructed to play the above two games. In each game, a participant played 5 successive rounds with the proposed method and 5  successive rounds with the magnitude-correction method \cite{jain2015learning,bajcsy2017learning}. \emph{A game round} is defined as a complete run of a learning method until the success or failure of the round. A failure of a game round is identified if the robot fails to accomplish the task after the total number of human corrections has exceeded the maximum correction count. In our user study, the maximum correction count is 20 for the robot arm game and 15 for the quadrotor game.  We set the maximum correction count because the participants are expected to try their best to teach the robot with as few corrections as possible.

We  used a within-subjects design: the order of the two games (i.e., which game goes first) and learning methods used (i.e., which method is used first) were counterbalanced across all participants in our user study. Also, the participants were not informed which methods were being used for the current game. Before starting a game, each participant was instructed how to play the game and was given one hands-on game round to get familiar with the interface before the experiment recording starts. After the games, each participant was asked to finish a  post-game survey about her/his  opinions of the game experience (which will be reported in  Section \ref{userstudy.analysis.others}).

\begin{figure*}[h]
	\centering
	\begin{subfigure}{.31\textwidth}
		\centering
		\includegraphics[width=\linewidth]{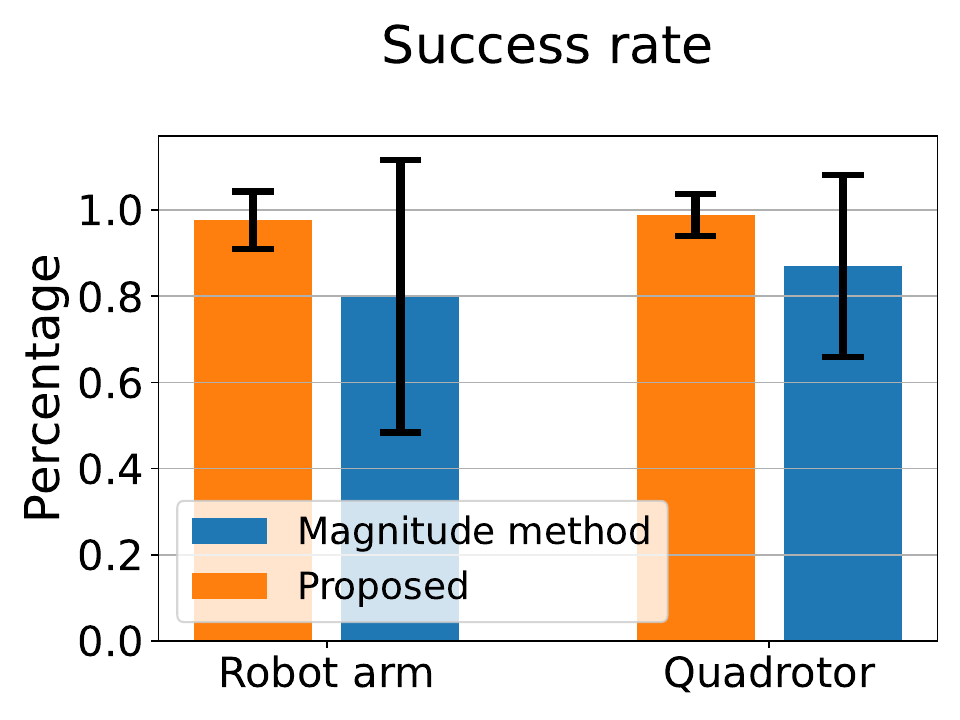}
		\caption{Success rate}
		\label{userstudy.result.1}
	\end{subfigure}
	\hfill
	\begin{subfigure}{.31\textwidth}
		\centering
		\includegraphics[width=\linewidth]{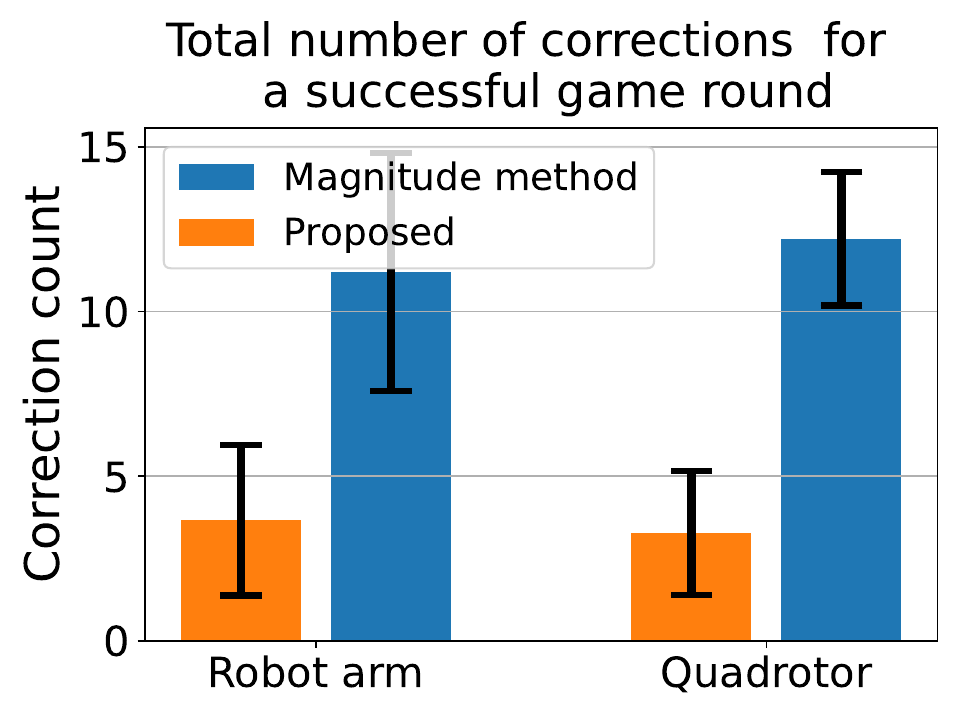}
		\caption{Total  correction count in a success game}
		\label{userstudy.result.2}
	\end{subfigure}
	\hfill
	\begin{subfigure}{.31\textwidth}
		\centering
		\includegraphics[width=\linewidth]{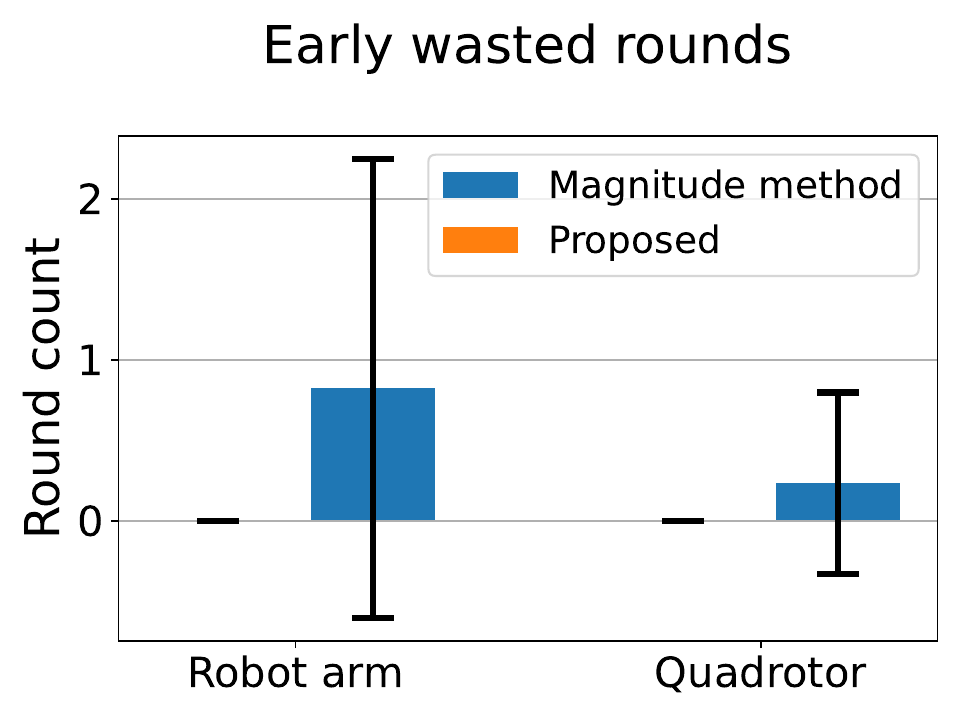}
		\caption{Early wasted rounds}
		\label{userstudy.result.3}
	\end{subfigure}
	\caption{The outcomes of all 17 participants for
playing different robot games (robot-arm or quadrotor)		
		 using different learning methods (the proposed method and magnitude-correction  method \cite{jain2015learning,bajcsy2017learning}). (a) shows the success rate, i.e., the ratio of successful game rounds over total 5  rounds attempted; (b) shows the total number of human corrections needed for one successful game round; and (c) shows the  number of the early wasted (failed) game rounds before a participant constantly play successfully. The bars denote the mean values over all participants, and the top line segments are the
		standard deviations.  Analysis  based on  statistical tests (with significance level $\alpha=0.05$) is   in  Section \ref{game.resuts}.  One illustrative  round for the  robot arm  game from one participant is in Fig. \ref{exp_fig_arm_correction} and Table \ref{table_robotarm_results}, and one illustrative quadrotor game round from a participant is in Fig. \ref{exp_fig_uav_correction} and Table~\ref{table_uav_results}.
	} 
	\label{userstudy.result}
\end{figure*}

\subsection{Outcome Measurements} \label{game.measurement}

For each game (robot arm or quadrotor) with each learning method (the proposed method or magnitude-correction method \cite{jain2015learning,bajcsy2017learning}), we measure the following outcomes for one participant in his/her 5-successive game rounds.

\begin{itemize}
	\item \emph{Success rate}. This is the ratio of successful game rounds over all rounds attempted  (which is 5 here).  This outcome indicates
	the \textbf{efficacy} of a learning method. 	
	
	\smallskip

	\item \emph{Total number of corrections for a successful game round.} This measures how many human corrections are needed for a successful game round. This measure can show the \textbf{efficiency}/\textbf{effortlessness}  of a learning method---fewer corrections mean less human effort in teaching a robot. 
	
	\smallskip

	\item \emph{Early wasted rounds.} This is to measure how many early rounds of a game are wasted (failed) before a participant begins to constantly play successful game rounds. This can show the \textbf{accessibility}  of a learning method---fewer early wasted rounds indicate a  more accessible experience for a participant to successfully teach a robot.
\end{itemize}

\noindent
The outcomes of all participants are organized according to the method type  (the proposed method and magnitude-correction method \cite{jain2015learning,bajcsy2017learning}) and the game type (robot-arm and quadrotor). We report all outcomes in Fig. \ref{userstudy.result} and provide a detailed analysis of the results based on statistical tests.

\subsection{Results and Analysis} \label{game.resuts}

The statistics of three outcomes over all 17 participants are shown in Fig. \ref{userstudy.result}. Each outcome is presented with respect to each learning method (the proposed method or magnitude-correction method \cite{jain2015learning,bajcsy2017learning}) on each game (robot arm or quadrotor). All three outcomes here are averaged over all participants with the error bar denoting the standard deviation over all participants.  We have the following analysis of the  results based on  statistical tests with significance level $\alpha=0.05$,

\begin{figure*}[h]
	\centering
	\begin{subfigure}{.22\textwidth}
		\centering
		\includegraphics[width=\linewidth]{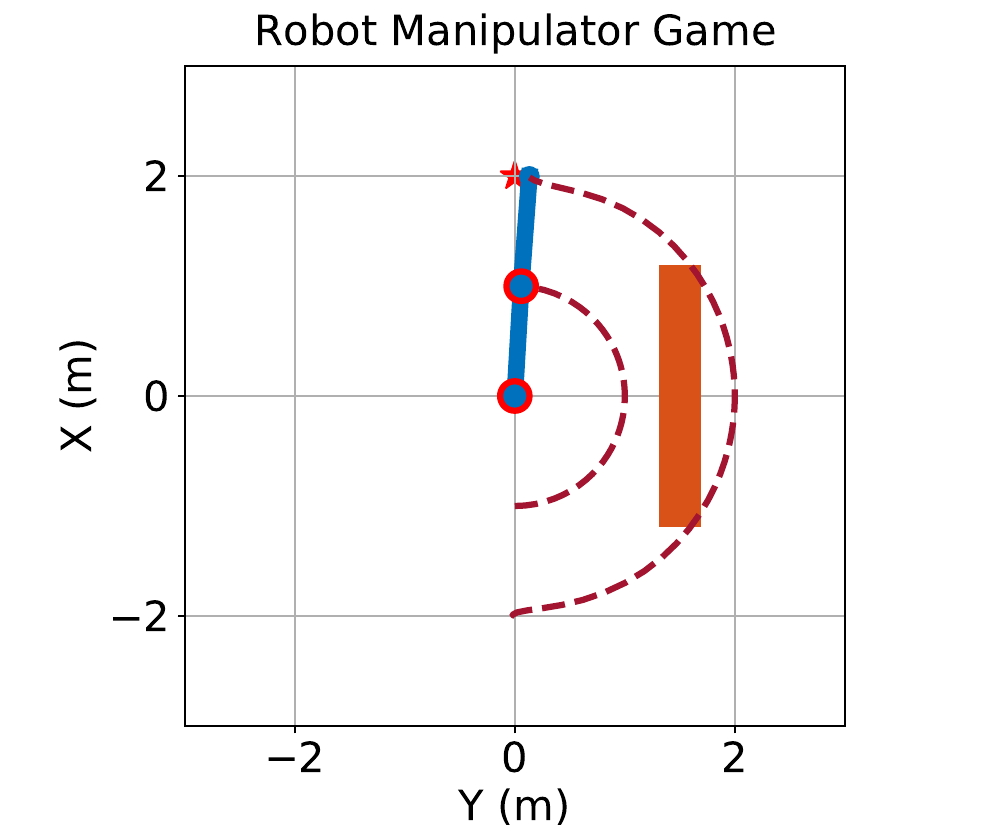}
		\caption{Iteration $k=1$}
		\label{exp_fig_arm_correction.1}
	\end{subfigure} 
	\begin{subfigure}{.22\textwidth}
		\centering
		\includegraphics[width=\linewidth]{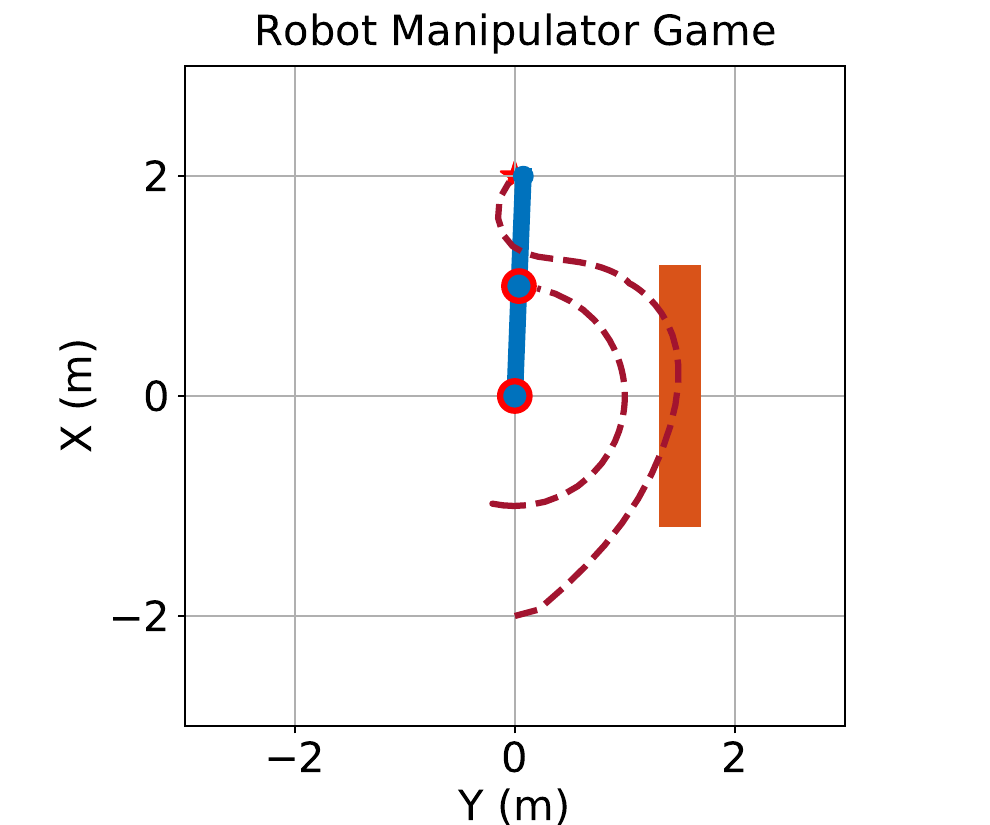}
		\caption{Iteration $k=2$}
		\label{exp_fig_arm_correction.2}
	\end{subfigure}
	\begin{subfigure}{.22\textwidth}
		\centering
		\includegraphics[width=\linewidth]{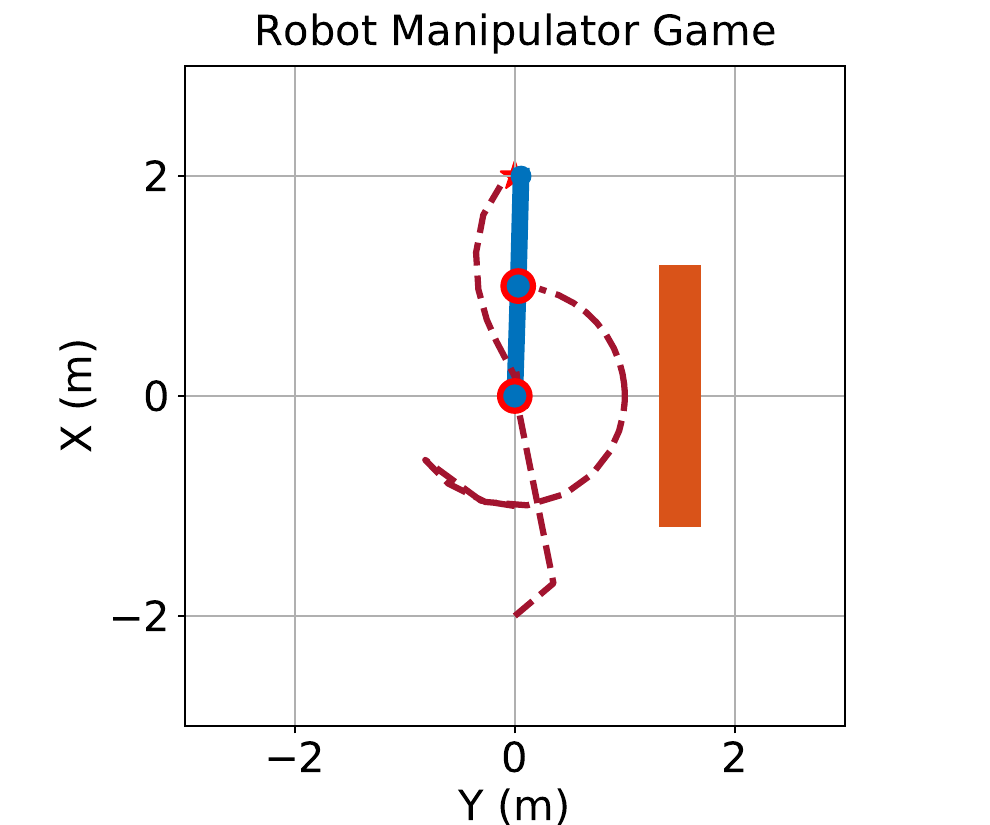}
		\caption{Iteration $k=3$}
		\label{exp_fig_arm_correction.3}
	\end{subfigure}
	\begin{subfigure}{.22\textwidth}
		\centering
		\includegraphics[width=\linewidth]{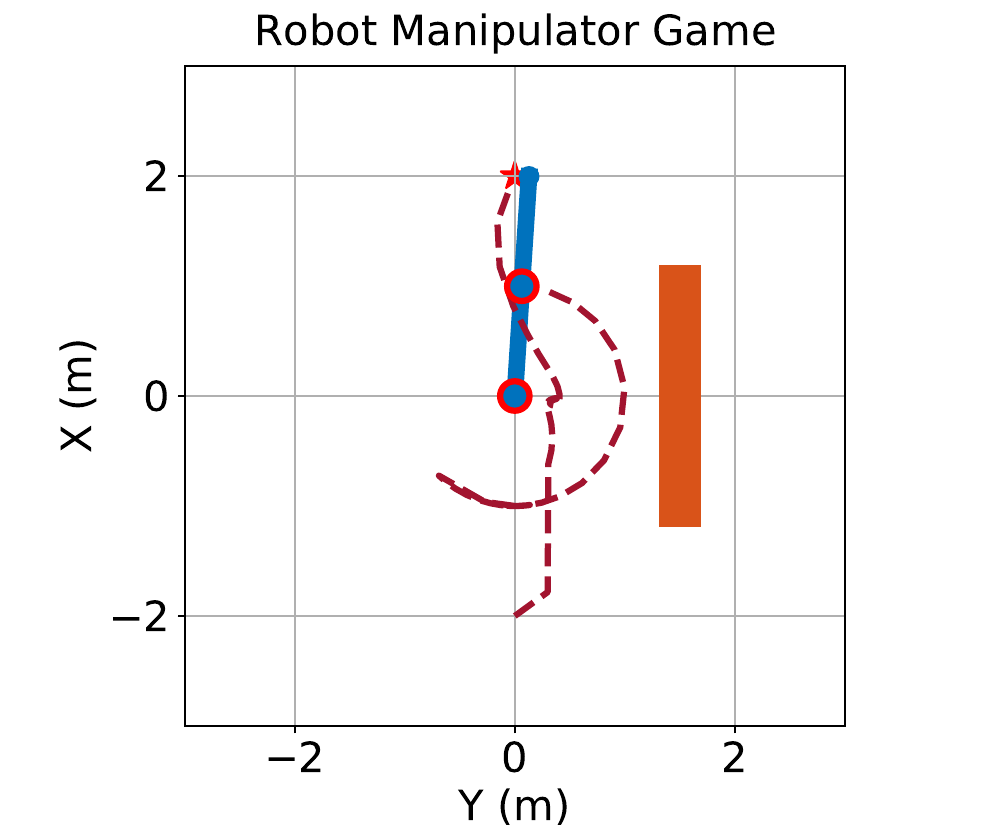}
		\caption{Iteration $k=4$}
		\label{exp_fig_arm_correction.4}
	\end{subfigure}
	\caption{One illustrative    round for the robot arm game from one participant.  At each   iteration~$k$, the robot  weight vector guess $\boldsymbol{{\theta}}_k$ and the participant directional correction $\boldsymbol{a}_{t_k}$  are reported in Table \ref{table_robotarm_results}. 
	} 
	\label{exp_fig_arm_correction}
\end{figure*}

\smallskip
\begin{table*}[h]
	\centering
	\caption	{An illustrative  round for the robot arm game from one participant.}
	\begin{tabular}{ccc}
		\toprule
		Iteration $k$   & Current weight vector guess $\boldsymbol{\theta}_k$ & A participant's directional correction $\boldsymbol{a}_{t_k}$ and correction time ${t_k}$ \\[3pt]
		\midrule
		$k=1$ & $\boldsymbol{\theta}_k=[0.50,\, 0.00, \,0.50, -0.00,\, 0.25]\tran$  & $\boldsymbol{a}_{t_k}=[0, 1]$ (i.e., \emph{left} key pressed) \quad and \quad ${t_k}=11$
		\\[3pt]
		$k=2$ & $\boldsymbol{\theta}_k=[0.50,\, 0.00, \,0.50, -1.50, \,0.25]\tran$  & $\boldsymbol{a}_{t_k}=[0, 1]$ (i.e., \emph{left} key pressed) \quad and  \quad${t_k}=16$\\[3pt]
		$k=3$ & $\boldsymbol{\theta}_k=[0.50,\, 0.00, \,0.34, -2.03, \,0.25]\tran$  & $\boldsymbol{a}_{t_k}=[-1, 0]$ (i.e., \emph{down} key pressed) \quad and\quad ${t_k}=34$\\[3pt]
		$k=4$ & $\boldsymbol{\theta}_k=[0.50,\, 1.48, \,0.36, -2.00, \,0.25]\tran$  & \emph{ Game success! }\\
		\bottomrule
	\end{tabular}
	\label{table_robotarm_results}
\end{table*}

\begin{figure*}
	\centering
	\begin{subfigure}{.248\textwidth}
		\centering
		\includegraphics[width=\linewidth]{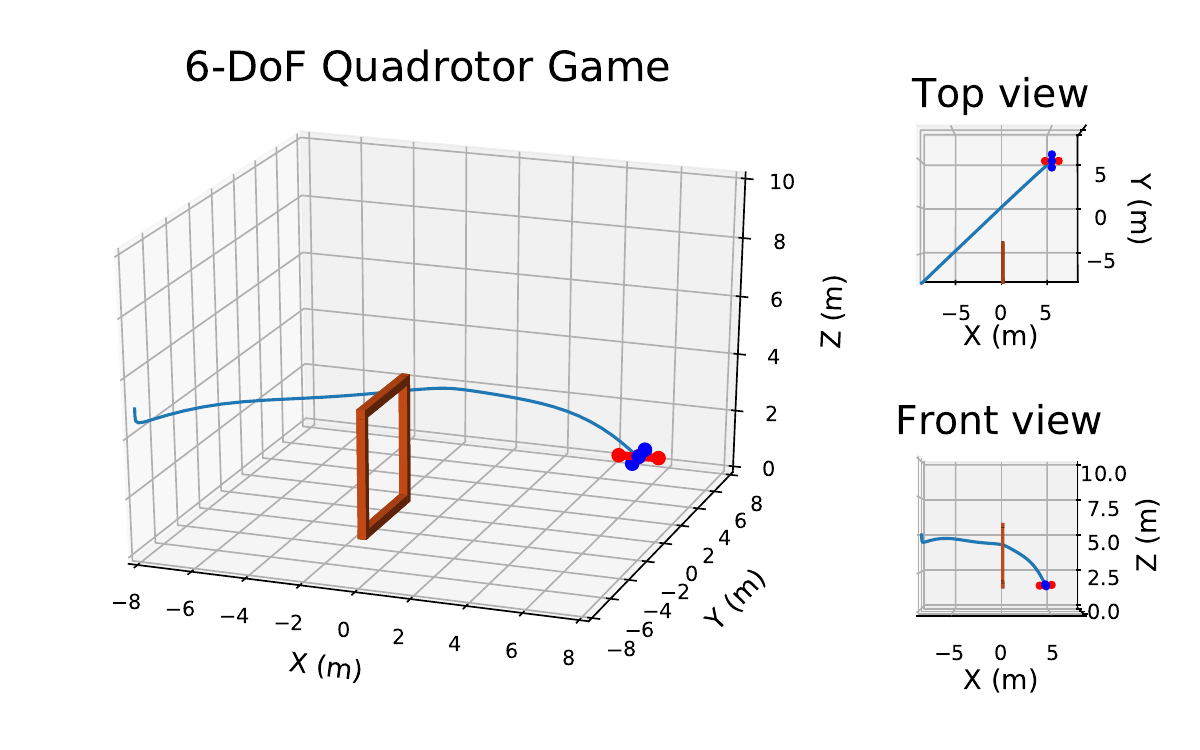}
		\caption{Iteration $k=2$}
		\label{exp_fig_uav_correction.1}
	\end{subfigure}
	\hfill
	\begin{subfigure}{.248\textwidth}
		\centering
		\includegraphics[width=\linewidth]{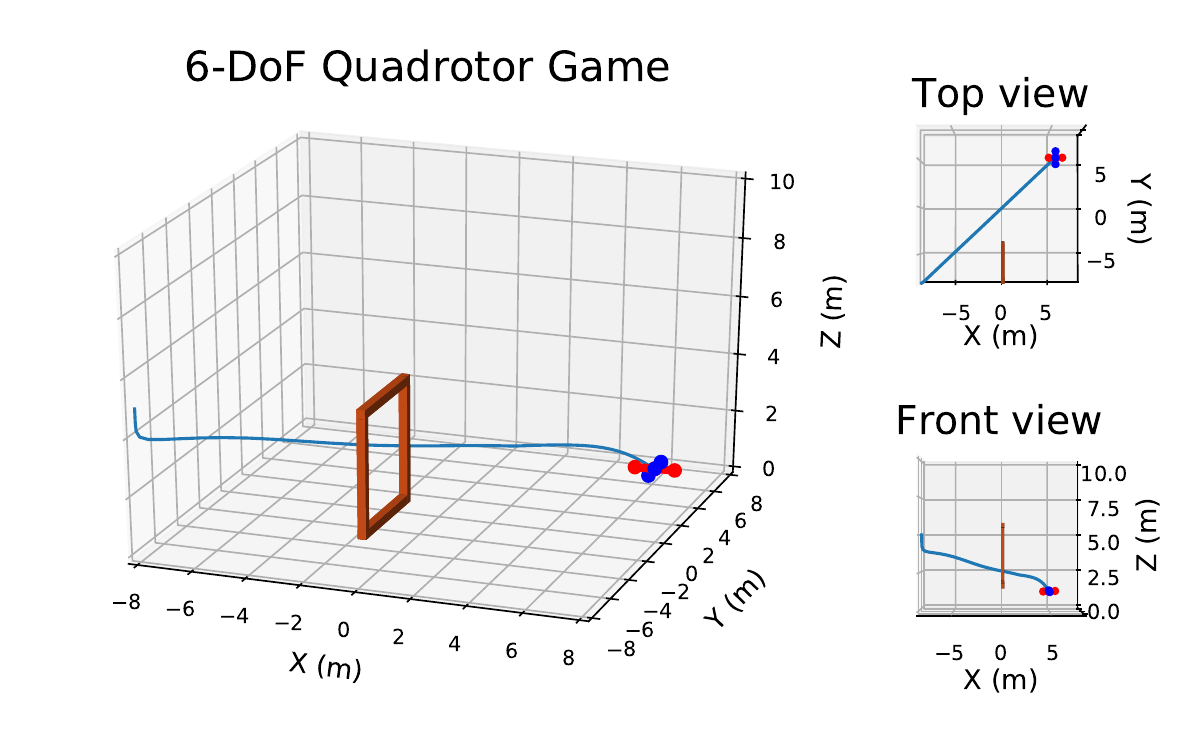}
		\caption{Iteration $k=3$}
		\label{exp_fig_uav_correction.2}
	\end{subfigure}\hfill
	\begin{subfigure}{.248\textwidth}
		\centering
		\includegraphics[width=\linewidth]{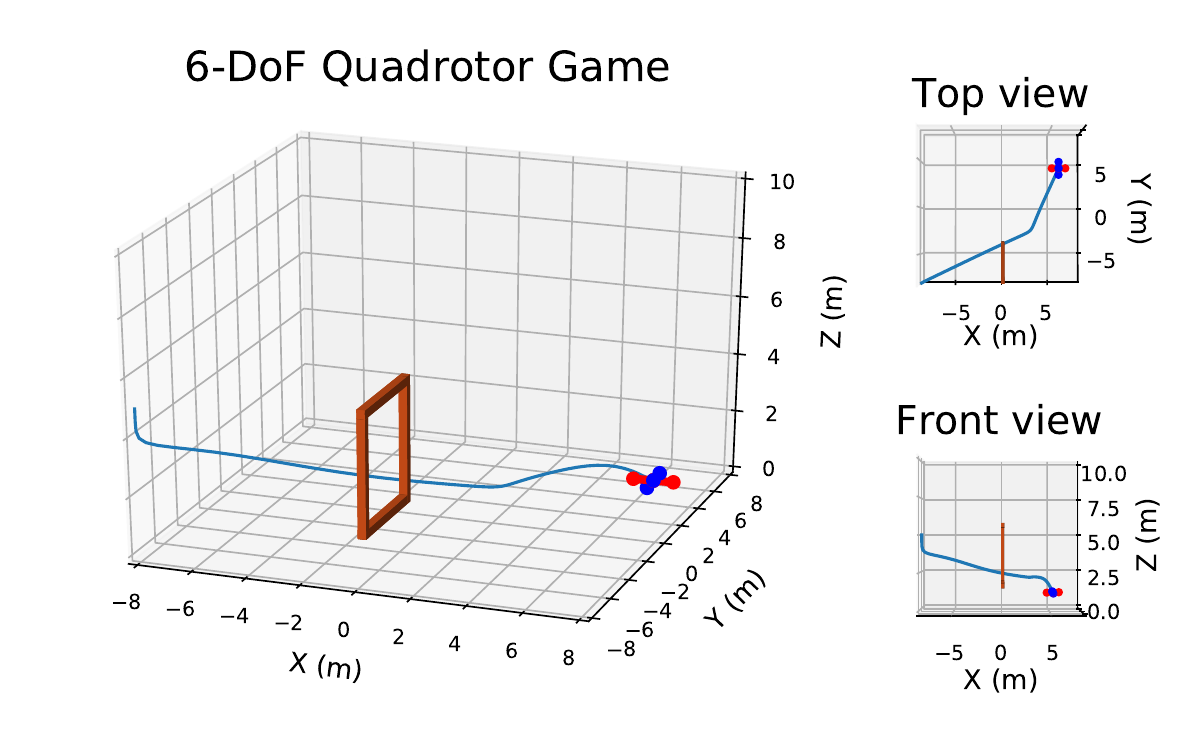}
		\caption{Iteration $k=4$}
		\label{exp_fig_uav_correction.3}
	\end{subfigure}\hfill
	\begin{subfigure}{.248\textwidth}
		\includegraphics[width=\linewidth]{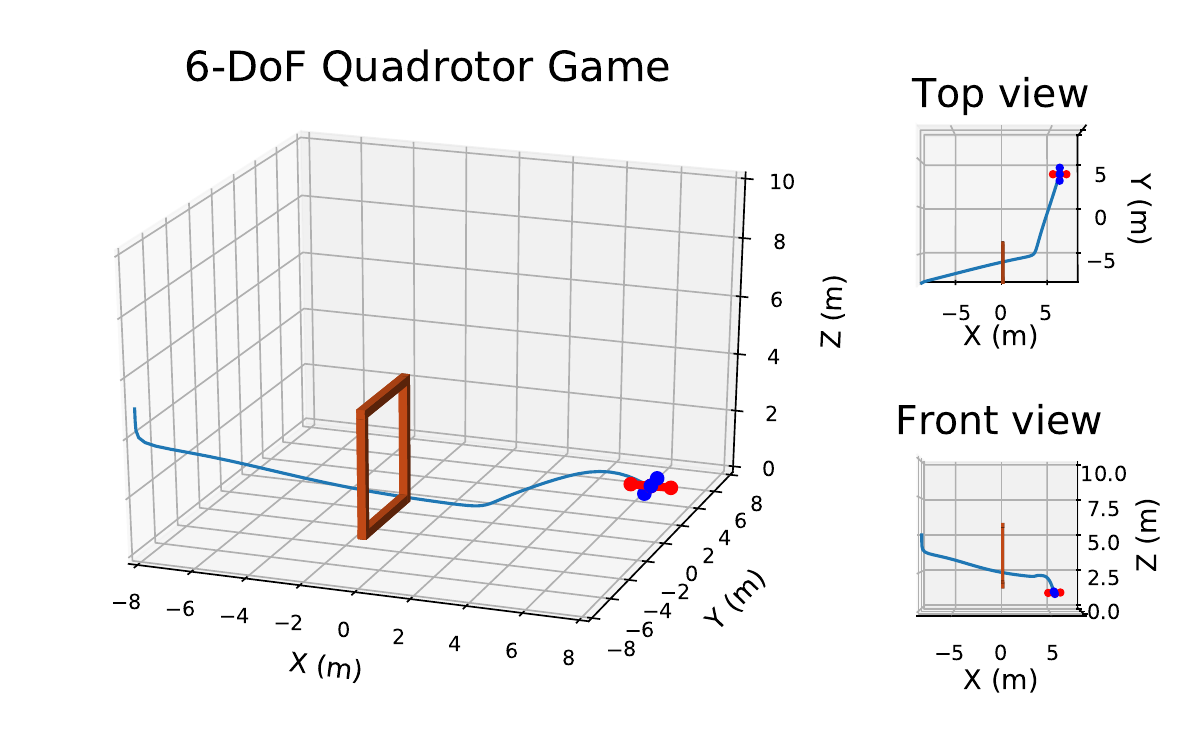}
		\caption{Iteration $k=5$}
		\label{exp_fig_uav_correction.5}
	\end{subfigure}
	\caption{One illustrative  round for the quadrotor game from one participant. At each iteration~$k$, the quadrotor weight vector  guess $\boldsymbol{{\theta}}_k$ and the participant directional correction $\boldsymbol{a}_{t_k}$   are  in Table  \ref{table_uav_results}. Iteration $k=1$ is shown in Fig. \ref{fig_uav_iteration0}.
	} 
	\label{exp_fig_uav_correction}
\end{figure*}

\begin{table*}[h]
	\centering
	\caption	{An illustrative  round for the quadrotor game from one participant (Iteration $k=1$ is shown in Fig. \ref{fig_uav_iteration0}).}
	\begin{tabular}{ccc}
		\toprule
		Iteration $k$   & Current weight vector guess $\boldsymbol{\theta}_k$ &  A participant's directional corrections $\boldsymbol{a}_{t_k}$ and  correction time ${t_k}$ \\
		\midrule
		$k=1$ & $\boldsymbol{\theta}_k=[0.50,\, 0.00, \,0.50, -0.00, \,0.50, \, -0.00,\, 0.25]\tran$  & 
		\begin{tabular}{@{}c@{}} $\boldsymbol{a}_{t_k}=[1, 1, 1, 1]$ (i.e., \emph{up} key pressed) \quad and \quad \,\,\,${t_k}=8$\\
			$\boldsymbol{a}_{t_k}=[1, 1, 1, 1]$ (i.e., \emph{up} key pressed) \quad and \quad ${t_k}=20$
		\end{tabular}
		\\[6pt]

		$k=2$ & $\boldsymbol{\theta}_k=[0.50,\, -0.00, \,0.50, -0.00, \,0.50, \, -3.99,\, 0.25]\tran$  & $\boldsymbol{a}_{t_k}=[-1, -1, -1, -1]$ (i.e., \emph{down} key pressed) \quad and \quad ${t_k}=14$\\[4pt]

		$k=3$ & $\boldsymbol{\theta}_k=[0.50,\, -1.70, \,0.50, -1.70, \,0.52, \, -1.89,\, 0.25]\tran$   & $\boldsymbol{a}_{t_k}=[ 0, -1,  0,  1]$ (i.e., \emph{'s'} key pressed) \quad and ${t_k}=13$\\[4pt]

		$k=4$ & $\boldsymbol{\theta}_k=[0.50,\, -2.76, \,0.50, -2.45, \,0.60, \, -2.22,\, 0.25]\tran$  & $\boldsymbol{a}_{t_k}=[ 0, -1,  0,  1]$ (i.e., \emph{'s'} key pressed) \quad and ${t_k}=19$\\[4pt]
		
		$k=5$ & $\boldsymbol{\theta}_k=[0.50,\, -3.11, \,0.50,\, 4.89, \,0.65 \, -2.67,\, 0.25]\tran$  & \emph{ Game success!  \quad }\\[4pt]
		
		\bottomrule
	\end{tabular}
	\label{table_uav_results}
\end{table*}

\subsubsection{Success Rate}
In Fig. \ref{userstudy.result.1}, for the robot arm game,  the proposed method obtains a success rate of $97.65\%\pm6.64\%$  compared to the  magnitude-correction method \cite{jain2015learning,bajcsy2017learning}'s $80.00\%\pm31.62\%$. The proposed method yields a significantly higher success rate than the magnitude-correction method with  $t$-score  $t=2.25$ and $p$-value $p=0.031<\alpha$.  For the quadrotor game in Fig. \ref{userstudy.result.1}, the proposed method attains  $94\%\pm12.8\%$ versus the  magnitude-correction method's $82\%\pm20.9\%$. It also shows a significantly higher success rate than the magnitude-correction method \cite{jain2015learning,bajcsy2017learning}, with  $t$-score $t=2.24$ and $p$-value $p=0.032\leq\alpha$. 

We provide the following comments for the  above results. Recall that the  magnitude-correction method \cite{jain2015learning,bajcsy2017learning}  assumes $\small J(\boldsymbol{u}_{0:T}^{\boldsymbol{{\theta}}_k}+\boldsymbol{\bar{a}}_k,\boldsymbol{\theta}^*){<} J(\boldsymbol{u}_{0:T}^{\boldsymbol{{\theta}}_k},\boldsymbol{\theta}^*)$, while the our method   assumes $\langle-\nabla J(\boldsymbol{u}_{0:T}^{\boldsymbol{\theta}_k},\boldsymbol{\theta}^*), \boldsymbol{\bar{a}}_k \rangle{>}0$, regardless of  magnitude $\norm{\boldsymbol{\bar{a}_k}}$. As shown in Fig.  \ref{figure_correctionvs}, \emph{the allowable region of  directional corrections  is much larger than that of   magnitude corrections}. Since all participants  are   novices to both games,  giving valid magnitude correction could be difficult for them.   By contrast,  novice participants  are more likely to give  valid directional corrections. This leads to  higher success rate and lower variance for the proposed  method than the magnitude-correction method.

The difficulty of giving valid magnitude corrections can also be seen from the number of early wasted/failed rounds in Fig. \ref{userstudy.result.3}. For the magnitude method, before a participant masters a game,  $0.82\pm 1.42$ early game rounds  are wasted/failed for the robot arm game, and $0.24\pm 0.56$ early rounds are failed for the quadrotor game. In contrast, the proposed method has $0\pm0$ early wasted rounds in both games.  This means that  a novice participant needs more practice  to master the skill of giving valid magnitude corrections.

\subsubsection{Total Number of Corrections  for  Successful Game}
In  Fig. \ref{userstudy.result.2}, for a successful robot arm game,  the proposed  method needs a total of  $3.66\pm2.29$  corrections, while the magnitude-correction method   \cite{jain2015learning,bajcsy2017learning} requires $11.21\pm3.62$   corrections. Thus, the proposed method requires significantly fewer  corrections than the magnitude-correction method, with $t$-score  $t=7.27$ and $p$-value $p=2.94\times10^{-8}\leq \alpha$. In the quadrotor game, the proposed  method takes a total of $3.27\pm1.88$   corrections for a successful  round, while  the magnitude-correction method  needs  $12.20\pm2.02$ corrections. This again shows that the proposed method is significantly more efficient (fewer corrections) than the magnitude-correction method, with $t$-score  $t=13.32$ and $p$-value $p=1.35\times10^{-14}<\alpha$.

We have the following interpretations for the above results. Recall that the  assumption for a valid magnitude correction $\boldsymbol{\bar{a}}_k$  is  $ J(\boldsymbol{u}_{0:T}^{\boldsymbol{{\theta}}_k}+\boldsymbol{\bar{a}}_k,\boldsymbol{\theta}^*){<} J(\boldsymbol{u}_{0:T}^{\boldsymbol{{\theta}}_k},\boldsymbol{\theta}^*)$. Since all participants are novice to the games, the magnitude corrections made by a participant were not always valid in a game round; in other words, some magnitude corrections can be over-correction. Because of those  invalid corrections, the magnitude-correction method takes longer (more corrections) to succeed. In contrast,  the proposed method   only leverages   directional corrections, which are  more likely to  satisfy $\langle-\nabla J(\boldsymbol{u}_{0:T}^{\boldsymbol{\theta}_k},\boldsymbol{\theta}^*), \boldsymbol{\bar{a}}_k \rangle{>}0$; it requires   fewer human corrections (effort).

Since the total number of corrections for a successful game indicates how much effort a participant needs to  successfully teach a  robot. Thus, the above results in Fig. \ref{userstudy.result.2} show that the proposed method is significantly more effortless than the magnitude-correction method  \cite{jain2015learning,bajcsy2017learning}.

\subsubsection{Early Wasted Rounds}
In  Fig. \ref{userstudy.result.3}, in the robot arm game (5  rounds in total), the early wasted (failed) rounds  is  $0\pm 0$ for the proposed method and  $0.82\pm 1.42$ for the magnitude-correction method \cite{jain2015learning,bajcsy2017learning};  thus, the proposed method requires  significantly fewer rounds of practice (thus is more accessible) than the magnitude-correction method, with  $t$-score  $t=2.38$ and $p$-value $p=0.02<\alpha$. For the quadrotor game (5 rounds in total), a participant wasted $0\pm 0$ early rounds using the proposed method, while $0.24\pm 0.56$ early rounds using the magnitude-correction method.  The early wasted rounds of the proposed method are not significantly lower than that of the magnitude-correction method, with     $t$-score  $t=1.73$ and $p$-value $p=0.09>\alpha$.

We have the following explanations for the above results. 
For the magnitude-correction method,  since not all magnitude corrections are valid, a participant needs more early trials to realize the effects of correction magnitude on the robot motion and how to apply valid  magnitude. Instead, for the proposed  method, a participant   only gives  directional commands, which can be more intuitive, as shown in Fig. \ref{exp_fig_arm_correction} and Fig. \ref{exp_fig_uav_correction}. 
Fig. \ref{userstudy.result.3}  shows that the proposed  method is potentially more accessible than the magnitude-correction method \cite{jain2015learning,bajcsy2017learning}

\subsubsection{Other Observations and Subjective Survey} \label{userstudy.analysis.others}
We also observe  that the number of the
directional corrections in the user study is generally smaller than that in the simulation cases in Section \ref{section.simulation}. This is because the convergence of the robot motion trajectory (to the desired motion in the human's perspective) is empirically faster than the convergence of the cost function weight vector (to the true $\boldsymbol{\theta}^*$). This has been shown in Fig. \ref{fig_compare}, where   the convergence of  $\norm{\boldsymbol{\theta}_k-\boldsymbol{\theta}^*}^2$  requires around 20 iterations, while the convergence of  $\norm{\boldsymbol{\xi}_{\boldsymbol{\theta}_k}-\boldsymbol{\xi}_{\boldsymbol{\theta}^*}}^2$ requires only 10 iterations. Since  a game is deemed successful as long as the robot motion can successfully navigate through the environment, thus a   participant usually takes fewer corrections. 
We also find that there is no unique solution for the choice of directional corrections and that different participants applied different corrections to manage the games.

We have also performed a  post-game survey about how participants perceived the two   learning methods. 10 out of 17 participants said that they felt \emph{the robot with the proposed method is much `smarter'  than the robot with the magnitude-correction method  \cite{jain2015learning,bajcsy2017learning}.} Those  opinions are also consistent with the  results in  Fig. \ref{userstudy.result}, as we have analyzed above.

\smallskip

\subsubsection{Summary} In summary of all  statistical  analysis above, we conclude that compared to the state-of-the-art magnitude-correction method \cite{jain2015learning,bajcsy2017learning},    the proposed method is
\begin{itemize}
	\item  \emph{significantly more effective} ($p< 0.033$) --- it attains  higher success rate for teaching robots,
	\item  \emph{significantly  more efficient} ($p< 10^{-7}$) --- fewer human  corrections  are needed  for  successful robot learning,
	\item  \emph{potentially more accessible} ($p\leq 0.09$) for  human users --- a novice human user  takes fewer trials before constantly providing   successful corrections to a robot. 
\end{itemize}
The post-game survey reports that a majority of the user-study participants indicated that a robot with the proposed method is much `smarter' than with the magnitude-correction method.

\section{Real-world Experiment}\label{section.real_experiments}
In this section, we test the proposed method in a real-world experiment using a Parrot Mambo quadrotor.

\begin{figure}[h]
	\centering
	\begin{subfigure}{.245\textwidth}
		\centering
		\includegraphics[width=\linewidth]{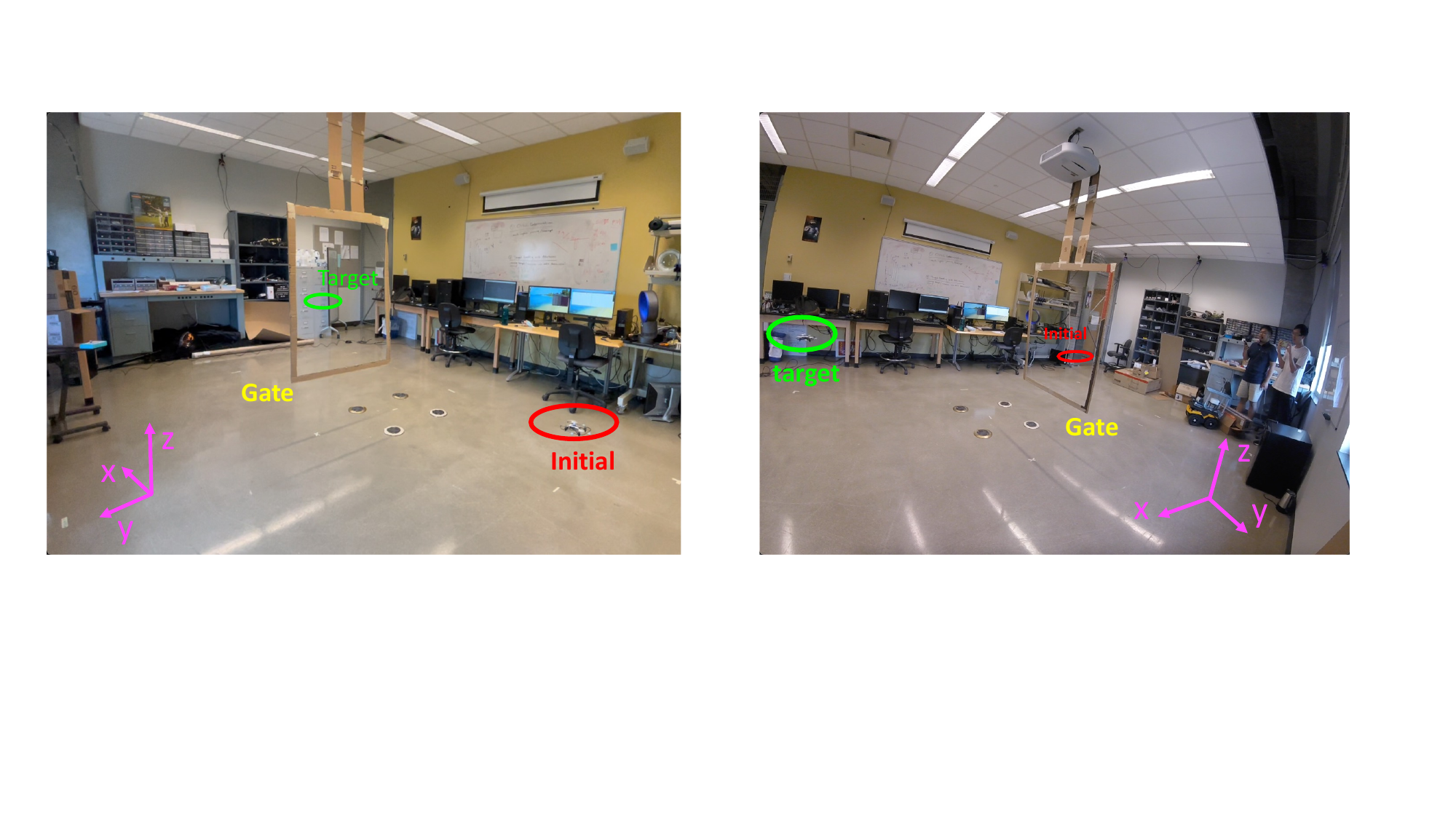}
		\caption{Main camera view}
		\label{drone.setup.1}
	\end{subfigure}
	\begin{subfigure}{.23\textwidth}
		\centering
		\includegraphics[width=\linewidth]{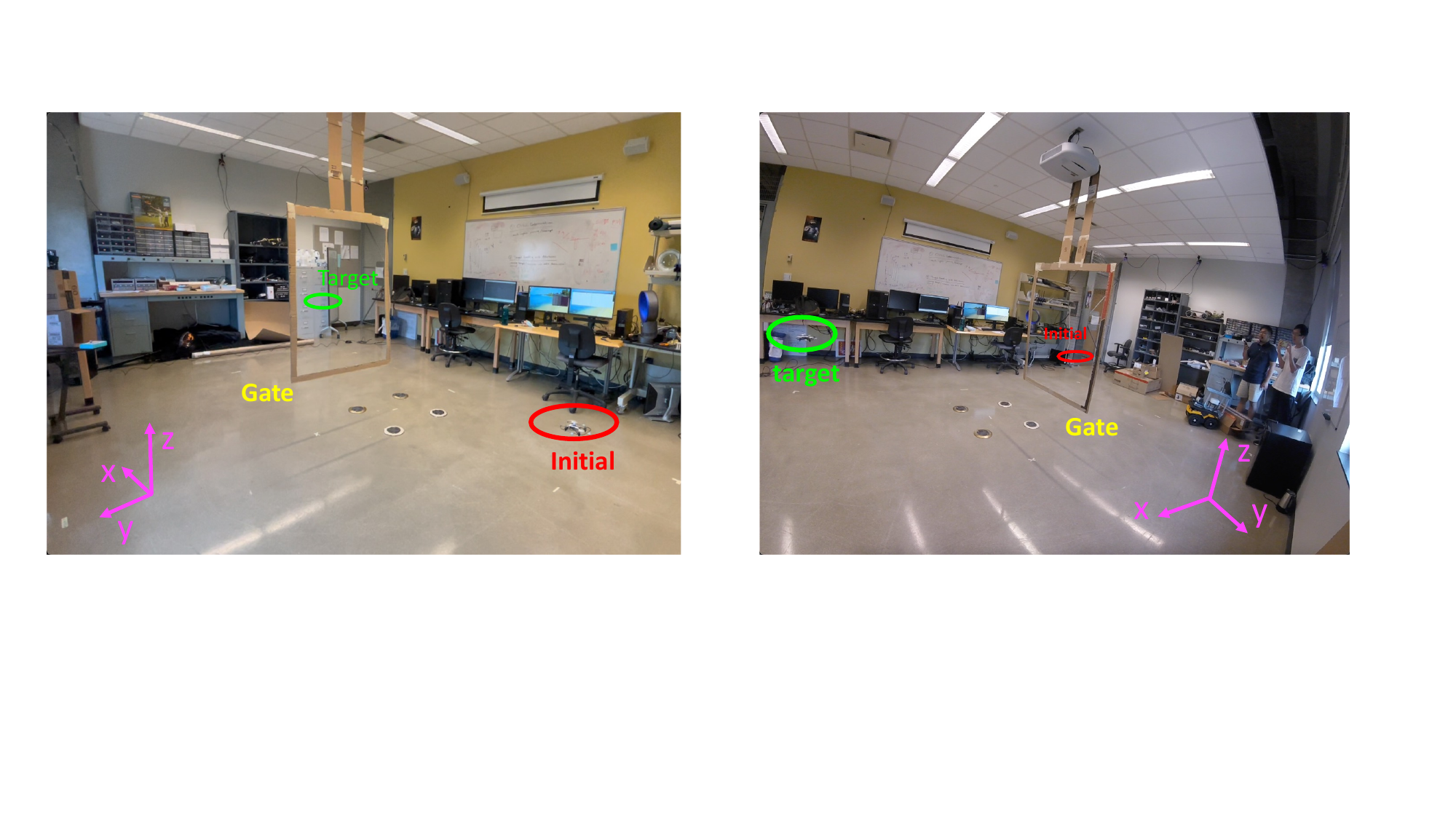}
		\caption{Back camera view}
		\label{drone.setup.2}
	\end{subfigure}
	\caption{
The setup of the real-world  experiment. A human user teaches the quadrotor to fly from an initial position (red circle), go through the gate (with yellow label), and finally, land on a target position (green circle). Without human  corrections, the quadrotor is initialized with a random cost function and fails to accomplish the task.   Note that we have no knowledge about  the gate.
	} 
	\label{realexperiment.setup}
\end{figure}

\begin{figure*}[h]
	\begin{subfigure}{.245\textwidth}
		\centering
		\includegraphics[width=\linewidth]{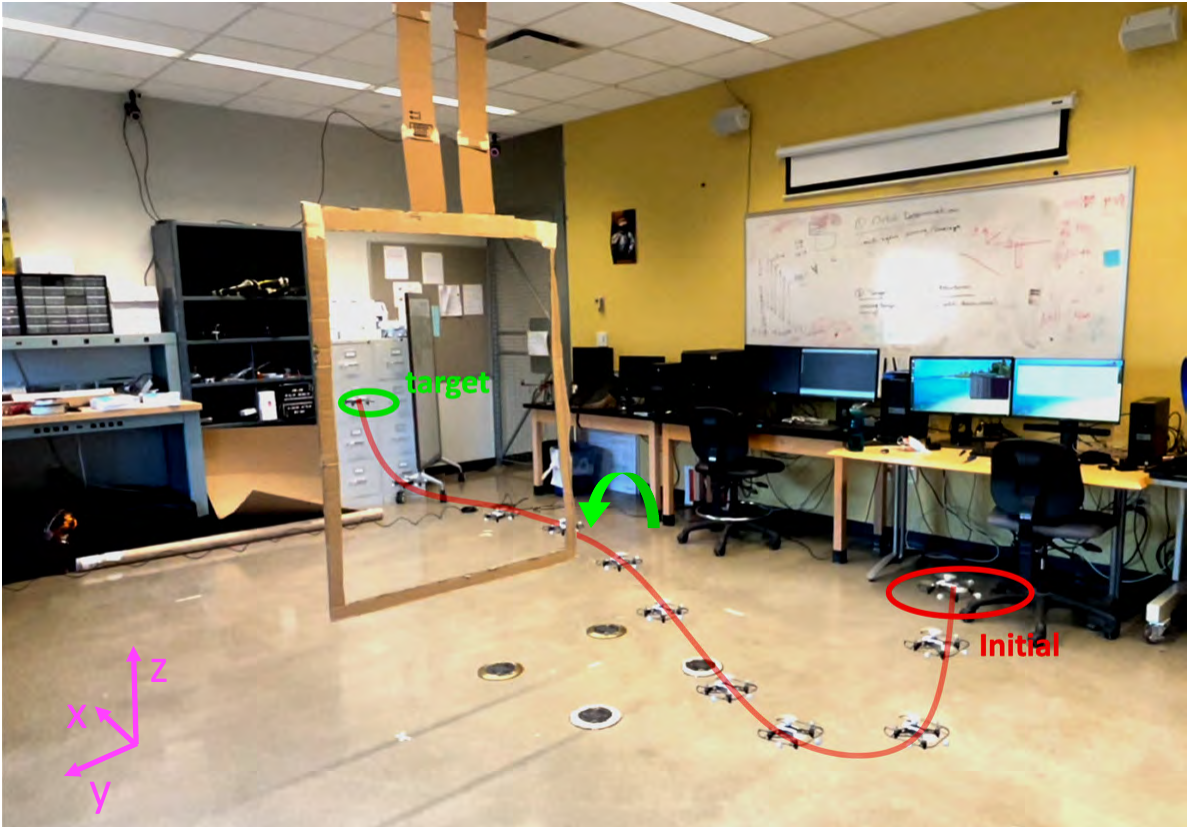}
		\caption{Iteration $k=1$}
		\label{drone.res.1}
	\end{subfigure}
	\begin{subfigure}{.245\textwidth}
		\centering
		\includegraphics[width=\linewidth]{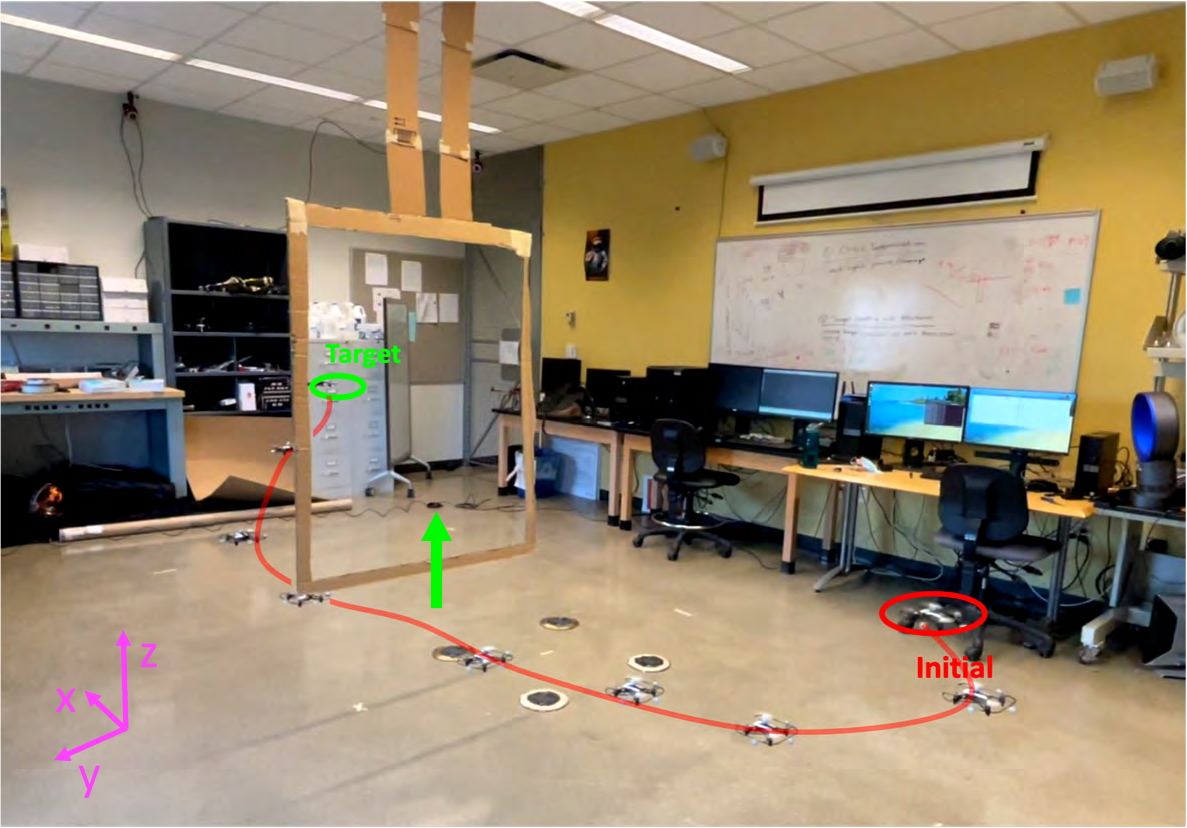}
		\caption{Iteration $k=2$}
		\label{drone.res.2}
	\end{subfigure}
	\begin{subfigure}{.245\textwidth}
		\centering
		\includegraphics[width=\linewidth]{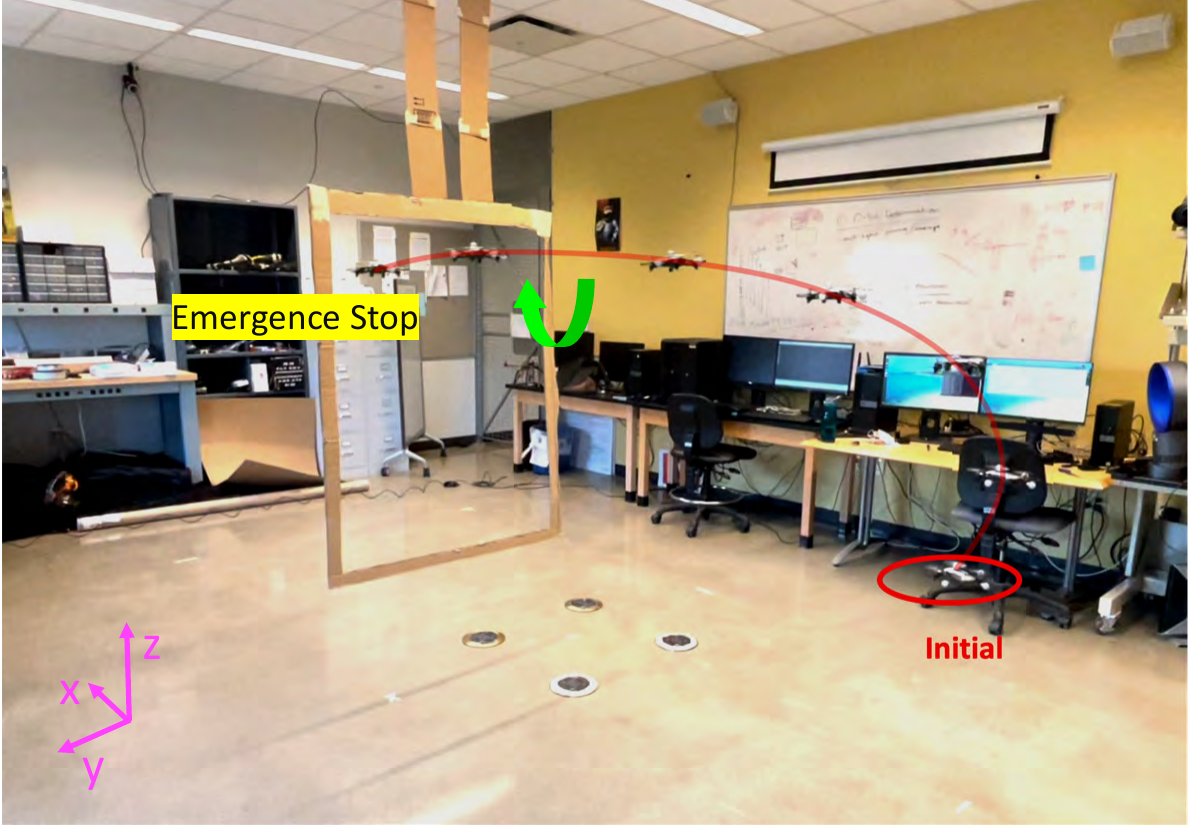}
		\caption{Iteration $k=3$}
		\label{drone.res.3}
	\end{subfigure}
	\begin{subfigure}{.245\textwidth}
		\centering
		\includegraphics[width=\linewidth]{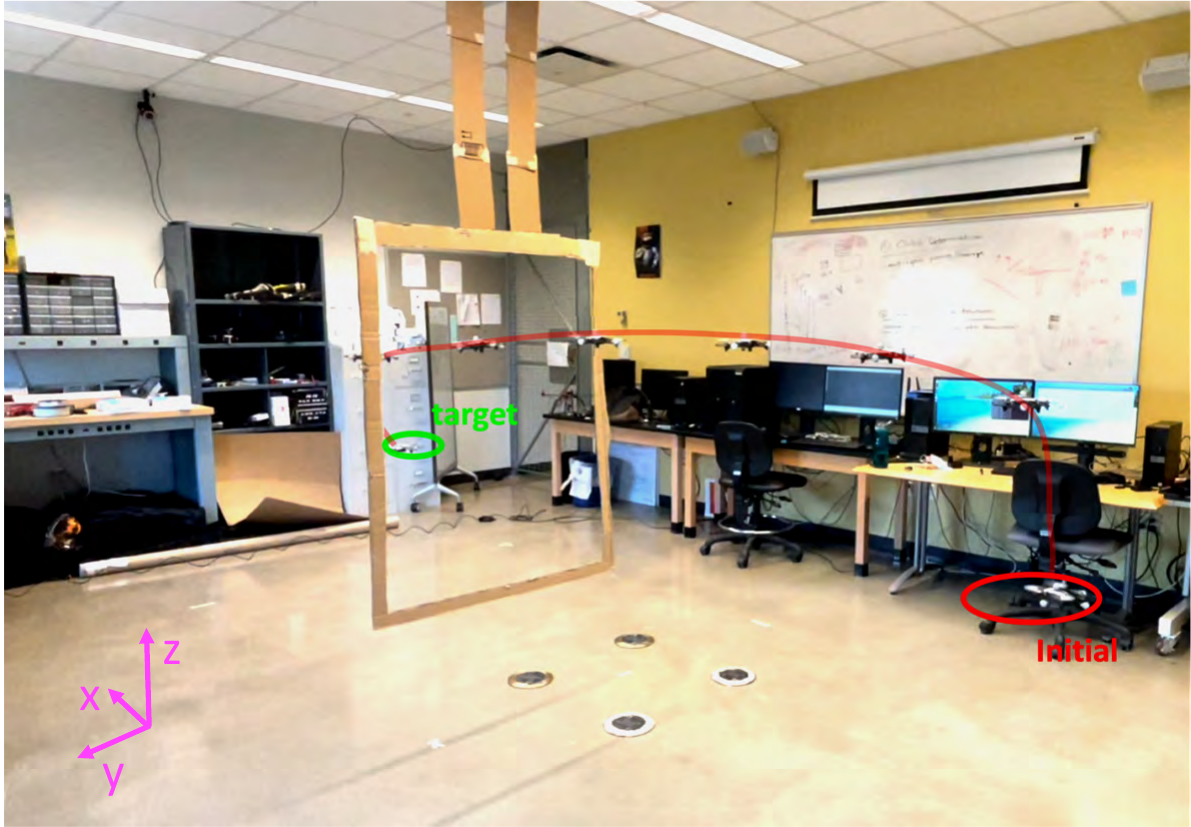}
		\caption{Iteration $k=4$}
		\label{drone.res.4}
	\end{subfigure}
	\caption{Results of a real-world quadrotor  learning  from  human  directional corrections. Here, the quadrotor  trajectory is marked in red lines. The initial position (red circle) and target position (green circle)  are also marked. In (a), during the quadrotor flying, the human user applied a torque correction in the opposite of the x-axis,  shown by the green arrow. In (b), during the quadrotor flying, the human user applied a upward thrust correction (in green arrow) in z-axis. In (c), the quadrotor detected a potential collision with the left pillar of the gate and thus triggered an emergence stop. But before the emergence stop, the human user had already applied a positive x-axis (in green arrow) torque  to correct the quadrotor. In (d), the quadrotor successfully learned a  cost function  to fly from the initial position, go through the gate, and finally reach the target position. 
	} 
	\label{drone.res}
\end{figure*}

\subsection{Experiment Setup and  Procedure}

Our real-world experiment is a Parrot Mambo quadrotor task shown in Fig. \ref{realexperiment.setup}. The goal of the experiment is to let a human user teach the quadrotor to accomplish the following task: starting from the initial position (red circle), flying through the gate (labeled in yellow), and landing on the target position (green circle). Without human correction, the quadrotor with a random initial cost function fails to accomplish the task.  The procedure of learning from  human  directional corrections is as follows: 
\begin{itemize}
	\item First, perform the quadrotor motion planning in a remote computer by solving a trajectory optimization  with the current  cost function;
	\item Second, the quadrotor executes the plan (tracking control), at the same time a  human user watches the quadrotor's motion and applies directional corrections via the keyboard of the remote computer. 
	\item Third,  update the quadrotor's cost function in the remote computer according to Algorithm \ref{algorithm1}; then repeat the above steps until the success of the task.
\end{itemize}

In the above procedure, the interface for a human user to apply directional corrections is still a  keyboard, following  Table \ref{table_keycustomize_uav}.  The communication frequency between the quadrotor and the remote computer is around 10Hz. Note that because the cost function in some learning iterations may lead the quadrotor to collide with the gate during execution,  we have included a collision  stop mechanism:  if the quadrotor detects that it is too close to the gate, an emergence stop is triggered. In the iteration where an emergence stop is triggered,   human corrections, if applied before an emergence stop,  can still  count  for updating the cost function for the next iteration.  We will discuss  the damage protection of robot  learning  in details  in Appendix \ref{section.discussion.damage}.

We have invited a  human user (not the co-authors themselves), who is a novice to our work, to perform the above experiment. The human user first had a successful warm-up training in our previous quadrotor game in order to get familiar with the interface (Table \ref{table_keycustomize_uav}), and then successfully conducted this real-world quadrotor experiment in just one run.

\subsection{Results and Analysis}

The results in all learning iterations are in Fig. \ref{drone.res}. At each iteration, the quadrotor's trajectory is highlighted in the red lines, and the human  directional corrections are labeled in the green arrows. The initial and target positions are also marked.  We provide the following analysis for  the results in Fig. \ref{drone.res}. 

\smallskip

In the first iteration in Fig. \ref{drone.res.1}, since the quadrotor was initialized with a random cost function, the planned and executed  trajectory (labeled in the red line) are off the gate---the quadrotor missed the gate and flew from its right side. Thus, when observing the quadrotor's motion, the human user applied a directional toque to correct the quadrotor, which is in the direction of the negative x-axis, as shown by the green arrow in Fig. \ref{drone.res.1}.  After receiving the correction, the quadrotor updated its cost function and planned a new trajectory for the next iteration in Fig. \ref{drone.res.2}. In the second iteration in  Fig. \ref{drone.res.2},  while the quadrotor was executing the new motion, the human user observed that its motion was too below the gate and thus applied a directional force in the positive z-axis. Using this directional correction, the quadrotor updated its cost function for the third iteration in Fig. \ref{drone.res.3}. 

In the third iteration in Fig. \ref{drone.res.3}, while executing the planned motion, the quadrotor detected a potential collision with the left pillar of the gate and thus immediately triggered an emergence stop. However, because  humans are able to predict the  potential collision, the human  user had already applied a directional torque  in the positive x-axis (the green arrow) \emph{before the emergence stop}. Thus, although the quadrotor had stopped in emergence, the  directional correction still counted in the update of the cost function. In the fourth iteration in Fig. \ref{drone.res.4},  the quadrotor successfully flew through the gate and reached the target position.

\smallskip
Thus, it took only three human directional corrections for the quadrotor to learn a cost function to accomplish the task in an unknown environment.
Based on the above results and analysis, we can conclude that the proposed method is effective and efficient for learning an objective function for desired robot motion from human directional corrections.

\section{Conclusion}
This paper has proposed a new method to enable a robot to learn an objective function from human directional corrections. A human directional correction can be any input change to the robot as long as it is in a direction that improves the robot's current motion relative to an implicit objective function.   The proposed learning method only uses the direction of a correction to update the estimate of the objective function. The learning process is based on the cutting plane method and  has straightforward geometric interpretations.  We have established the theoretical results to show the convergence of the estimate of the objective function  towards the true one induced by all human corrections. 
The proposed approach has been validated by numerical examples,  a user study on two human-robot games, and a real-world quadrotor experiment. The results confirm the convergence and efficacy of the proposed method and show that the proposed method is significantly more effective (higher success rate), efficient/effortless (less correction needed), and potentially more accessible (fewer early wasted trials) than the state-of-the-art human-robot learning methods.

\section*{Acknowledgments}
We thank Tianyu Zhou for his effort in helping organize and collect the data in the user study.

\appendices
\section{Proof of Lemma \ref{lemma3}}\label{appendix_2}

First, we prove (\ref{eq_prop1}). From Step 2 in the main algorithm, we know that the robot's current trajectory $\boldsymbol{\xi}_{\boldsymbol{{\theta}}_k}=\{\boldsymbol{x}^{\boldsymbol{{\theta}}_k}_{0:T\text{+}1},\boldsymbol{u}_{0:T}^{\boldsymbol{{\theta}}_k}\}$ is a result of minimizing the cost function $J(\boldsymbol{\theta}_{k})$ subject to dynamics constraint in   (\ref{equ_dynamics}). This means that $\boldsymbol{\xi}_{\boldsymbol{{\theta}}_k}$ must satisfy the optimality condition (i.e., first order condition). Following a similar derivation from  (\ref{equ_linear_dynamics_mat}) to  (\ref{equ_gradient}) in the proof of Lemma \ref{lemma_correctionEquivalent}, we can obtain the optimality condition
\begin{multline}\label{equ_appendix_optimality}
\boldsymbol{0}={\nabla J(\boldsymbol{u}_{0:T}^{\boldsymbol{{\theta}}_k}, \boldsymbol{\theta}_k)}
=\boldsymbol{H}_1(\boldsymbol{x}^{\boldsymbol{{\theta}}_k}_{0:T\text{+}1},\boldsymbol{u}_{0:T}^{\boldsymbol{{\theta}}_k})\boldsymbol{\theta}_k+\\\boldsymbol{H}_2(\boldsymbol{x}^{\boldsymbol{{\theta}}_k}_{0:T\text{+}1},\boldsymbol{u}_{0:T}^{\boldsymbol{{\theta}}_k})\nabla h(\boldsymbol{x}_{T+1}^{\boldsymbol{\theta}_k}).
\end{multline}
It is worth mentioning that the above optimality condition (\ref{equ_appendix_optimality}) is also derived in \cite{jin2018inverse}.
As a result,
\begin{align}
0&=\left\langle{\nabla J(\boldsymbol{u}_{0:T}^{\boldsymbol{{\theta}}_k}, \boldsymbol{\theta}_k)}, \boldsymbol{\bar{a}}_k\right\rangle\\
&=\left\langle
\boldsymbol{H}_1(\boldsymbol{x}^{\boldsymbol{{\theta}}_k}_{0:T\text{+}1},\boldsymbol{u}_{0:T}^{\boldsymbol{{\theta}}_k})\boldsymbol{\theta}_k, \boldsymbol{\bar{a}}_k
\right\rangle\nonumber\\
&\qquad\qquad+
\left\langle
\boldsymbol{H}_2(\boldsymbol{x}^{\boldsymbol{{\theta}}_k}_{0:T\text{+}1},\boldsymbol{u}_{0:T}^{\boldsymbol{{\theta}}_k})\nabla h(\boldsymbol{x}_{T+1}^{\boldsymbol{\theta}_k}), \boldsymbol{\bar{a}}_k
\right\rangle\\
&=\left\langle \boldsymbol{h}_k,\boldsymbol{\theta}_k \right\rangle+b_k,
\end{align}
where  the third equality is due to the definition of hyperplane in (\ref{equ_hk}). This completes the proof of (\ref{eq_prop1}).

\medskip

Next, we prove (\ref{eq_prop2})  by induction. In the main algorithm, we know $\boldsymbol{\theta}^*\in \boldsymbol{\Omega}_0$ for $k=0$. Assume that  $\boldsymbol{\theta}^*\in\boldsymbol{\Omega}_{k-1}$ holds at the $(k{-}1)$-th iteration. By Step 3 in the main algorithm, we have 
\begin{equation}
\boldsymbol{\Omega}_{k}=\boldsymbol{\Omega}_{k-1}\cap\left\{\boldsymbol{\theta}\in\mathbb{R}^r\,|\, \left\langle\boldsymbol{h}_k,\boldsymbol{\theta} \right\rangle+b_k<0\right\}.
\end{equation}
In order to prove $\boldsymbol{\theta}^*\in\boldsymbol{\Omega}_{k}$ we only need to show that 
\begin{equation}\label{equ_appendix_assumption}
\left\langle\boldsymbol{h}_k,\boldsymbol{\theta}^* \right\rangle+b_k<0,
\end{equation}
which is true according to (\ref{equ_assumption2}) in Lemma \ref{lemma_correctionEquivalent}.
Thus, $\boldsymbol{\theta}^*\in\boldsymbol{\Omega}_{k}$ also holds at the $k$th iteration. Thus, we conclude that (\ref{eq_prop2}) holds. This completes the proof of Lemma \ref{lemma3}. \qed

\section{Discussion}
\subsection{Learning Non-convex Cost Functions}
In our problem formulation and method development, we have not imposed any restriction on the convexity of the cost function  (\ref{equ_objective}). However, the proposed method can sufficiently handle both convex and non-convex cost functions. But for non-convex cost functions, we have the following comments.

For a general non-convex cost function (\ref{equ_objective}), the robot desired motion can be a \emph{local}  minimum of this non-convex cost function. As long as all human corrections aim to drive the robot towards the same local minimum (i.e., the same desired motion), all theories and methods developed in the paper still apply.  In the otherwise case,   human corrections may not have a consistent goal. For example, in one iteration the human intends to drive the robot towards one local minimum, while in the other iteration, the human aims to drive the robot to a \emph{different} local minimum---a different desired motion. In those inconsistent cases, the assumption  (\ref{equ_assumption}) would be violated, thus potentially leading to the failure of the proposed method.

In sum,  if the cost function (\ref{equ_objective}) is non-convex,  the desired robot motion $\boldsymbol{\xi}_{\boldsymbol{\theta}^*}$ can be a local minimum of the non-convex  cost function $J(\boldsymbol{\theta}^*)$. The proposed method in this paper  still applies if all human corrections aim to drive the  robot towards the same local minimum  $\boldsymbol{\xi}_{\boldsymbol{\theta}^*}$.

\subsection{On-the-fly  Implementation of the Proposed Method}

In the previous experiments, each learning iteration requires a robot to start over the task. However, this is not necessary, and one can readily implement the proposed method in an on-the-fly manner. An on-the-fly implementation is given in Algorithm \ref{algorithm2}, where the changes compared to  Algorithm~\ref{algorithm1} are highlighted in red.  Specifically, in Algorithm \ref{algorithm2}, after the robot receives a human directional correction $\boldsymbol{a}_{t_k}$
at time step $t_k$, instead of starting over the task (i.e.,
returning to the very early  initial condition $\boldsymbol{x}_0$), we can
update the robot cost function immediately and plan the 
motion by starting from the robot \emph{current}  state $\boldsymbol{x}_{t_k}$ (on
which the correction $\boldsymbol{a}_{t_k}$ is made). Thus, the robot will
continue to execute the task from the latest state without
starting over. For this on-the-fly  implementation,
all theories and other properties of our method  remain
unchanged. 

\begin{algorithm2e}[th]
	\small 
	\SetKwInput{Initialization}{Initialization}
	\KwIn{Specify a termination threshold $\epsilon$ and use it to compute the maximum iteration  $K$ by (\ref{equ_maxloop}). }
	\Initialization{Initial weight search space $\boldsymbol{\Omega}_0$ in (\ref{equ_linftyball}), \textcolor{red}{and initial robot  state $\boldsymbol{x}_{t_0}$ with $t_0=0$}}
	\smallskip
	\For{$k=1,2,\cdots, K$}{

		\smallskip
		Choose a  weight vector guess $\boldsymbol{\theta}_k\in \boldsymbol{\Omega}_{k-1}$ by Lemma 3\;
		
		\smallskip
		\textcolor{red}{Starting from the current robot state $\boldsymbol{x}_{t_{k-1}}$,}
		plan a new robot motion $\boldsymbol{\xi}_{\boldsymbol{\theta}_k}$ by solving a trajectory optimization  with the cost function  $J(\boldsymbol{\theta}_k)$  and  dynamics   (\ref{equ_dynamics})\;
		
		\smallskip
		\textcolor{red}{Starting from $\boldsymbol{x}_{t_{k-1}}$,} the robot  executes the planned motion trajectory $\boldsymbol{\xi}_{\boldsymbol{\theta}_k}$ while receving  human directional corrections $\boldsymbol{\bar{a}}_{t_k}$\;
		
		\smallskip
		Compute the   matrices $\boldsymbol{H}_1(\boldsymbol{x}^{\boldsymbol{\theta}_k}_{0:T\text{+}1},\boldsymbol{u}^{\boldsymbol{\theta}_k}_{0:T})$ and $\boldsymbol{H}_2(\boldsymbol{x}^{\boldsymbol{\theta}_k}_{0:T\text{+}1},\boldsymbol{u}^{\boldsymbol{\theta}_k}_{0:T})$, and then compute  the hyperplane and half space $\left\langle\boldsymbol{h}_k,\boldsymbol{\theta} \right\rangle+b_k<0 $ by (\ref{equ_assumption2})-(\ref{equ_hk})\;

		\smallskip
		Update the weight search space by $	\boldsymbol{\Omega}_{k}=\boldsymbol{\Omega}_{k-1}\cap\left\{\boldsymbol{\theta}\in\mathbb{R}^r \,|\, \left\langle\boldsymbol{h}_k,\boldsymbol{\theta} \right\rangle+b_k<0 \right\}$ by (\ref{equ_updateOmega})\;
	}
	\smallskip
	\textbf{Output:} $\boldsymbol{\theta}_K$.
	\caption{Learning from  directional corrections (an on-the-fly implementation version)} \label{algorithm2}
\end{algorithm2e}

\subsection{Damage Prevention During Learning}\label{section.discussion.damage}

Since an intermediate cost function during learning iterations can lead a robot to collision or damage, we briefly discuss how to avoid such scenarios. 

One effective way to prevent damage and collision during robot learning is to add emergence stop mechanisms in the robot's lower-level control, as we have adopted for our real-world quadrotor experiment in Section \ref{section.real_experiments}. Fortunately, as long as a human user applies directional corrections before the emergence stop, those corrections still count for the update of the cost function. Based on the previous user study and real-world experiment, we observe that humans usually have a good ability to predict potential collisions and are able to make preemptive corrections \emph{before the robot triggers emergence stop}. Thus, it is a usual case that a robot has already received human directional corrections before triggering an emergence stop, and hence, the update of the cost function continues. This has been shown in Fig. \ref{drone.res.3} in our real-world experiment in Section \ref{section.real_experiments}.

\subsection{Choice of  $\boldsymbol{\Omega}_0$ and $\epsilon$ in Algorithm \ref{algorithm1}}\label{section.discussion.parameters}

The initial weight search space $\boldsymbol{\Omega}_0$  in (\ref{equ_linftyball}) should include any possible true   $\boldsymbol{\theta}^*$, i.e., $\boldsymbol{\theta}^*\in\boldsymbol{\Omega}_0$. 
Although this is hard to verify as $\boldsymbol{\theta}^*$ is usually unknown in practice, a good practice is  to choose   as large  $\boldsymbol{\Omega}_0$  as possible (note that $\boldsymbol{\Omega}_0$ also needs to ensure the existence of  solution $
\boldsymbol{\xi}_{\boldsymbol{\theta}}$; this is the reason why in our experiments, the range of  weights of the second-order terms in a polynomial is always positive).  Alternatively, one can also use a trial-and-error procedure to set the initial $\boldsymbol{\Omega}_0$: first, try a small $\boldsymbol{\Omega}_0$; under this small $\boldsymbol{\Omega}_0$ if the final (converged) robot motion is not desired, increase the size of  $\boldsymbol{\Omega}_0$  and repeat this process until satisfied. Our previous experiment experience has shown that choosing a good $\boldsymbol{\Omega}_0$ is not difficult. This is because even a small size of $\boldsymbol{\Omega}_0$ has enough expressiveness power to represent a good variety of robot motions. This empiricism is consistent with the recent results   of learning implicit models \cite{amos2017optnet,jin2021safePDP, jin2021learning} in the machine learning community, where it shows that simple objective or energy functions can have enough representation power.

In Algorithm \ref{algorithm1},  $\epsilon$  determines the accuracy of weight vector convergence, as stated in Theorem \ref{theorem_1}. The choice of $\epsilon$  depends on specific accuracy requirements, and a large $\epsilon$ will terminate the algorithm early.  In fact,   it is always easy to set $\epsilon$ using a reasonably small value,  because our previous experiment, such as in Fig. \ref{fig_compare},     has shown that the convergence of the robot motion trajectory is much faster than the convergence of the objective function itself. This means that it is a usual case that one observes a good convergence of robot motion trajectory before the convergence of cost function reaches termination condition. This empiricism has also been reported in the literature on inverse optimal control such as \cite{jin2020learning}.

\subsection{Free from  Noisy Robot Execution}

Finally, we would point out that in implementation,  the noise in robot states and controls cannot directly enter into the proposed algorithm. Specifically, as stated in Algorithm \ref{algorithm1} and   our real-world experiment, the proposed algorithm decouples   motion planning (i.e., plan $\boldsymbol{\xi}_{\boldsymbol{\theta}_k}$ by solving a trajectory optimization) and the robot  execution of $\boldsymbol{\xi}_{\boldsymbol{\theta}_k}$. The noise of the robot states and inputs can only enter into the robot execution stage, not the motion planning stage. While a human  observes the robot's  noisy execution of $\boldsymbol{\xi}_{\boldsymbol{\theta}_k}$ and gives  directional corrections $\boldsymbol{\bar{a}}_k$, the proposed algorithm updates the cost function using the \emph{noise-free}   $\boldsymbol{\xi}_{\boldsymbol{\theta}_k}$ \emph{taken from the motion planner} instead of from the robot's  execution.  Thus, during the entire learning process,   the noise in the robot's actual execution will never enter into the algorithm, thus will not influence the learning results.

\section{Other Choices of $ \boldsymbol{\theta}_k$ }\label{appendix_centerchoice}

For the weight  search space $\boldsymbol{\Omega}_{k-1} \subset \mathbb{R}^r$, we  choose $ \boldsymbol{\theta}_{k}$ as the center of Maximum Volume Ellipsoid (MVE) inscribe $\boldsymbol{\Omega}_{k-1}$. Other choices for $ \boldsymbol{\theta}_k$ could be  the center of gravity \cite{newman1965location}, the Chebyshev center \cite{elzinga1975central}, the analytic center \cite{goffin1993computation}, etc.

\subsubsection{Center of Gravity} The  center of gravity for a polytope $\boldsymbol{\Omega}$  is defined as 
\begin{equation}\label{equ_centerofgravity}
\boldsymbol{\theta}_{\text{cg}}=\frac{\int_{\boldsymbol{\Omega}}^{}\boldsymbol{\theta}d\boldsymbol{\theta}}{\int_{\boldsymbol{\Omega}}^{}d\boldsymbol{\theta}}.
\end{equation}
Following \cite{grunbaum1960partitions}, the volume reduction rate using the center of gravity is 
\begin{equation}
\frac{\vol(\boldsymbol\Omega_{k+1}) }{\vol(\boldsymbol\Omega_{k})}\leq 1-\frac{1}{e}\approx 0.63,
\end{equation}
which may lead to faster convergence than the rate $(1-1/r)$ using the center of MVE. However, for a polytope described by a set of linear inequalities, it is more   expensive  to compute  the center of gravity in (\ref{equ_centerofgravity})  than to compute the center of MVE  \cite{boyd2007localization}.

\subsubsection{Chebyshev Center} The Chebyshev center is defined as  the center of the largest Euclidean ball that lies inside the polytope  $\boldsymbol{\Omega}$.  Chebyshev center for a polytope can be efficiently computed   by solving a linear program \cite{boyd2004convex}. But  Chebyshev center is not  affinely invariant to the transformations of coordinates \cite{boyd2007localization}. Thus, a linear mapping of features may lead to an inconsistent weight vector estimation.

\subsubsection{Analytic Center} Given a polytope $\boldsymbol{\Omega}=\{\boldsymbol{\theta} \,|\, \left\langle\boldsymbol{h}_i,\boldsymbol{\theta} \right\rangle+b_i<0, \,\, i=1,\cdots, m \}$, the analytic center is defined as 
\begin{equation}\label{equ_analytic_center}
\boldsymbol{\theta}_{\text{ac}}=\min_{  \boldsymbol{\theta}}-\sum\nolimits_{i=1}^{m}\log(b_i-\boldsymbol{h}_i\tran\boldsymbol{\theta}).
\end{equation}
As shown by  \cite{atkinson1995cutting,nesterov1995cutting},  the analytic center achieves a good trade-off in terms of simplicity and practical performance. However, it might not be friendly for analyzing the volume reduction compared to using the center of MVE.

\bibliographystyle{IEEEtran}
\bibliography{trobib}

\vspace{0pt}

\begin{IEEEbiography}[{\includegraphics[width=1.2in,height=1.2in,keepaspectratio]{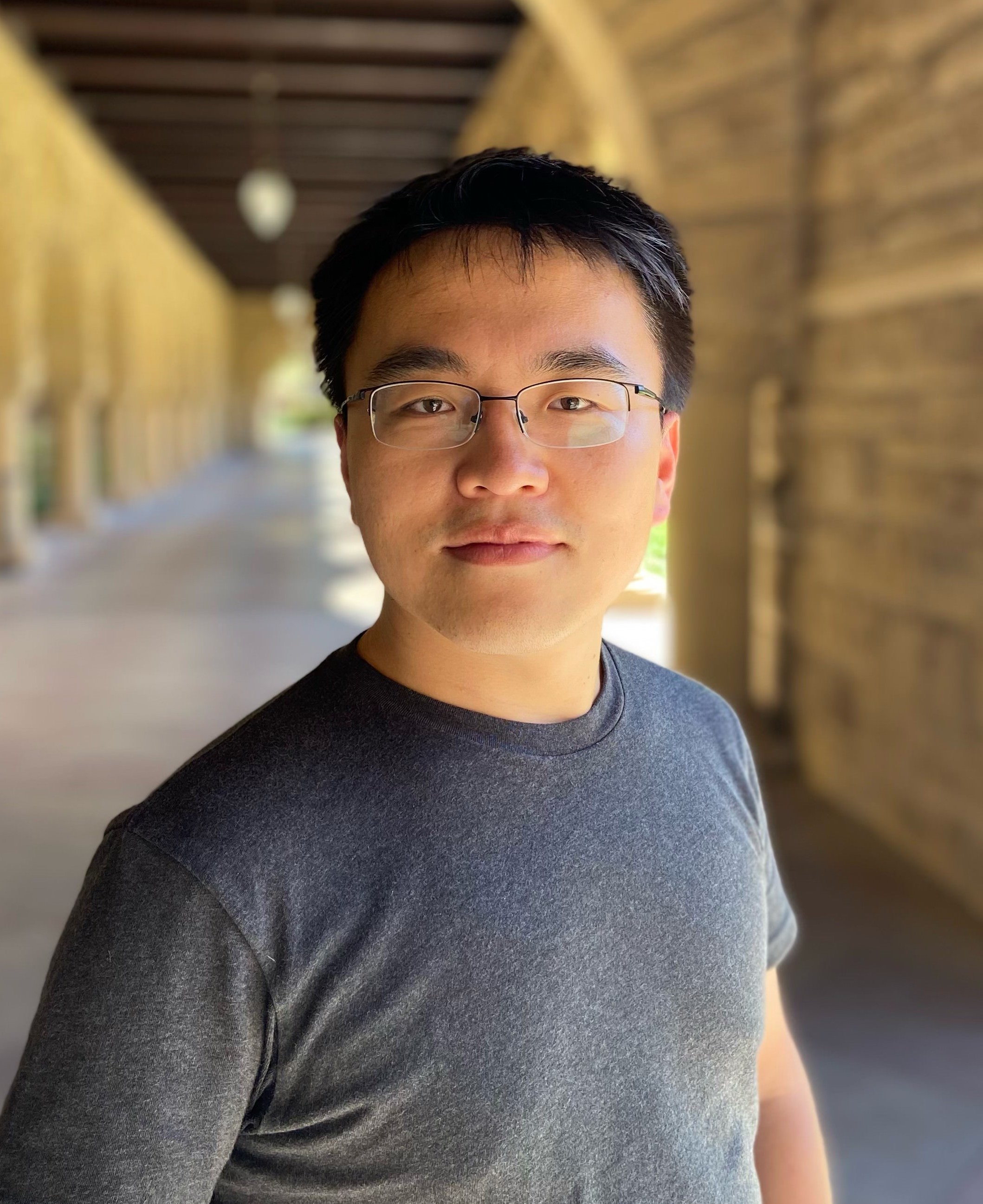}}]
{Wanxin Jin}  is a postdoctoral researcher in the GRASP Laboratory at the University of Pennsylvania. He received the Ph.D. degree in Autonomy and Control at  Purdue University in 2021. From 2016 to 2017, he was a Research Assistant at Technical University Munich, Germany.  Wanxin's research interests include robotics, control, machine learning, and optimization, with emphasis on learning,  planning, and control of robots as they interact with the world and humans.
\end{IEEEbiography}

\vspace{8pt}

\begin{IEEEbiography}[{\includegraphics[width=1.2in,height=1.25in,clip,keepaspectratio]{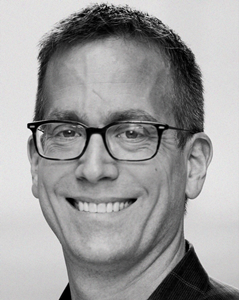}}]
	{Todd D. Murphey}
received his B.S. degree in mathematics from the University of Arizona and the Ph.D. degree in Control and Dynamical Systems from the California Institute of Technology. He is a Professor of Mechanical Engineering at Northwestern University. His laboratory is part of the Center for Robotics and Biosystems, and his research interests include robotics, control, machine learning in physical systems, and computational neuroscience.
\end{IEEEbiography}

\vspace{8pt}

\begin{IEEEbiography}[{\includegraphics[width=1.2in,height=1.25in,clip,keepaspectratio]{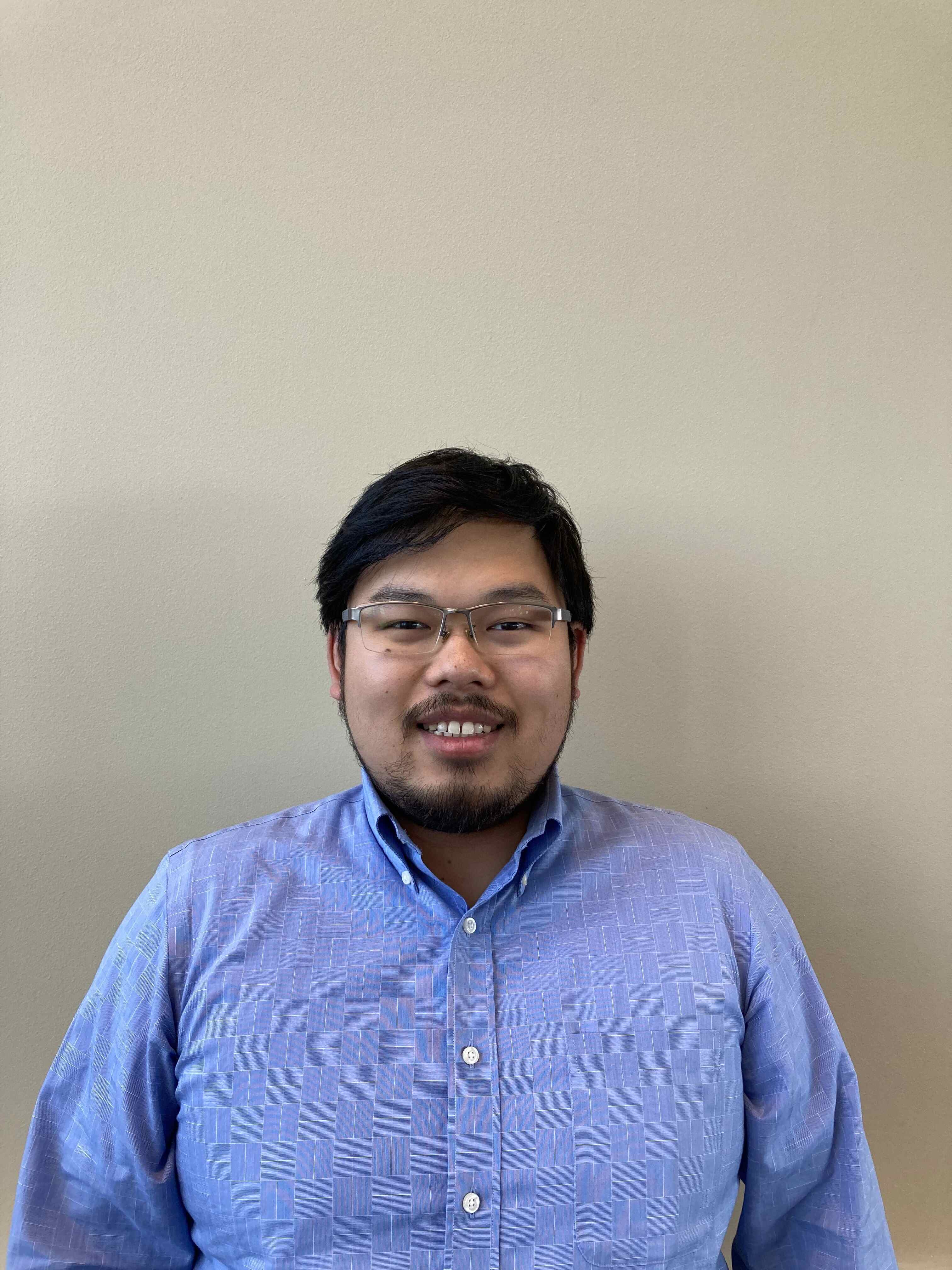}}]
	{Zehui Lu}
	received the M.S. degree in mechanical engineering from the University of Michigan, Ann Arbor, MI, USA, in 2019. Between January 2020 and July 2020, he was a research engineer at UM Ford Center for Autonomous Vehicles (FCAV) and ROAHM LAB, all with the University of Michigan. Currently, he is working toward the Ph.D. degree in aeronautics and astronautics engineering at Purdue University, West Lafayette, IN, USA.
	His current research interests include autonomy, control, and optimization, and their applications in robotics and multi-agent systems.
\end{IEEEbiography}

\vspace{8pt}

\begin{IEEEbiography}[{\includegraphics[width=1.0in,height=1.25in,clip,keepaspectratio]{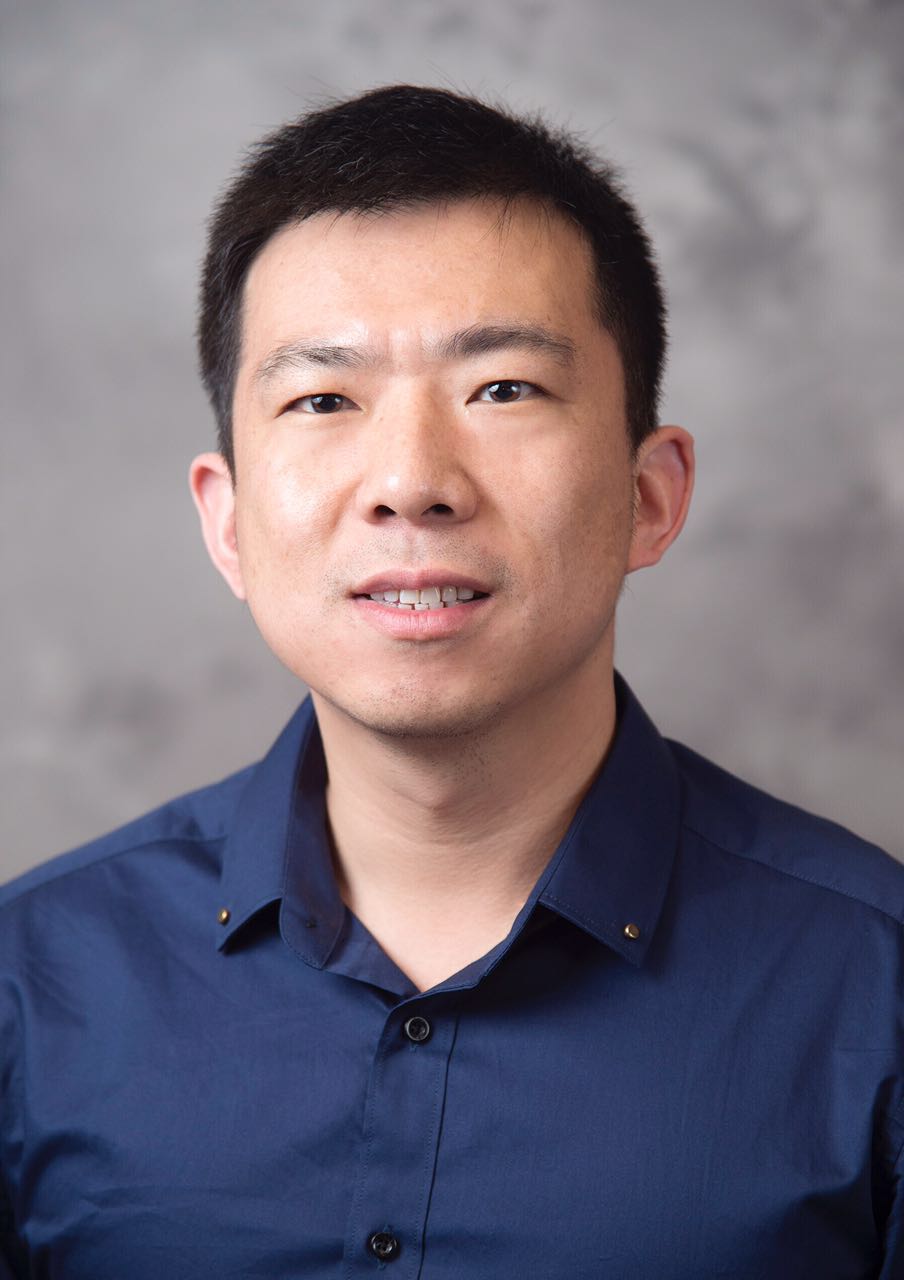}}]
	{Shaoshuai Mou}
is an Associate Professor in the School of Aeronautics and Astronautics at Purdue University. Before joining Purdue, he received a Ph.D. in Electrical Engineering at Yale University in 2014 and worked as a postdoc researcher at MIT for a year after that. His research interests include multi-agent system, control and learning, robotics control, human-robot teaming, resilient autonomy, and also experimental research involving autonomous air and ground vehicles. Dr. Mou co-directs Purdue University’s Center for Innovation in Control, Optimization and Networks (ICON), which aims to integrate classical theories in control/optimization/networks with recent advances
in machine learning/AI/data science to address fundamental challenges in autonomous and connected systems. 
\end{IEEEbiography}


\vfill

\end{document}